\newcommand\blfootnote[1]{%
  \begingroup
  \renewcommand\thefootnote{}\footnote{#1}%
  \addtocounter{footnote}{-1}%
  \endgroup
}
\titleformat{\subsubsection}{\bfseries\normalsize}{\thesubsubsection}{1em}{}
\setlist[itemize]{itemsep=0pt, topsep=0pt, partopsep=0pt}
\def\Ev{\mathbb{E}}
\def\C{\mathbb{C}}
\def\supp{supp}
\def\g{\mathfrak{g}}
\def\sg{\mathfrak{g}_\Phi}
\def\ind{\mathbbm{1}}
\def\kr{\mathcal{K}}
\DeclareMathOperator*{\argmin}{arg\,min}
\def\one{\mathbf{1}}
\title{How to safely discard features based on aggregate SHAP values}
\newtheorem{appxlemma}[theorem]{Lemma}
\newtheorem{appdfn}[theorem]{Definition}
\newtheorem{apptheorem}[theorem]{Theorem}
\begin{document}

\blfootnote{*Equal contribution. Preprint.}

\maketitle

\begin{abstract}%

SHAP is one of the most popular \textit{local} feature-attribution methods. Given a function $f$ and an input $x \in \R^d$, it quantifies each feature's contribution to $f(x)$. Recently, SHAP has been increasingly used for \textit{global} insights: practitioners average the absolute SHAP values over many data points to compute global feature importance scores, which are then used to discard ``unimportant'' features.
In this work, we investigate the soundness of this practice by asking whether small aggregate SHAP values necessarily imply that the corresponding feature does not affect the function. Unfortunately, the answer is no: even if the $i$-th SHAP value equals $0$ on the entire data support, there exist functions that clearly depend on Feature $i$. The issue is that computing SHAP values involves evaluating $f$ on points outside of the data support, where $f$ can be strategically designed to mask its dependence on Feature $i$.
To address this, we propose to aggregate SHAP values over the \textit{extended} support, which is the product of the marginals of the underlying distribution. With this modification, we show that a small aggregate SHAP value implies that we can safely discard the corresponding feature.
We then extend our results to KernelSHAP, the most popular method to approximate SHAP values in practice. We show that if KernelSHAP is computed over the extended distribution, a small aggregate KernelSHAP value justifies feature removal. This result holds independently of whether KernelSHAP accurately approximates true SHAP values, making it one of the first theoretical results to characterize the KernelSHAP algorithm itself.
Our findings have both theoretical and practical implications. We introduce the “Shapley Lie algebra”, which offers algebraic insights that may enable a deeper investigation of SHAP and we show that a simple preprocessing step -- randomly permuting each column of the data matrix -- enables safely discarding features based on aggregate SHAP and KernelSHAP values.

\end{abstract}

\begin{keywords}%
  Interpretability, Explainable machine learning, XAI, Shapley values, feature selection, Lie Theory%
\end{keywords}

\section{Introduction}

\begin{figure}[tb]
    \centering
    \includegraphics[width=\textwidth]{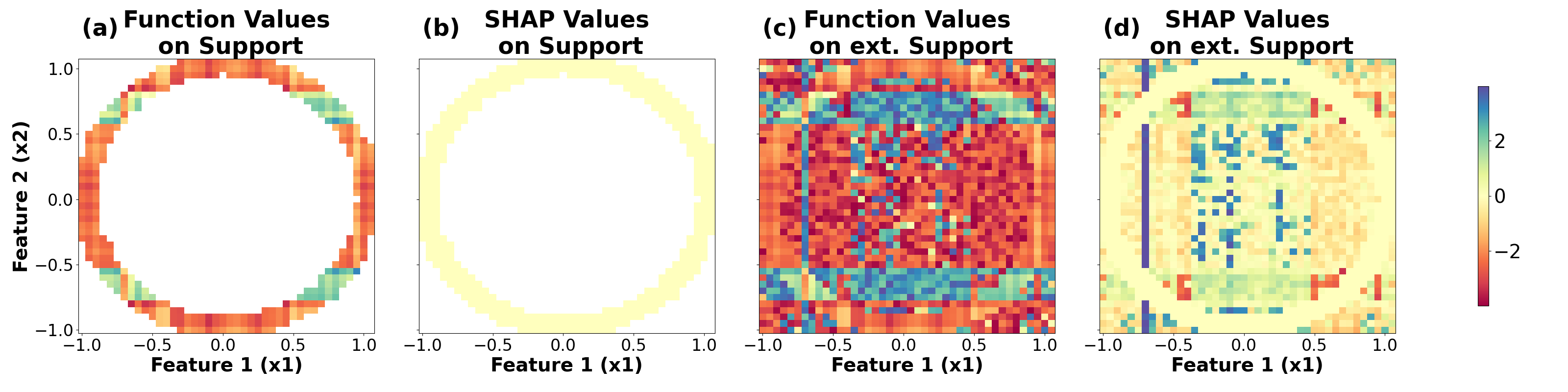}  
    \caption{\textbf{Example of a function where the aggregate SHAP value of Feature $1$ is $0$, yet the function depends on this feature.} \textbf{(a):} Function $f:\R^2 \to \R$, supported on a ring with the color depicting the function value. The function clearly depends on both Features $1$ and $2$. \textbf{(b):} Point-wise SHAP values $\phi_1(\mu, f,x)$ of Feature $1$ are constantly $0$ on the support. This provides the counter-example we have been looking for. \textbf{(c) and (d):} Function and SHAP values on the extended support. Here the SHAP values are not constantly $0$ any more, illustrating the direction towards resolving the issue of the counter-example. 
    }
    % thus the aggr. on the support are exactly 0. on the Observing that the SHAP values of Feature $x$ are uniformly zero on the support (1st plot on the left) does not imply that the function is invariant of Feature $x$ both on the support (2nd plot) and on the extended support (4th plot). Including the extended support shows that the SHAP are not uniformly zero there (3rd plot).
    \label{fig:counterexample} 
\end{figure}

Due to the widespread adoption of large, opaque models, explainability has become an essential topic in machine learning. One particularly prominent application domain is \textit{scientific discovery}, where practitioners train a model not only for accurate predictions but also to gain insight into their specific problem and its underlying mechanisms. In such cases, the true value of the machine learning model lies in the understanding it provides, making interpretability techniques critical. 

In science, SHAP~\citep{LundbergLee2017} is by far the most widely used method for generating explanations. It is a local feature-attribution method that is applied across various fields including biology~\citep{Appl_SHAP:Berdugo2022}, geoscience~\citep{Appl_SHAP:Jiang2024}, medicine~\citep{Appl_SHAP:Martinez-Ruiz2023}, psychiatry~\citep{Appl_SHAP:Giuntella2021}, physics~\citep{Appl_SHAP:Li2021}, and chemistry~\citep{wojtuch2021can}. SHAP operates as follows: for an input distribution $\mu$ over $\R^d$, a model $f: \R^d \to \R$, and a fixed input point $x \in \R^d$, it outputs $d$ values, $\phi_1(\mu, f, x), \dots, \phi_d(\mu, f, x)$ that quantify the ``impact" that each feature had in predicting $f(x)$. There are a variety of ways these values can be defined and implemented. In this work, we exclusively focus on \textit{interventional SHAP}, which is the more widely used definition, and on \textit{KernelSHAP}~\citep{LundbergLee2017}, which is the most popular algorithm for approximating it. 

When it was invented, SHAP was clearly designed towards \textit{local} explanations, which apply to a specific input point. However, especially in scientific contexts, SHAP has recently become 
popular for providing \textit{global} feature importance, for example in \citet{Appl_MAS:Greenwood2024}, \citet{Appl_MAS:SharmaTimilsina2024}, \cite{Appl_MAS:Bernard2023}, \citet{Appl_MAS:Delavaux2023}, \citet{Appl_MAS:Chen2022}, \citet{Appl_MAS:Ekanayake2022}, \citet{Appl_MAS:Qiu2022}, \citet{Appl_MAS:Rane2022}, \citet{Appl_MAS:Wang2022} or \citet{Appl_MAS:Yang2022}, see also Appendix~\ref{app:use_cases}. Practitioners average the \textit{absolute} SHAP values 
$|\phi_i(\mu, f, x)|$ over many data points drawn from the input distribution $\mu$ to obtain $d$ \textit{aggregate SHAP values} $\overline{\phi_1}(\mu, f), \dots, \overline{\phi_d}(\mu, f)$. Features are then typically sorted, selected, and globally interpreted based on these aggregate SHAP values.
Despite its popularity, this method lacks any theoretical guarantees. In this work, we address this gap by studying a fundamental property we call \textit{soundness}, which means that features with a low global importance score are not impactful on making predictions. More specifically, we seek to answer the following question:
\begin{center}
\textit{If $\overline{\phi_i}(\mu, f)$ is small, does that mean Feature $i$ is globally irrelevant for predictions made by $f$?}
\end{center}

\subsection{Our Contributions}

In Section \ref{sec:counter_example} we begin with the extreme version of this problem where $\overline{\phi_i}(\mu, f)$ precisely equals~$0$. Our question then becomes the following: if $\overline{\phi_i}(\mu, f) = 0$, can $f$ be computed without \textit{any} access to $x_i$?  
Unfortunately, the answer to this question is no. As shown in Panels (a) and (b) of Figure \ref{fig:counterexample}, there exist examples where the SHAP value $\phi_i(\mu, f, x)$ is constantly $0$ across $\supp(\mu)$ (Panel (b)) and yet $f$ clearly exhibits variation across $x_i$ within $\supp(\mu)$ (Panel (a)). The core issue is that computing $\phi_i(\mu, f, x)$ requires evaluating $f$ on points that are potentially \textit{outside} $\supp(\mu)$, and those values can be strategically chosen to cause $\phi_i(\mu, f, x)$ to equal $0$ \textit{inside} $\supp(\mu)$ (see Figure~\ref{fig:counterexample}). 

In Section \ref{sec:main_result_one}, we expand on this observation and introduce the notion of the \textit{extended support} $supp(\mu^*)$, which is the product of the supports of the marginal distributions $\mu_i$ of $\mu$ across each feature. Then, in our first result (Theorem~\ref{main_theorem}), we show that constant-zero aggregate SHAP values over the extended support $supp(\mu^*)$ \textit{is} sufficient for discarding a feature. This is illustrated in Panel~(d) of Figure \ref{fig:counterexample} where the dependence of $f$ on $x_1$ becomes apparent when looking at SHAP values over $supp(\mu^*)$. 

In Section \ref{sec:robust_theorem}, we extend our result from the extreme case where $\overline{\phi}_i(\mu, f) = 0$ to cases where this approximately holds. To incorporate the entire extended support, we propose aggregating and computing SHAP values fully using the extended distribution $\mu^*$. We then show (Theorem~\ref{theorem:robust_distribution_bound}) that doing so gives a more robust bound on the impact of removing a feature with a small aggregate SHAP value $\overline{\phi}_i(\mu^*, f)$. 

In Section \ref{section:kernel_shap} we turn our attention to the finite sample regime where SHAP values are computed with respect to $X \sim \mu^n$. This setting poses an additional challenge: there are no known results bounding how well the most popular algorithm for computing SHAP values, KernelSHAP, approximates the true SHAP values. Surprisingly, our techniques completely circumvent this by proving the first known soundness result \textit{directly} about KernelSHAP. In Theorem \ref{thm:kernel_shap}, we show that when KernelSHAP values are computed over a data sample from the extended distribution $\mu^*$, a small aggregate value implies that the feature has a small impact on the prediction. Furthermore, we observe that sampling from $\mu^*$ can be easily implemented in practice -- simply permute each feature column of a data matrix that is sampled from the original distribution $\mu$ (lines 1-3 of Algorithm \ref{alg:SHAP_extended_support}). With this simple modification, our theorem \textit{directly applies} to real-life implementations of KernelSHAP.

\begin{algorithm}[t]
\caption{Sound Aggregate KernelSHAP}\label{alg:SHAP_extended_support}
\raggedright
\textbf{Input:} $X \in \R^{n\times d}$: data matrix; $f$: function; $i\in [d]$: feature of interest
\vspace{-0.5cm}
\begin{algorithmic}[1]
\FOR{$j \in [d]$}
    \STATE \textcolor{red}{Randomly permute $j$-th column: $\Tilde{X}_j \gets \textrm{Permute}(X_j)$}
\ENDFOR
\STATE Save shuffled data matrix: $\Tilde{X} \gets (\Tilde{X}_1, \dots, \Tilde{X}_d)$
\STATE Calculate local SHAP values: $\phi_i^{(1)}, \dots, \phi_i^{(n)} \gets \textrm{KernelSHAP}(\Tilde{X}, f)$
\STATE Aggregate: $\overline{\phi_i} \gets \frac{1}{n} \sum_{k=1}^n \rvert \phi_i^{(k)} \lvert$ 
\end{algorithmic}
\end{algorithm}

We believe our work has interesting implications both in theory and in practice. From the theoretical side, our main contributions are: 
\begin{itemize}[itemsep=0pt, topsep=0pt]
\item Two theorems (Theorems \ref{main_theorem} and \ref{theorem:robust_distribution_bound}) characterizing when we can safely discard features using aggregate SHAP values. 
\item The first soundness analysis of KernelSHAP that holds \textit{independently} of how well KernelSHAP approximates the true SHAP values. 
\item A novel technical tool we call the Shapley Lie algebra (Definition \ref{defn:shapley_algebra}). This construction captures many useful algebraic properties of SHAP values which are central to proving  our main results. We believe our techniques might be useful to studying other properties of SHAP and KernelSHAP as well. 
\end{itemize}
We also note that our work is \textit{not} intended to exclusively provide novel algorithms for feature selection: there exist other approaches for doing so. Instead, \textit{this work characterizes the soundness of an approach that is already widely used in practice}. Our idea of using the extended distribution $\mu^*$ is intended to provide a theoretically justified modification of SHAP that enjoys provable soundness while (hopefully) preserving other desirable aspects of the algorithm. 

For practitioners, our work has a very clear and simple implication: to discard features based on aggregate SHAP values, it is not enough to average SHAP values over a (subset of) the original data points. Instead, one has to sample from the extended support, which luckily is very simple by randomizing features, as can be seen in the pseudo-code above (Algorithm~\ref{alg:SHAP_extended_support}). 

\subsection{Related Work}
\paragraph{SHAP values:} 
Since their introduction to the machine learning community by \citet{LundbergLee2017} SHAP values, which originate from game theory \citep{Shapley1953}, have gained increasing attention. 
See for example \citet{Lundberg2020}, \citet{Covert2020}, \citet{Frye2020} and \citet{Bordt2023}, just to state a few. Additionally to SHAP values we also look into KernelSHAP (\citealp{LundbergLee2017}, \citealp{CovertLee21})
%\citealp{Aas2021})
, which is the most widely used approximation algorithm for SHAP values.
%, which is only one of many approximation algorithms for SHAP values (\citealp{Lundberg2018}, \citealp{Chen2023}).
%
Similar to us, \citet{Slack2020} exploit the fact that interventional SHAP values are calculated using data points outside the distribution, however, with the different goal of masking an unfair algorithm as fair.
Also \citet{Merrick2020} and \citet{Kumar2020} consider this when investigating the axioms of SHAP values.

\paragraph{Global Feature Importance:}
In explainable machine learning many global feature importance methods exist, such as LOCO \citep{Lei2018_LOCO} or SAGE \citep{Covert2020}. 
However, in practice scientists also tend to simply aggregate local feature attributions to get global insights. 
While the focus of this paper is on the aggregation of SHAP values, aggregating other feature attribution methods such as LIME \citep{VanDerLinden2019} and Anchors \citep{Mor2024} has been proposed as well.
The idea of using explainability techniques for feature importance has become a subject of ongoing research (\citealp{Hooker2019}, \citealp{Merrick2020}, \citealp{Kumar2020}, \citealp{Ewald2024}, \citealp{Verdinelli2024}) and some also investigate the possibility of performing feature selection \citep{Marcilio2020} or data selection \citep{Wang2024}.

\paragraph{Explainable Machine Learning:}
While computing SHAP values is one of the most widely used method of feature attribution in explainable machine learning (see \citet{molnar2022} for an overview), many other exist, such as LIME \citep{Ribeiro2016_LIME}, Integrated Gradients \citep{Sundararajan2017_IG} and Anchors \citep{Ribeiro2018_Anchors}.
The literature on these methods is vast with a lot of work dedicated on giving theoretical guarantees for feature attribution methods. See for example \citet{Dasgupta2022}, \citet{Bilodeau2024} and \citet{Bressan2024}.
%The literature on these methods is vast (\citealp{Covert2021}, \citealp{Dasgupta2022}, \cite{Bressan2024}) and not without critique (\citealp{Rudin2019}, \citealp{Slack2020}, \citealp{Krishna2022}).

\section{Preliminaries}

\subsection{Notation}

We consider explanations for functions $f: \R^d \to \R$. For $x \in \R^d$, we let $(x_1, \dots, x_d)$ denote its coordinates. For a subset of indices $S \subseteq [d] = \{1, \dots, d\}$, we let $S^c$ denote its complement and $\R^S$ denote the projection of $\R^d$ onto its coordinates in $S$. That is, for $x \in \R^d$ we let $x_S = (x_i: i \in S) \in \R^S$. 
It will be useful to apply functions whose coordinates are drawn from different points. To denote this, if $x^{(1)}, \dots, x^{(k)} \in \R^d$ are $k$ points and $S^{(1)}, \dots, S^{(k)}$ are $k$ disjoint subsets that partition $[d]$, then we let  $f\left(x^{(1)}_{S^{(1)}}, \dots, x^{(k)}_{S^{(k)}}\right)=f(x)$ where $x$ is the unique point such that $x_{S^{(i)}} = x^{(i)}_{S^{(i)}}$. 
%
%This work is focused on understanding the popular SHAP explainability method, which we briefly review in the following section. 

\subsection{The SHAP explanation method}

SHAP is a local posthoc explanation method that generates a separate explanation for each individual prediction. Given a data distribution $\mu$, a function $f$, and a data point $x$, it calculates $d$ SHAP values which quantify the contribution of each feature to the output of $f$ at $x$. 
To do so, it makes use of a \textit{value function} $v_S(\mu, f, x)$ that associates each subset $S \subseteq [d]$ of features with the prediction $f(x)$. $v_S$ is intended to simulate the behavior that $f$ might have if it only had access to features inside $S$. The current literature typically considers two main choices of value functions.

\begin{definition}[Value Function]\label{defn:value_function}
Let $\mu$ be a distribution over $\R^d$, $f: \R^d \to \R$ be a function, and $x \in \R^d$ a point. Let $S \subseteq [d]$ be a subset of features. The observational and interventional value functions corresponding to $S$ are defined as
\begin{align*}
    v_S^{obs}(\mu, f, x) &= \mathbb{E}_{X \sim \mu}[f(X)|X_S = x_S],\\
    v_S^{int}(\mu, f, x) &= \mathbb{E}_{X \sim \mu}[f(x_S, X_{S^c})].
\end{align*}
\end{definition}

$v_S^{obs}$ represents the value of $f$ when feature values outside of $S$ are sampled from the conditional distribution, while $v_S^{int}$ does the same using the marginal distribution. Due to the difficulty of sampling from a conditional distribution, the interventional value function is more widely used in practice, and from this point forward we will exclusively use it. To simplify notation, we will simply write $v_S$ to mean $v_S^{int}$. 

%SHAP values are defined using the $2^d$ value functions, $\{v_S: S \subseteq [d]\}$. 

\begin{definition}[SHAP Values]\label{defn:shap_value}
Let $\mu$ be a distribution over $\R^d$, $f: \R^d \to \R$ be a function, and $x \in \R^d$ a point. For $1 \leq i \leq d$, the $i$th SHAP value of $f$ at $x$ is defined as $$\phi_i(\mu, f, x) = \frac{1}{d}\sum_{S \subseteq [d] \setminus \{i\}} \binom{d-1}{|S|}^{-1} \left(v_{S \cup \{i\}}(\mu, f, x) - v_S(\mu, f, x)\right).$$ 
\end{definition}

SHAP values are designed to effectively distill the information provided by the value function over all $2^d$ possible subsets of features into one value $\phi_i$ per feature.

Although SHAP values are primarily a local explanation method, they are increasingly used to derive global, feature-based explanations for machine learning models. This is typically achieved by averaging the absolute values of SHAP values for a given feature across the entire data distribution. 
%Formally, this process can be defined as follows:

\begin{definition}[Aggregate SHAP Values]
Let $\mu, f$ be a distribution and a function. 
%, and let $\phi_i: 1 \leq i \leq d$ be a set of associated SHAP values. 
Then the aggregate SHAP values $\overline{\phi_i}(\mu, f)$ are defined as $\overline{\phi_i}(\mu, f) = \Ev_{x \sim \mu} \left| \phi_i(\mu, f, x) \right|.$
\end{definition}

%Taking absolute values prevents positive and negative contributions from canceling out, which can occur if a feature influences the function differently across data points. The resulting averages represent the global "average" impact of each feature. 
Practitioners typically interpret these values by discarding features with small aggregate SHAP values and concentrating on those with relatively large ones. The main purpose of this paper is to investigate how sound this practice is. We now formalize what it means to be able to safely ``discard" a feature. 

\begin{definition}[Determined Function / Discarding Features]\label{defn:det_func}
Let $S \subseteq [d]$ be a set of indices, \linebreak $f: \R^d \to \R$ a function, and $\mathcal{X} \subseteq \R^d$ a subset. $f$ is $S$-determined over $\mathcal{X}$ if for all $a, b \in \mathcal{X}$, $a_S = b_S \implies f(a) = f(b).$ We say that Feature $i$ can be \textit{discarded} for function $f$ over $\mathcal{X}$ if $f$ is $[d] \setminus \{i\}$-determined over $\mathcal{X}$. 
%over the support of the data distribution $\mu$. 
Additionally, we set the convention that a $\emptyset$-determined function is a constant function. 
\end{definition}

Intuitively, we can discard a feature if it does not ``influence'' the outcome of the function on the data support. 

\section{Characterization of Aggregate SHAP Values}\label{sec:aggregate_characterization}

The main question of this paper is to investigate whether features that have small aggregate SHAP values can be safely discarded or not. Surprisingly, and opposed to current practice in data science, the answer to this question is no. Let us show a simple counter-example. 

\subsection{Constant-zero SHAP values on the entire support do not allow to discard features}\label{sec:counter_example}

%We begin with the extreme case when an aggregate SHAP value exactly equals $0$. That is, suppose that for a distribution $\mu$, a function $f$ and a Feature $i$ that $\overline{\phi}_i(\mu, f) = 0$. Does this imply that Feature $i$ can be safely discarded? 

%At a first glance, this appears plausible. The condition that $\overline{\phi}_i(\mu, f) = 0$ is an extremely strict condition as it implies that the $i$-th SHAP value of $f$ is uniformly $0$ across the support of the entire data distribution. This makes it natural to think that it might only occur if the $i$th feature is indeed irrelevant. However, this turns out to not be the case.

We consider a data-generating distribution $\mu$ with support on a two-dimensional ring, and a function~$f$ that is defined on this support
(Figure~\ref{fig:counterexample}, Panel (a)). The distribution $\mu$ and the function $f$ are chosen in such a way that the pointwise SHAP values $\phi_1(\mu,f,x)$ of Feature $1$ are constantly~$0$ on the support of $\mu$, so in particular the aggregate SHAP value 
$\overline{\phi_1}(\mu, f)$ 
is $0$. Yet, the function $f$ obviously is not independent of Feature $1$, hence this feature cannot be discarded. 
The key to achieving this behavior is the fact that we use interventional SHAP: to compute the value functions, we sample from the marginal distributions of both features, whose supports extend beyond the ring. In our example, this allowed us to strategically choose the function values of $f$ on this ``extended support'' such that the SHAP values \textit{within the support} are constantly $0$. Observe that the SHAP values in these out-of-support regions are no longer $0$  (Figure~\ref{fig:counterexample}, Panel (d)). To construct this example, we used a linear program. See Appendix~\ref{app:lin_prog_counterexample} for details and more examples with a similar behavior.

% Returning to the example shown in Figure \ref{fig:counterexample}, we see that the extended support of $\mu$ is much broader than the set of points we had previously computed SHAP values for. Observe that while the SHAP values across $supp(\mu)$ are uniformly $0$ (by construction), the same does not hold for $\supp(\mu^*)$. The price of strategically choosing the values of $f$ (to cause the SHAP value to be $0$ across $supp(\mu)$) is that the SHAP values across $\supp(\mu^*)$ are definitively non-zero. 

\subsection{Constant-zero SHAP values over the extended support allow to discard features}\label{sec:main_result_one}

Contemplating the counter-example above leads to the following idea: to understand whether we can discard Feature~$i$, we need to look at its SHAP values \textit{beyond} the support of the data distribution. To this end, we begin by defining a natural distribution associated with $\mu$ that characterizes precisely where we look. 
This ``extended distribution'' is constructed to have each feature independently range over its entire support according to $\mu$. It is formally defined as follows:

\begin{definition}[Extended distribution and extended support]\label{defn:extended_distribution}
    Let $\mu$ be a distribution over $\R^d$, and let $\mu_i$ denote its marginal distribution of Feature $i$. Let $\mu_1^*, \mu_2^*, \dots, \mu_d^*$ denote independent distributions such that $\mu_i^*$ is identically distributed to $\mu_i$. Then the extended distribution $\mu^*$ is defined as the product distribution $\mu^* = \prod_{i= 1}^d \mu_i^*$. We call the support of the extended distribution $supp(\mu^*)$ the extended support.
\end{definition}

% Observe that computing interventional SHAP values for points $x$ {\em inside the support of } $\mu$ uses exactly the function values of points in the extended support: in the expectation $v_S(\mu, f, x) = \Ev_{X \sim \mu}[f(x_S, X_{s^c})]$, the point $(x_S, X_{S^c})$ lives in the extended support. 
 
 % : has all of its coordinates drawn from either $x$ or $X$, both of which are within the extended support. 

% Returning to the example shown in Figure \ref{fig:counterexample}, we see that the extended support of $\mu$ is much broader than the set of points we had previously computed SHAP values for. Observe that while the SHAP values across $supp(\mu)$ are uniformly $0$ (by construction), the same does not hold for $\supp(\mu^*)$. The price of strategically choosing the values of $f$ (to cause the SHAP value to be $0$ across $supp(\mu)$) is that the SHAP values across $\supp(\mu^*)$ are definitively non-zero. 

The obvious question is now whether a constant-zero SHAP value across the \textit{extended} support implies that a feature can be safely discarded? We answer this affirmatively in the following theorem, which is the first main result of this paper. 

\begin{theorem}[Discarding features based on constant-zero SHAP values on extended support]\label{main_theorem}\\
Let $\mu$ be a distribution on $\R^d$ and $f: \R^d \to \R$ a measurable function. 
Let $1 \leq i \leq d$ be a feature. Then $f$ is $[d] \setminus \{i\}$-determined over $supp(\mu^*)$ if and only if $\phi_i(\mu, f, x) = 0$ for all $x \in supp(\mu^*)$.
\end{theorem}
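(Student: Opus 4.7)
The plan is to prove the two directions of the biconditional separately. The forward implication is straightforward, while the reverse is the substantive one; my strategy there is to isolate the ``leading term'' $(f - A_{[d]\setminus\{i\}}(f))/d$ inside the SHAP formula---precisely the quantity whose vanishing on $\supp(\mu^*)$ encodes determinedness---and then use an inductive/algebraic argument to control the remaining terms.

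For the forward direction, assume $f$ is $[d]\setminus\{i\}$-determined on $\supp(\mu^*)$. Fix any $x \in \supp(\mu^*)$ and $S \subseteq [d]\setminus\{i\}$. Since $x \in \supp(\mu^*) = \prod_j \supp(\mu_j)$ and $X \sim \mu$ satisfies $X_j \in \supp(\mu_j)$ almost surely, both random points $(x_{S\cup\{i\}}, X_{(S\cup\{i\})^c})$ and $(x_S, X_{S^c})$ lie in $\supp(\mu^*)$ a.s. and agree on every coordinate in $[d]\setminus\{i\}$, differing only at coordinate $i$. Determinedness forces $f$ to coincide on the two random points, so taking expectations gives $v_{S\cup\{i\}}(\mu, f, x) = v_S(\mu, f, x)$ for every $S$, and every bracketed term in $\phi_i(\mu, f, x)$ vanishes.

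For the reverse direction, I would set up an operator framework: let $A_S(g)(x) = \Ev_{X \sim \mu}[g(x_S, X_{S^c})]$, so that $v_S = A_S(f)$ and $\phi_i = \tfrac{1}{d}\sum_{S \subseteq [d]\setminus\{i\}} \binom{d-1}{|S|}^{-1}(A_{S\cup\{i\}} - A_S)(f)$. Two structural facts matter: each $A_S$ is a linear projection onto the subspace $\mathcal{A}_S$ of functions on $\supp(\mu^*)$ depending only on coordinates in $S$; and the $S = [d]\setminus\{i\}$ term of the SHAP sum is exactly $(I - A_{[d]\setminus\{i\}})(f)$, with $A_{[d]\setminus\{i\}}$ (averaging over only one coordinate) coinciding with its $\mu^*$-counterpart. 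Thus the target conclusion is equivalent to the vanishing of this principal term on $\supp(\mu^*)$. I would then induct on $d$, with base case $d=1$ where $\phi_1 = f - \Ev_\mu[f]$ and vanishing immediately makes $f$ constant on $\supp(\mu_1)$. For the inductive step, I would apply selected operators from the authors' Shapley Lie algebra (Definition~\ref{defn:shapley_algebra}) to the identity $\phi_i(\mu, f, \cdot) \equiv 0$, combine the resulting auxiliary identities with the original, reduce to a SHAP identity for lower-dimensional slices $f_c(\cdot) = f(\cdot, c)$ for $c \in \supp(\mu_j)$ with a fixed $j \ne i$, invoke the inductive hypothesis on each slice, and glue across $c$ to obtain $[d]\setminus\{i\}$-determinedness on the full extended support.

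The main obstacle, and the reason a direct ANOVA/Hoeffding decomposition does not close the argument, is that under a non-product distribution $\mu$ the operators $A_S$ fail to satisfy the clean composition rule $A_S A_T = A_{S \cap T}$: composing two of them yields averages against independent copies of $\mu$-marginals rather than against the joint marginal on the intersection's complement. Consequently the kernel of the SHAP operator is not a transparent sum of Hoeffding subspaces, and one has to track a genuinely non-commutative family of operators. Exhibiting enough algebraic identities among these composed operators to combine the equations obtained from applying various of them to $\phi_i(\mu, f, x) \equiv 0$, and thereby isolate $f - A_{[d]\setminus\{i\}}(f) = 0$ on $\supp(\mu^*)$, is exactly the role I expect the Shapley Lie algebra to play, and where the technical bulk of the proof should reside.
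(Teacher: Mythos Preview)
Your forward direction is correct and matches the paper's argument exactly.

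Your reverse direction, however, is a plan rather than a proof, and the plan diverges from what the paper actually does. The paper does not argue by induction on $d$, nor does it attempt to slice down to lower-dimensional SHAP problems. Instead it splits the SHAP operator as $\Phi_i = A_i - B_i$, where $A_i = \tfrac{1}{d}\sum_{S \subseteq [d]\setminus\{i\}}\binom{d-1}{|S|}^{-1} v_{S\cup\{i\}}$ and $B_i$ is the analogous sum of the $v_S$. Two facts are cheap: $B_i(F) \subseteq F_{[d]\setminus\{i\}}$ (each summand $v_S$ with $i\notin S$ already lands in $F_S\subseteq F_{[d]\setminus\{i\}}$), and $A_i$ preserves every determined space $F_T$. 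The entire difficulty is proving that $A_i$ is \emph{injective}. Once that is known, $\Phi_i f = 0$ gives $A_i f = B_i f \in F_{[d]\setminus\{i\}}$; the paper then passes to a finite-dimensional $A_i$-invariant subspace $F_f$ (the orbit of $f$ under all compositions of value operators, shown to be finite-dimensional), and rank--nullity on $F_f\cap F_{[d]\setminus\{i\}}$ forces $f\in F_{[d]\setminus\{i\}}$.

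The Shapley Lie algebra enters \emph{solely} to prove injectivity of $A_i$: the paper shows the algebra is solvable, applies Lie's theorem to simultaneously upper-triangularize all $v_S$ on $F_f$, notes each $v_S$ is idempotent so has diagonal entries in $\{0,1\}$, and concludes that $A_i$ is upper triangular with every diagonal entry at least $\tfrac{1}{d}$ (because $v_{[d]}=I$ appears in the sum with coefficient $\tfrac{1}{d}$). Your proposed use---generate auxiliary identities and ``reduce to a SHAP identity for lower-dimensional slices $f_c$''---is not how the algebra is deployed, and the slicing step itself looks problematic: fixing $x_j=c$ does not produce a $(d-1)$-dimensional SHAP value, since the terms of $\phi_i$ with $j\notin S$ integrate $X_j$ \emph{jointly} with the other absent coordinates under $\mu$, and there is no lower-dimensional reference distribution that reproduces that joint averaging. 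The inductive reduction, as you describe it, is the gap.
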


Note that this theorem concerns SHAP values $\phi_i(\mu, f, x)$ with respect to the original distribution $\mu$, but we need to consider these values for all points $x$ in the extended support $supp(\mu^*)$. More generally, the theorem would equally hold if we considered the SHAP values $\phi_i(\mu^*, f, x)$ instead. 
%
% used $\mu^*$ as the extended distribution of an extended distribution is simply itself. 
%
Additionally, this theorem implies that when $\mu$ has full support on $\R^d$, a constant-zero SHAP value is a sufficient condition for discarding a feature. 

\subsection{Proof of Theorem \ref{main_theorem}}

% Fix $\mu$ stay fixed as a distribution over $\R^d$ and $\mu^*$ as its extended distribution. Recall that $\mu^*$ is designed to encompass all points that are relevant for computing SHAP values over $\mu$. This motivates restricting the domain of functions we consider to $\supp(\mu^*)$. 
Let $F$ denote the vector space of all measurable functions from $\supp(\mu^*) \to \C$ (we generalize to functions ranging over complex values for technical reasons based on the nicer properties of complex vector spaces). 
To prove Theorem~\ref{main_theorem}, we begin by reframing it as a statement about linear operators that act on $F$. To do so, we use the following definitions. 
% First, we define a subspace of $F$ that corresponds to determined functions (Definition \ref{defn:det_func}).

\begin{definition}[Determined Function Space]\label{defn:det_func_vec_space}
Let $F_S$ denote the vector space of all measurable $S$-determined functions from $\supp(\mu^*) \to \C$. 
\end{definition}

Observe that $F_S$ is a vector space because linear combinations of $S$-determined functions are $S$-determined themselves. Next, we show (proof in Appendix \ref{proof:defn:value_operator}) that there exist linear operators $F \to F$ that correspond to value functions (Definition~\ref{defn:value_function}) and SHAP values (Definition~\ref{defn:shap_value}). 

\begin{lemma}[Value operator]\label{defn:value_operator} Let $S \subseteq [d]$. There exists a linear operator $\upsilon_S: F \to F$ such that $(\upsilon_S f) (x) = v_S(\mu, f, x)$ for all $f \in F$. 
\end{lemma}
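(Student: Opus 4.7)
The plan is to define the operator $\upsilon_S$ pointwise by the only natural formula, namely $(\upsilon_S f)(x) := v_S(\mu, f, x) = \Ev_{X \sim \mu}[f(x_S, X_{S^c})]$ for $x \in \supp(\mu^*)$, and then verify three things: (i) the integrand on the right-hand side is well-defined (the pasted point lies in $\supp(\mu^*)$, where $f$ is defined), (ii) the resulting function of $x$ is measurable so that $\upsilon_S f \in F$, and (iii) the assignment $f \mapsto \upsilon_S f$ is linear.

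Step (i) is the one genuinely subtle point, and it hinges on the product structure of the extended distribution: since $\mu^*$ is a product of the marginals of $\mu$, we have $\supp(\mu^*) = \prod_i \supp(\mu_i)$. For $x \in \supp(\mu^*)$ this gives $x_S \in \prod_{i \in S} \supp(\mu_i)$, and for $X \sim \mu$ each coordinate $X_i$ lies in $\supp(\mu_i)$ almost surely, so $X_{S^c} \in \prod_{i \in S^c} \supp(\mu_i)$ almost surely. Concatenating yields a point in $\supp(\mu^*)$, so $f(x_S, X_{S^c})$ is defined $\mu$-almost surely and the expectation makes sense (following the standard convention for value functions, I will implicitly restrict $F$ to measurable functions for which this expectation exists).

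For step (ii), I introduce the auxiliary function $g : \supp(\mu^*) \times \supp(\mu) \to \C$ defined by $g(x, y) = f(x_S, y_{S^c})$. It is jointly measurable, as it is the composition of the measurable $f$ with the measurable pasting map $(x, y) \mapsto (x_S, y_{S^c})$. Fubini--Tonelli, applied to the positive and negative parts of the real and imaginary components, then gives that $x \mapsto \int g(x, y)\, d\mu(y)$ is measurable in $x$, so $\upsilon_S f \in F$.

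Step (iii) is immediate from linearity of expectation: for scalars $\alpha, \beta \in \C$ and $f_1, f_2 \in F$, the integrand $(\alpha f_1 + \beta f_2)(x_S, X_{S^c}) = \alpha f_1(x_S, X_{S^c}) + \beta f_2(x_S, X_{S^c})$, and the expectation splits accordingly. The main (and really only) conceptual point to get right is step (i); this is precisely why the authors work with $\mu^*$ rather than $\mu$, since on the original support the pasted point need not lie in $\supp(\mu)$ and $f$ would have to be extended arbitrarily outside it -- exactly the flexibility exploited in the counter-example of Figure~\ref{fig:counterexample}.
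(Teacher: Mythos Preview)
Your argument is correct and follows essentially the same approach as the paper: the paper's proof verifies exactly your step~(i) (each coordinate of the pasted point lies in $\supp(\mu_i^*)$, hence the point lies in $\supp(\mu^*)$) and your step~(iii) (linearity of expectation), and nothing else. Your step~(ii) on measurability via Fubini--Tonelli is an extra care the paper omits entirely, so your proof is in fact slightly more complete.
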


\begin{definition}[SHAP operator]\label{defn:shapley_operator}
Let $1 \leq i \leq d$ be a feature. Then the SHAP operator ${\Phi_i: F \to F}$ is defined as $\Phi_i f = A_if - B_if$ where $$A_if = \frac{1}{d}\sum_{S \subseteq [d] \setminus \{i\}} \binom{d-1}{|S|}^{-1} \upsilon_{S \cup \{i\}} f \text{ and }B_if = \frac{1}{d}\sum_{S \subseteq [d] \setminus \{i\}} \binom{d-1}{|S|}^{-1} \upsilon_{S} f.$$ 
\end{definition}

Expanding out this definition immediately implies that $(\Phi_if)(x)$ is precisely the $i$th SHAP value $\phi_i(\mu, f, x)$ (Definition~\ref{defn:shap_value}) of $f$ at $x$ with respect to $\mu$. Thus we can reframe the statement of Theorem~\ref{main_theorem} as follows: for any $f \in F$, $\Phi_i f = 0 \iff f \in F_{[d] \setminus \{i\}}.$
Our strategy to prove this will be to derive a useful set of properties of value operators that culminate in the following characterizations of $A_i$ and $B_i$. 
\begin{lemma}[Properties of $A_i$ and $B_i$]\label{lemma:key_properties}
The operators $A_i$ and $B_i$ satisfy the following properties:
\vspace{-\topsep} % Reduces vertical space before the enumerate
\begin{enumerate}[noitemsep] % noitemsep removes vertical space between items
    \item \textbf{Image of $A_i$:} For all $S \subseteq [d]$, $A_i(F_S) \subseteq F_S$.
    \item \textbf{Image of $B_i$:} $B_i(F) \subseteq F_{[d] \setminus \{i\}}$.
    \item \textbf{Kernel of $A_i$:} $A_i^{-1}(\{0\}) = \{0\}$.
\end{enumerate}
\end{lemma}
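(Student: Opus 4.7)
My plan is to prove the three properties in turn, grounding everything in a single structural fact about value operators: for any subset $R \subseteq [d]$ and any $f \in F_S$, the image $\upsilon_R f$ lies in $F_{S \cap R}$ (and hence in $F_S$). This follows immediately from $\upsilon_R f(x) = \mathbb{E}_{X \sim \mu}[f(x_R, X_{R^c})]$: when $f$ is $S$-determined, the integrand depends on $x$ only through its $S \cap R$-coordinates.

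Properties 1 and 2 then fall out directly. For Property 1, every summand of $A_i$ has the form $\upsilon_{S' \cup \{i\}} f$ with $S' \subseteq [d] \setminus \{i\}$, which preserves $F_S$ by the structural fact. For Property 2, every summand of $B_i$ has the form $\upsilon_{S'} f$ with $S' \subseteq [d] \setminus \{i\}$, so it lies in $F_{S'} \subseteq F_{[d]\setminus\{i\}}$.

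Property 3 is the substantive part. First I would note that the Shapley coefficients $c_T := \tfrac{1}{d}\binom{d-1}{|T|-1}^{-1}$ satisfy $\sum_{T \ni i} c_T = 1$, and the $T = [d]$ term of $A_i$ is simply $\tfrac{1}{d} I$ (since $\upsilon_{[d]} = I$). Thus the equation $A_i f = 0$ rearranges to $f = -d \sum_{T \ni i,\, T \neq [d]} c_T\, \upsilon_T f$, displaying $f$ as a sum of strictly lower-order functions $\upsilon_T f \in F_T$ with $|T| < d$. Next I would apply each operator $\upsilon_{[d]\setminus\{j\}}$ for $j \neq i$ to the equation $A_i f = 0$; using the composition rule $\upsilon_S \upsilon_T f(x) = \mathbb{E}_{X,Y \sim \mu,\, \text{indep}}[f(x_{S \cap T}, X_{S^c \cap T}, Y_{T^c})]$, these reductions yield a linear system among the components $\{\upsilon_T f\}_{T \ni i}$. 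Solving this system should force each $\upsilon_T f$ to be $\{i\}$-determined, not merely $T$-determined, so that $f \in F_{\{i\}}$. Finally, on $F_{\{i\}}$ every $\upsilon_T$ with $T \ni i$ acts as the identity (averaging an $\{i\}$-determined function over its non-$i$ coordinates leaves it unchanged), so $A_i|_{F_{\{i\}}} = (\sum_T c_T)\, I = I$, and $A_i f = 0$ gives $f = 0$.

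The hard part will be solving the linear system in the middle of the Property 3 argument. Because the operators $\upsilon_S$ and $\upsilon_T$ need not commute when $\mu$ has correlated coordinates, the relations among the $\upsilon_T f$ are intertwined through the joint marginals of $\mu$, and the cancellations required to force each $\upsilon_T f$ into $F_{\{i\}}$ are not transparent. Small cases (e.g.\ $d = 3$, where one can check by hand that the system directly forces each $\upsilon_{\{i,j\}} f$ to lose its $x_j$-dependence) suggest that the cascade does go through uniformly in $\mu$, but a clean general derivation will likely rely on the algebraic structure of the Shapley Lie algebra the paper introduces next.
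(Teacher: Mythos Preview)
Your treatment of Properties 1 and 2 is correct and matches the paper's argument (the paper records your structural fact $\upsilon_R(F_S) \subseteq F_{R \cap S}$ as Lemma~\ref{lem:value_operators_awesome}).

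For Property 3 your proposal has a genuine gap: the step ``solve the linear system to force each $\upsilon_T f$ into $F_{\{i\}}$'' is never carried out, and as you yourself acknowledge, the non-commutativity of the $\upsilon_S$ for correlated $\mu$ makes that system opaque. The small-$d$ checks you allude to do not supply a general mechanism, and without one the argument is a hope rather than a proof. The paper's route is conceptually different and sidesteps the linear system entirely. It shows that the Lie algebra $\sg$ generated by $\{\upsilon_S : S \subseteq [d]\}$ is \emph{solvable} (Lemma~\ref{lem:shap_lie_solvable}), restricts to a finite-dimensional $\sg$-invariant subspace $F_f \ni f$, and applies Lie's theorem to obtain a basis in which every $\upsilon_S$ is simultaneously upper triangular. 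Since $\upsilon_S^2 = \upsilon_S$, each $\upsilon_S$ has diagonal entries in $\{0,1\}$; hence $A_i$, being a positive combination of the $\upsilon_{S\cup\{i\}}$ that includes the summand $\tfrac{1}{d}\upsilon_{[d]} = \tfrac{1}{d} I$, is upper triangular with every diagonal entry at least $\tfrac{1}{d}$, and injectivity is immediate. In short, the paper triangularizes rather than solves, and the solvability of $\sg$ is precisely the structural input your cascade argument is reaching for but does not supply.
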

Properties 1 and 2 demonstrate that $A_i$ and $B_i$ both tend to preserve determined functions, while Property 3 implies that $A_i$ has a trivial kernel.
Lemma~\ref{lemma:key_properties} is a consequence of the algebraic structure of the value operators, and its proof is surprisingly involved. As we will see, the set of value operators $\{\upsilon_S: S \subseteq [d]\}$ forms a solvable Lie algebra, which provides useful structure to prove the lemma. We defer the proof of Lemma~\ref{lemma:key_properties} to Section~\ref{sec:lie_algebra}. 
Instead, let us show how this lemma implies Theorem~\ref{main_theorem}.\\

\begin{proof}[Theorem~\ref{main_theorem} (Sketch); full proof in Appendix~\ref{app:main_theorem_proof}] According to our previous discussion, it suffices to show that for all $f \in F$ and $1 \leq i \leq d$, $\Phi_if = 0$ if and only if $f \in F_{[d] \setminus \{i\}}$. The ``$\Leftarrow$'' direction is straightforward (see appendix), so here we focus our attention on the ``$\Rightarrow$'' direction.
Suppose $\Phi_i f = 0$, which means $A_if = B_if$. Property~2 of Lemma~\ref{lemma:key_properties} implies $B_if \in F_{[d] \setminus \{i\}}$, and thus $A_if \in F_{[d] \setminus \{i\}}$. Thus, it suffices to show that the pre-image of $F_{[d] \setminus \{i\}}$ (denoted $A_i^{-1}\left(F_{[d] \setminus \{i\}} \right)$) is a subset of $F_{[d] \setminus \{i\}}$. 

Doing so is particularly simple when $F$ (and therefore $F_{[d] \setminus \{i\}}$) is finite dimensional. Property~3 of Lemma~\ref{lemma:key_properties} implies that $A_i$ has a trivial kernel, which means that $A_i$ has a well defined inverse $A_i^{-1}$. Property~1 of Lemma~\ref{lemma:key_properties} implies that $A_i\left(F_{[d] \setminus \{i\}}\right) \subseteq F_{[d] \setminus \{i\}}$. 
In the case where $F$ is finite dimensional, it then follows that $\dim \left(A_i\left(F_{[d] \setminus \{i\}}\right)\right) = \dim \left(F_{[d] \setminus \{i\}}\right)$, which implies that the two vector spaces must be equal. Thus $A_i$ is an injective and surjective map from $F_{[d] \setminus \{i\}}$ to itself, which means that it must be bijective, which implies $A_i^{-1}\left(F_{[d] \setminus \{i\}} \right) \subseteq F_{[d] \setminus \{i\}}.$
To handle the infinite-dimensional case, it turns out there is a technical trick one can use to reduce it to the finite dimensional case. We defer this to Appendix~\ref{app:main_theorem_proof}.
\end{proof}

\subsection{Discarding features based on close-to-zero aggregate SHAP values}\label{sec:robust_theorem}

Theorem~\ref{main_theorem} has two drawbacks. First, it requires SHAP values to be \textit{exactly} equal to $0$. Second, it considers all points in the entire extended support. Thus translating it into a statement about aggregate SHAP values is not immediately obvious, because aggregate SHAP values are averaged over the support of the original distribution.
We address both of these issues by replacing $\mu$ with the extended distribution $\mu^*$. That is, we propose that aggregate SHAP values be computed with respect to the extended distribution. This immediately addresses the second issue as these aggregate values \textit{will} take the full extended support into account. It turns out, this idea also addresses the first issue, allowing for a more flexible bound.

% Discarding features based on constant-zero SHAP values on extended support

\begin{theorem}[Small  $\mu^*$-SHAP value allows to discard feature]\label{theorem:robust_distribution_bound}
Let $\mu$ be a distribution on $\R^d$, and $f: \R^d \to [0, 1]$ a measurable function. Let $1 \leq i \leq d$ be a feature. Suppose that the aggregate SHAP value, $\overline{\phi_i}(\mu^*, f) \leq \epsilon.$ Then there exists $g \in F_{[d] \setminus \{i\}}$ s.t. $\int \left(f(x) - g(x)\right)^2d\mu^*(x) < d^2\epsilon.$
\end{theorem}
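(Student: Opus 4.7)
The plan is to take as candidate $g := \upsilon_{[d] \setminus \{i\}} f$, i.e., the function obtained by averaging $f$ over Feature $i$ according to $\mu_i^*$. By construction $g \in F_{[d] \setminus \{i\}}$; moreover, because $\mu^*$ is a product measure, $g$ is in fact the $L^2(\mu^*)$-orthogonal projection of $f$ onto $F_{[d] \setminus \{i\}}\cap L^2(\mu^*)$, so controlling $\|f - g\|_{L^2(\mu^*)}^2$ is exactly what the theorem asks for.

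First I would establish the pointwise identity $\phi_i(\mu^*, f, \cdot) = A_i(f - g)$. Decompose $\Phi_i f = \Phi_i g + \Phi_i(f-g)$. The first term vanishes by the easy direction of Theorem~\ref{main_theorem}, since $g \in F_{[d] \setminus \{i\}}$. For the second term, observe that $B_i(f-g) = 0$: every summand $\upsilon_S$ appearing in $B_i$ has $S \subseteq [d] \setminus \{i\}$, so $\upsilon_S$ averages over Feature $i$, while $f - g$ has zero mean over Feature $i$ by the very definition of $g$. Hence $\Phi_i(f - g) = A_i(f - g)$, which gives the identity.

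The main step is then to produce a spectral lower bound on $A_i$ acting on $L^2(\mu^*)$. The key observation is that under the product distribution $\mu^*$, each value operator $\upsilon_S$ is precisely the $L^2(\mu^*)$-orthogonal projection onto $F_S \cap L^2(\mu^*)$; this is immediate from Fubini together with the product structure of $\mu^*$. In particular every $\upsilon_S$ is a positive semi-definite, self-adjoint operator on $L^2(\mu^*)$. Since $A_i$ is a non-negative linear combination of such projections and the summand indexed by $S = [d] \setminus \{i\}$ equals $\tfrac{1}{d}\upsilon_{[d]} = \tfrac{1}{d} I$, we obtain the operator inequality $A_i \succeq \tfrac{1}{d} I$, which gives $\|A_i h\|_{L^2(\mu^*)} \geq \tfrac{1}{d}\|h\|_{L^2(\mu^*)}$ for every $h \in L^2(\mu^*)$.

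Applying this inequality to $h = f - g$ and inserting the identity of step one yields $\|f - g\|_{L^2(\mu^*)}^2 \leq d^2 \|\phi_i(\mu^*, f, \cdot)\|_{L^2(\mu^*)}^2$. To pass from the $L^2$ to the $L^1$ norm of $\phi_i$, I would use that $f$ is $[0,1]$-valued, whence every $v_S$ also lies in $[0,1]$ and the triangle inequality combined with the identity $\tfrac{1}{d}\sum_{S \subseteq [d]\setminus\{i\}} \binom{d-1}{|S|}^{-1} = 1$ gives $|\phi_i| \leq 1$ pointwise. Therefore $\phi_i^2 \leq |\phi_i|$, so $\|\phi_i\|_{L^2(\mu^*)}^2 \leq \|\phi_i\|_{L^1(\mu^*)} = \overline{\phi_i}(\mu^*, f) \leq \epsilon$, and chaining these inequalities yields the desired $d^2\epsilon$ bound. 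The hard part is the conceptual leap of recognizing that value operators collapse into orthogonal projections under $\mu^*$: this is exactly what upgrades the algebraic identity $\phi_i = A_i(f - g)$ into a quantitative $L^2$ estimate, while everything else is routine functional-analytic bookkeeping.
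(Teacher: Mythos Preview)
Your proof is correct and takes a genuinely simpler route than the paper's. Both arguments ultimately show $\|f-g\|_{L^2(\mu^*)}^2 \leq d^2\|\Phi_i^* f\|_{L^2(\mu^*)}^2 \leq d^2\,\overline{\phi_i}(\mu^*,f)$, and both use the same $L^1$--$L^2$ comparison for the last step. The difference lies in how the first inequality is obtained. The paper sets $g=(A_i^*)^{-1}B_i^* f$ on a finite-dimensional localized subspace $F_f^*$, proves that the value operators are Hermitian over $\mu^*$, and then imports the eigenvalue bound $\lambda_{\min}(A_i^*)\geq 1/d$ from the general Lie-algebra machinery (Lemma~\ref{appxlemma:eigenvalues_of_A_i}, which invokes Lie's theorem via the Shapley Lie algebra). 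You instead take the explicit $g=\upsilon_{[d]\setminus\{i\}}f$, derive the clean identity $\Phi_i^* f = A_i^*(f-g)$ by noting that $B_i^*(f-g)=0$, and obtain $A_i^*\succeq \tfrac{1}{d}I$ directly from the observation that under the product measure $\mu^*$ each $\upsilon_S^*$ is an orthogonal projection (so positive semidefinite) and the identity appears with coefficient $1/d$. In fact your $g$ coincides with the paper's, since under $\mu^*$ one has $\upsilon_S^*\upsilon_T^*=\upsilon_{S\cap T}^*$ and hence $A_i^*\,\upsilon_{[d]\setminus\{i\}}f=B_i^* f$. What your approach buys is the complete elimination of the Lie-theoretic detour and the finite-dimensional reduction for this particular theorem; what the paper's approach buys is a uniform framework that also handles the non-product case of Theorem~\ref{main_theorem}, where the value operators fail to commute and your projection argument would not apply.
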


Observe here that the SHAP values are both averaged over and computed with $\mu^*$. In addition to encompassing the entire extended support, we will see that $\mu^*$ also lends itself to a tighter analysis due to its features being independent. \\

\begin{proof}[Theorem~\ref{theorem:robust_distribution_bound} (Sketch); full proof in Appendix \ref{proof:theorem:robust_distribution_bound}]
Recall that $F$ denotes the space of all measurable functions $supp(\mu^*) \to \C$. The key observation is to define an inner product over $F$ with $\langle f_1, f_2 \rangle = \int \overline{f_1(x)}f_2(x) d\mu^*(x).$ We can then show that over $\mu^*$, the value operators $v_S$ are \textit{Hermitian}. From here, we can essentially follow the proof of Theorem \ref{main_theorem}. The only difference is that when we apply Lemma~\ref{lemma:key_properties}, we can additionally bound the eigenvalues of $A_i^{-1}$ (thus strengthening Property~3 of Lemma~\ref{lemma:key_properties}). We then conclude by arguing that if $(A_i - B_i)f$ is close to $0$, then $A_if$ is close to $F_{[d] \setminus \{i\}}$. This means that the distance from $f$ to $F_{[d] \setminus \{i\}}$ can be bounded with the norm of $A_i^{-1}$, which in turn is bounded based on its eigenvalues (as it too is Hermitian).
\end{proof}

\section{Aggregate SHAP Values in the Finite Sample Setting
%Limit
}\label{section:kernel_shap}

Thus far, our results have been in the distributional setting where SHAP values and value functions are both computed based on the true expectations taken over $\mu$ (or $\mu^*$). Hence, as a next step we will study the \textit{finite sample regime}, where SHAP values are computed with respect to an i.i.d sample $X = \{x^{(1)}, \dots, x^{(n)}\} \sim \mu^n$. 
A natural way to approximate SHAP values in this setting is to replace true expectations with their corresponding empirical estimates. However, this is computationally infeasible in practice due to the exponential number of subsets $S$ ($2^d$ total) one must consider. The most popular method to address this issue is KernelSHAP~\citep{LundbergLee2017}, which uses weighted linear regression to approximate the value function $v_S(\mu, f, x)$, and then combines these approximations to obtain a tractable estimate of the SHAP value $\phi_i(\mu, f, x)$. For the purposes of proving our results, we include a detailed definition of KernelSHAP in Appendix \ref{app:kernel_shap_definition}. 

We denote the KernelSHAP value for Feature $i$ at point $x \in \R^d$ with $\kr_i(X, f, x)$, where $X = \{x^{(1)}, \dots, x^{(n)}\}$ is a set of $n$ points in $\R^d$. We also denote the aggregate KernelSHAP value as $\overline{\kr}_i(X, f)$ which is defined as $$\overline{\kr}_i\left( X = \{x^{(1)}, \dots x^{(n)}\}, f\right) = \frac{1}{n}\sum_{j= 1}^n |\kr_i(X, f, x^{(j)})|.$$ 

We now turn to our main objective, which is to find an analog of Theorem \ref{theorem:robust_distribution_bound} that applies to KernelSHAP. Recall that the main idea from the previous section was that SHAP values must be computed over the \textit{extended distribution} $\mu^*$ in order to achieve soundness. This idea will also apply to KernelSHAP in a similar way. To use the extended distribution, we need to replace the training sample $X \sim \mu^n$ with a sample from $(\mu^*)^n$. Although this cannot be directly done (as typically users only have access to samples from $\mu$), it turns out that simply scrambling the columns of the data matrix $X$ (as shown in Algorithm \ref{alg:SHAP_extended_support}) suffices. More precisely, we let $X^* = \{(x^*)^{(1)}, \dots, (x^*)^{(n)}\}$ be the dataset constructed as follows: if $\sigma_1, \dots, \sigma_d$ are independent random permutations of $[n]= \{1, \dots, n\}$, then 
\begin{equation}\label{eqn:scramble}
(x^*)^{(j)}_i = x^{(\sigma_i(j))}_i: 1 \leq i \leq d, 1 \leq j \leq n. 
\end{equation}
We now investigate the soundness of running KernelSHAP over this scrambled dataset. To do so, we will express our results in terms of an error term, denoted $\eta(X^*, \mu^*, f)$. This term represents how far the aggregate SHAP values are from the values that \textit{KernelSHAP} converges towards in the large sample limit. Crucially, this term has no relevance to the true SHAP values -- it is rather a reflection of the convergence behavior that KernelSHAP exhibits when it is applied over a large dataset. This quantity is extensively studied in \citep{CovertLee21}, and has been shown to be quite small both theoretically and practically. We include a full discussion of this in Appendix \ref{app:kernel_shap_definition}. 
\begin{theorem}[Small $\mu^*$-aggregate KernelSHAP Value allows to discard features]\label{thm:kernel_shap}
Let $\mu$ be a \linebreak distribution and $f: \R^d \to [0, 1]$ a measurable function. Let $1 \leq i \leq d$ be a feature. Let $X = \{x^{(1)}, \dots, x^{(n)}\} \sim \mu^n$ denote an i.i.d. sample of $n$ points from $\mu$, and let $X^*$ be as defined in Equation~\ref{eqn:scramble}. Suppose that $\overline{\kr}_i(X^*, f) \leq \epsilon$. Then there exists $g \in F_{[d] \setminus \{i\}}$ such that $$\int\left(f(x) - g(x)\right)^2 d\mu^*(x) < d^2\left(\epsilon + \eta(X^*, \mu^*, f)\right),$$
where $\eta(X^*, \mu^*, f)$ denotes the error between the empirical computation of KernelSHAP and its limit object (see Definition~\ref{defn:error_term}).
\end{theorem}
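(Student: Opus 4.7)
The plan is to reduce Theorem~\ref{thm:kernel_shap} to Theorem~\ref{theorem:robust_distribution_bound} by passing through the large-sample limit object of KernelSHAP on the scrambled dataset. The first step is to verify that the construction in Equation~\ref{eqn:scramble} produces a dataset $X^*$ whose \emph{rows} each have marginal law $\mu^*$: because the column permutations $\sigma_1,\ldots,\sigma_d$ are independent of one another (and of $X$), the coordinates of any fixed row are marginally independent draws from $\mu_1,\ldots,\mu_d$, whose joint law is exactly $\mu^*$. The rows of $X^*$ are not jointly independent, but since aggregate KernelSHAP only averages a per-row functional over rows, only the marginal law of each row will be needed.

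Next I would invoke the large-sample analysis of KernelSHAP reviewed in Appendix~\ref{app:kernel_shap_definition} (following \citet{CovertLee21}). KernelSHAP is defined by a weighted least-squares problem over random subset masks, and as $n \to \infty$ this empirical problem converges to a distributional counterpart whose solution --- the ``limit object'' --- is known to coincide with the true interventional SHAP value $\phi_i(\mu^*, f, x)$ when the underlying distribution is $\mu^*$. The error term $\eta(X^*, \mu^*, f)$ is defined in Definition~\ref{defn:error_term} precisely to upper-bound the aggregate gap between the empirical KernelSHAP output and this limit, so by the triangle inequality
\[
\overline{\phi_i}(\mu^*, f) \;\leq\; \overline{\kr}_i(X^*, f) + \eta(X^*, \mu^*, f) \;\leq\; \epsilon + \eta(X^*, \mu^*, f).
\]

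With the aggregate distributional SHAP value thus bounded, I would apply Theorem~\ref{theorem:robust_distribution_bound} with $\mu$ replaced by $\mu^*$. Since $\mu^*$ is already a product distribution, its own extended distribution is $\mu^*$ itself, so the hypotheses of the theorem match exactly, and it yields a function $g \in F_{[d] \setminus \{i\}}$ with $\int (f(x) - g(x))^2 \, d\mu^*(x) < d^2(\epsilon + \eta(X^*, \mu^*, f))$, which is precisely the stated conclusion.

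The main obstacle is the bookkeeping in the middle step: pinning down the exact KernelSHAP limit object, verifying that it equals $\phi_i(\mu^*, f, \cdot)$ (rather than a closely related but distinct regression target), and confirming that the triangle-inequality bound on the \emph{aggregate} values goes through even though the rows of $X^*$ are not jointly i.i.d.\ (only each row has marginal law $\mu^*$). Once this bridge is in place, the remainder of the argument is a direct reduction to Theorem~\ref{theorem:robust_distribution_bound} and does not require revisiting the Shapley Lie algebra machinery.
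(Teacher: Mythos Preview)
Your proposal and the paper agree on the first step (bridging empirical KernelSHAP on $X^*$ to its population limit via the error term $\eta$), but then diverge. You want to identify the KernelSHAP limit object $\kr_i(\mu^*,f,x)$ with the true SHAP value $\phi_i(\mu^*,f,x)$ and then invoke Theorem~\ref{theorem:robust_distribution_bound} as a black box. The paper deliberately avoids this identification: it treats the KernelSHAP limit operator $\kr_i^*$ on its own terms, uses the explicit \citet{CovertLee21} formula (Definition~\ref{defn:kern_limit_object}) to write $\kr_i^* = \sum_S \iota_S v_S^*$ in Lemma~\ref{def:kernel_shap_operator_algebra}, and splits this into $C_i^* - D_i^*$ with $C_i^*$ collecting the terms with $i\in S$. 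The only facts about the coefficients that the proof of Theorem~\ref{theorem:robust_distribution_bound} actually used were nonnegativity on the $A_i^*$ side and that the coefficient of $v_{[d]}^*$ is $1/d$; Lemma~\ref{def:kernel_shap_operator_algebra} verifies exactly these for the KernelSHAP coefficients, so the Hermitian/eigenvalue argument (Lemmas~\ref{appxlem:hermitian} and~\ref{appxlemma:eigenvalues_of_A_i}) reruns verbatim with $C_i^*,D_i^*$ in place of $A_i^*,B_i^*$.

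Your shortcut goes through provided the equality $\kr_i(\mu^*,f,\cdot)=\phi_i(\mu^*,f,\cdot)$ holds; that is precisely the weighted-regression characterization of Shapley values (Theorem~2 of \citet{LundbergLee2017}), since the population constrained regression in Definition~\ref{defn:kern_limit_object} is that same optimization problem. So your route is valid and shorter once that identification is justified. What the paper's route buys is independence from that characterization---the authors explicitly frame the theorem as holding ``independently of whether KernelSHAP accurately approximates true SHAP values''---and a demonstration that soundness is a consequence of the algebraic \emph{shape} of $\kr_i^*$ as a member of the Shapley Lie algebra. Their argument would transfer unchanged to any estimator whose limit has a $C_i^*-D_i^*$ decomposition with the right sign pattern, even one that does not reproduce SHAP exactly; your reduction would not.
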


Theorem \ref{thm:kernel_shap} has \textit{direct} implications for practitioners: implementing a procedure for constructing $X^*$ is trivial and this is the \textit{only} modification needed for KernelSHAP to enjoy similar soundness guarantees as SHAP does. 
%The error term in Theorem~\ref{thm:kernel_shap} is extensively studied in \citet{CovertLee21}. Our result builds on their analysis, where they provide rates of convergence for $\eta(X^*, \mu^*, f)$. 
We now briefly sketch a proof, with full details deferred to Appendix~\ref{proof:thm:kernel_shap}.\\
%What is especially surprising about Theorem~\ref{thm:kernel_shap} is that it holds \textit{regardless} of how closely the KernelSHAP values approximate the ``true" interventional SHAP values. Our techniques for analyzing SHAP directly apply to KernelSHAP, and we see that soundness is a property that is inherently linked to properties about value operators (rather than a property linked to the specific way in which SHAP values are computed). To our knowledge, this is the first result of its kind on KernelSHAP. We now briefly sketch a proof, with full details deferred to Appendix~\ref{proof:thm:kernel_shap}.\\

\begin{proof}[Theorem~\ref{thm:kernel_shap} (Sketch); full proof in Appendix~\ref{proof:thm:kernel_shap}]
We define an operator, $\kr_i$, that corresponds to the limit object of KernelSHAP. The crux of this proof is to use the explicit formula for $\kr_i$ given in \cite{CovertLee21} to show that much like SHAP operator $\Phi_i$, $\kr_i$ can be expressed as a linear combination of value operators. This allows us to leverage an analog of Lemma~\ref{lemma:key_properties}. 
At a high level, the limit object of KernelSHAP is the solution to a particular linear regression that attempts to predict value functions. Using the standard formula for solving a linear regression, we see that this solution is \textit{linear} with respect to the target vector. This implies that $\kr_i$ itself is linear with respect to the value functions. 
Finally, we show that this linear combination satisfies an analog of Lemma~\ref{lemma:key_properties}. This follows from straightforward algebraic manipulations, beginning with an explicit formula for $\kr_if$ derived in \cite{CovertLee21}. 
\end{proof}

\section{Technical Toolbox: The Shapley Lie Algebra}\label{sec:lie_algebra}

We now develop the technical tools that allow us to prove Lemma~\ref{lemma:key_properties}, the key ingredient in our proofs.  
%Our approach is motivated by the three properties in the lemma.
In Section~\ref{sec:value_properties}, we prove Properties 1 and 2 by studying the relationship between value operators and determined spaces.   Observe that the operators $A_i$ and $B_i$ (Definition~\ref{defn:shapley_operator}) are linear combinations of value operators, and thus their behavior over determined spaces can be characterized based on looking at value operators.  

In Section~\ref{sec:shapley_algebra}, we develop the technical machinery for proving Property~3. The core difficulty is to analyze the invertibility of a linear combination of linear operators. One natural idea for doing so would be to attempt to simultaneously diagonalize the operators. This would allow us to only consider diagonal matrices, which would greatly simplify the problem.
Unfortunately simultaneous diagonalization is only possible for a set of commuting matrices. 
To circumvent this, we will appeal to Lie Theory which provides tools to study families of transformations that \textit{almost commute} with each other. First, we construct a Lie algebra generated by the value operators and show that it is \textit{solvable} (which can be thought of as a generalization of commutative). Second, we apply Lie's theorem to find a basis in which all value operators are simultaneously \textit{upper triangular}. This enables us to show that $A_i$ is also upper triangular, which in turn demonstrates its invertibility.  

\subsection{Properties of Value Operators}\label{sec:value_properties}

Recall that the idea behind the value function $v_S(\mu, f, x)$ is to represent what $f$ would output at $x$ if it only had access to the coordinates from $x_S$. Applying this over all $x \in supp(\mu^*)$ suggests that applying the value operator $v_S$ to $f$ results in a function $v_Sf$ that is only impacted by the coordinates of its input from $S$. Phrasing this in terms of determined function spaces, this can be written as $v_Sf \in F_S$. We now study the more general problem of characterizing how $v_S$ behaves over an arbitrary determined function space $F_T$ for some other set $T \subseteq [d]$. 

\begin{lemma}[Images and Eigenspaces of Value Operators]\label{lem:value_operators_awesome}
If $S, T \subseteq [d]$, then the following hold:
\begin{enumerate}[noitemsep]
	\item \textbf{Image of $v_S$:} $v_S(F_T) \subseteq F_{S \cap T}$.
	\item \textbf{Eigenspace of $v_S$:} If $T \subseteq S$, then $v_Sf = f$ for all $f \in F_T$. 
\end{enumerate}
\end{lemma}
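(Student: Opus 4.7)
The plan is to unfold the definition $(v_S f)(x) = \mathbb{E}_{X \sim \mu}[f(x_S, X_{S^c})]$ and verify both claims by direct coordinate chasing. One point that I want to dispose of at the outset is the domain-of-definition subtlety: $f$ is only defined on $\supp(\mu^*)$, so I need the hybrid point $(x_S, X_{S^c})$ to lie there. But $\mu^* = \prod_i \mu_i^*$ with $\mu_i^*$ distributed as the marginal $\mu_i$, so whenever $x \in \supp(\mu^*)$ and $X \in \supp(\mu)$, every coordinate of $(x_S, X_{S^c})$ lies in the support of its corresponding marginal, and hence $(x_S, X_{S^c}) \in \supp(\mu^*)$. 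With this in hand, $v_S f$ is a well-defined element of $F$.

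For Part 1, I would fix $x, y \in \supp(\mu^*)$ with $x_{S \cap T} = y_{S \cap T}$ and show $(v_S f)(x) = (v_S f)(y)$. The $T$-coordinates of $(x_S, X_{S^c})$ consist of $x_i$ for $i \in T \cap S$ and $X_i$ for $i \in T \cap S^c$, and similarly for $(y_S, X_{S^c})$ with $y_i$ replacing $x_i$. The hypothesis forces these $T$-coordinates to coincide for every realization of $X$. Since $f$ is $T$-determined, $f(x_S, X_{S^c}) = f(y_S, X_{S^c})$ pointwise, and taking expectations yields $(v_S f)(x) = (v_S f)(y)$, so $v_S f \in F_{S \cap T}$.

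For Part 2, assume $T \subseteq S$ and $f \in F_T$. Then the $T$-coordinates of $(x_S, X_{S^c})$ are entirely drawn from $x_S$, and in fact equal $x_T$, independently of $X$. So $(x_S, X_{S^c})$ and $x$ agree on $T$, and $T$-determinacy of $f$ gives $f(x_S, X_{S^c}) = f(x)$. The expectation defining $(v_S f)(x)$ is therefore the constant $f(x)$, proving $v_S f = f$. Neither part presents a substantive obstacle: the lemma is essentially the formal rendering of the intuition that marginalizing out coordinates in $S^c$ can only simplify, and never widen, the function's dependence on features. The only real care needed is the domain issue in the opening paragraph, which the equality $\supp(\mu_i) = \supp(\mu_i^*)$ handles cleanly.
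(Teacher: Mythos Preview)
Your proof is correct and follows essentially the same approach as the paper's: both arguments unfold the definition of $v_S$, fix two points agreeing on $S \cap T$, and use $T$-determinacy pointwise under the expectation for Part~1, then observe for Part~2 that $T \subseteq S$ forces the hybrid point to agree with $x$ on all $T$-coordinates. The domain-of-definition remark you include at the outset is handled by the paper separately (in the proof that $v_S$ is a well-defined operator on $F$), so it is extra care rather than a departure.
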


This lemma is a straightforward consequence of the definitions of value operators and determined function spaces. We defer a proof to Appendix \ref{app:this_is_the_last_proof_please}.

Lemma~\ref{lem:value_operators_awesome} shows that the operator $v_S$ essentially projects all determined function spaces into their image within $F_S$. Furthermore, it implies Properties 1 and 2 of Lemma~\ref{lemma:key_properties}. Property 1 holds since \textit{any} value operator must map $F_{[d] \setminus \{i\}}$ to itself meaning the same holds for any linear combination of value operators (such as $A_i$). Property 2 holds since the specific value operators that comprise $B_i$ all corresponds to subsets of $[d]\setminus \{i\}$, which means their images must all be constrained to $F_{[d] \setminus \{i\}}$. 

\subsection{The Shapley Lie Algebra}\label{sec:shapley_algebra}

We begin by defining the Shapley Lie algebra.

\begin{definition}[Shapley Lie Algebra]\label{defn:shapley_algebra}
The Shapley Lie algebra $\sg$ is the Lie algebra generated by $\{v_S: S \subseteq [d]\}$. That is, $\sg$ is the smallest set of linear operators containing all $v_S$ such that for all $v, w \in \sg$, 
\begin{enumerate}[noitemsep]
	\item $\forall a, b \in \C$, $a v + bw \in \sg$,
	\item $[v, w] = vw - wv$ is also in  $\sg$. 
\end{enumerate}
\end{definition}

The operation $[v, w]$ is called the \textit{derivation} of $v$ and $w$. Observe that $v, w$ commute if and only if $[v, w] = 0$. More broadly, $[v, w]$ can be thought of as representing the degree to which $v$ and $w$ commute. This idea is expressed through  \textit{solvability}, which can be thought of as a generalization of commutativity. We now state the main result of this section, that the Shapley Lie algebra is solvable. 

\begin{lemma}[Shapley Lie Algebra is Solvable]\label{lem:shap_lie_solvable}
The Shapley Lie algebra is solvable, meaning that the following holds:
\begin{enumerate}[noitemsep]
	\item For any Lie algebra, $\g$, its derivation $[\g, \g]$ is the Lie sub-algebra generated by $\{[v_1, v_2]: v_1, v_2 \in \g\}$.
	\item Define $\sg$'s derived series is the sequence $\sg^{(i)} = [\sg^{(i-1)}, \sg^{(i-1)}]$ with $\sg^{(0)} = \sg$.
	\item Then there exists $n$ such that $\sg^{(n)} = 0$. 
\end{enumerate}
\end{lemma}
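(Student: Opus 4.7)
The plan is to exploit a natural filtration of $F$ together with the way value operators act on its quotients. Set $G_k = \sum_{|T|\le k} F_T$, giving $0 \subseteq G_0 \subseteq G_1 \subseteq \cdots \subseteq G_d = F$, and for each $T \subseteq [d]$ write $F_T^{-} = \sum_{T' \subsetneq T} F_{T'}$, which is contained in $G_{|T|-1}$. Lemma~\ref{lem:value_operators_awesome} immediately gives $v_S(F_T) \subseteq F_{S\cap T} \subseteq F_T$, so every $v_S$ preserves each $F_T$ (and hence each $G_k$). More importantly, its induced action on the quotient $F_T/F_T^{-}$ is a scalar: by the eigenspace part of Lemma~\ref{lem:value_operators_awesome}, $v_S$ acts as the identity when $T \subseteq S$; otherwise $S\cap T \subsetneq T$, so $v_S(F_T) \subseteq F_{S\cap T} \subseteq F_T^{-}$ and $v_S$ acts as $0$.

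The next step is to bootstrap this scalar action from the generators to all of $\sg$. A direct modular computation shows that the operators that (i) preserve every $F_T$ and (ii) act as scalars on every $F_T/F_T^{-}$ form a subspace closed under linear combinations and under the commutator bracket; moreover, the commutator of any two such operators automatically acts as $0$ on each quotient, since scalars commute. Consequently every $v \in \sg$ has this scalar-action property, and every $v \in \sg^{(1)} = [\sg,\sg]$ sends each $F_T$ into $F_T^{-} \subseteq G_{|T|-1}$.

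From here I would introduce a depth filtration on $\mathrm{End}(F)$: call $u$ of depth $\ge k$ if $u(F_T) \subseteq G_{|T|-k}$ for every $T$, with the convention $G_m = 0$ for $m<0$. The previous paragraph gives $\sg^{(1)} \subseteq \{\mathrm{depth} \ge 1\}$. A one-line check shows depths combine under composition: if $u,w$ have depths $\ge k,\ell$, then $uw(F_T) \subseteq u(G_{|T|-\ell}) \subseteq G_{|T|-k-\ell}$, so $uw$ and hence $[u,w]$ has depth $\ge k+\ell$. Since linear combinations preserve depth and brackets only increase it, induction on $n \ge 1$ yields $\sg^{(n)} \subseteq \{\mathrm{depth} \ge 2^{n-1}\}$. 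Once $2^{n-1} > d$ we get $u(F) = u(G_d) \subseteq G_{d-2^{n-1}} = 0$ for every such $u$, so $\sg^{(n)} = 0$ and $\sg$ is solvable.

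The main obstacle is the bootstrapping step in the second paragraph: one must verify that both the \emph{preserves each $F_T$} property and the \emph{scalar on every $F_T/F_T^{-}$} property propagate through linear combinations and Lie brackets, with brackets moreover collapsing to the scalar~$0$. The first is immediate from closure of ``preserves $F_T$'' under both operations. The second requires choosing representatives of $F_T$ modulo $F_T^{-}$ and carefully tracking that the correction terms land back inside $F_T^{-}$ after applying operators that preserve the $F_T$ lattice. Once this is in place, the rest of the proof is a mechanical additivity-of-depths calculation, together with the observation $G_{d-k} = 0$ for $k > d$.
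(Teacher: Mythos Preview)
Your proof is correct and takes a genuinely different route from the paper's.

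The paper proceeds combinatorially: it isolates the set of \emph{pure operators} (Definition~\ref{defn:pure_operator}) and attaches to each $v$ in the derived algebra a subset $\alpha(v)\subseteq[d]$ (Lemma~\ref{lem:images_of_pure_operators}) satisfying $\alpha([v,w])\subseteq\alpha(v)\cap\alpha(w)$ and $[v,w]=0$ whenever $\alpha(v)=\alpha(w)$. These two facts force $|\alpha(\cdot)|$ to drop by at least one with each level of the derived series, so $\sg^{(d+1)}=0$.

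Your argument is instead the classical ``simultaneously triangular $\Rightarrow$ solvable'' mechanism, run directly on the filtration $G_k=\sum_{|T|\le k}F_T$. You observe (via Lemma~\ref{lem:value_operators_awesome}) that every generator $v_S$ preserves each $F_T$ and acts as a scalar on $F_T/F_T^{-}$; since the operators with this property form a Lie subalgebra, all of $\sg$ has it, and commutators act as $0$ on every quotient. Hence $\sg^{(1)}$ strictly lowers the filtration, and your depth bookkeeping shows $\sg^{(n)}$ lowers it by $2^{n-1}$, so $\sg^{(n)}=0$ once $2^{n-1}>d$. This is shorter, avoids the auxiliary notions of pure operators and the map $\alpha$, and in fact proves the stronger statement that $\sg^{(1)}$ is nilpotent, yielding a derived length of order $\log d$ rather than $d+1$. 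The paper's approach, on the other hand, retains finer information about exactly which determined subspace an operator lands in, though that extra information is not used elsewhere in the proof of this lemma. The bootstrapping step you flag as the main obstacle is indeed the only nontrivial verification, and your outline of it (induced maps on $F_T/F_T^{-}$ are well defined because each $F_{T'}$ with $T'\subsetneq T$ is preserved, and scalars commute) is exactly right.
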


The length of a Lie algebra's derived series serves as a measure of how ``far" the algebra is from being commutative. 

To prove Lemma~\ref{lem:shap_lie_solvable}, we begin by restricting our attention to elements of $\sg$ that can be constructed from value operators \textit{purely} through derivations, rather than through derivations and linear combinations. Such elements will prove easier to analyze using Lemma~\ref{lem:value_operators_awesome}. 

\begin{definition}[Pure operators]\label{defn:pure_operator}
The set of pure operators in $\sg$ is defined as the smallest subset $V \subseteq \sg$ that is closed under derivations and also contains all value operators. 
\end{definition}

The key idea to showing that $\sg$ is solvable is generalize Lemma~\ref{lem:value_operators_awesome} to apply to pure operators. 

\begin{lemma}[Images and Eigenspaces of Pure Operators]\label{lem:images_of_pure_operators}
Let $V \cap \sg^{(1)}$ denote all pure operators that are in the derived Lie sub-algebra of $\sg$. For all pure operators $v \in V \cap \sg^{(1)}$, there exists a subset $\alpha(v) \subseteq [d]$ such that the following hold.
\begin{enumerate}[noitemsep]
	\item \textbf{Image of $v$:} $v(F_T) \subseteq F_{\alpha(v) \cap T}$.
	\item \textbf{Eigenspace of $v$:} If $T \subseteq \alpha(v)$, then $vf = 0$ for all $f \in F_T$. 
	\item \textbf{Behavior with Derivations:} $\alpha([v, w]) \subseteq \alpha(v) \cap \alpha(w)$. 
\end{enumerate}
\end{lemma}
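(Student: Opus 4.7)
The plan is to define $\alpha$ by structural induction on the way a pure operator is built from value operators and brackets: set $\alpha(v_S) = S$ for each value operator, and $\alpha([v_1, v_2]) = \alpha(v_1) \cap \alpha(v_2)$ for brackets of pure operators. Although a given element of $V$ may admit several bracket expressions, the lemma only asserts existence of a valid $\alpha$, so it suffices to fix any one representation per element. Property~3 then holds by construction (in fact as equality).

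Property~1 will be proved by induction on bracket depth using Lemma~\ref{lem:value_operators_awesome}. The base case $v = v_S$ is exactly the image clause of that lemma. For the inductive step, I would write $v = [v_1, v_2]$ and, for $f \in F_T$, apply the inductive hypothesis to $v_2$ to get $v_2 f \in F_{\alpha(v_2) \cap T}$, then to $v_1$ on this space to get $v_1 v_2 f \in F_{\alpha(v_1) \cap \alpha(v_2) \cap T}$; symmetrically $v_2 v_1 f$ lies in the same space. Since $F_{\alpha(v) \cap T}$ is a vector space containing both terms, it contains their difference $[v_1, v_2] f$.

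Property~2 is the more delicate step, and here it is essential that $v$ is genuinely a bracket (i.e., $v \in V \cap \sg^{(1)}$): a raw value operator $v_S$ acts as the identity on $F_T$ for $T \subseteq S$ by Lemma~\ref{lem:value_operators_awesome}, so it does not vanish on its own. Writing $v = [v_1, v_2]$ and fixing $f \in F_T$ with $T \subseteq \alpha(v_1) \cap \alpha(v_2)$, three cases arise. If both $v_1 = v_{S_1}$ and $v_2 = v_{S_2}$ are value operators, Lemma~\ref{lem:value_operators_awesome} gives $v_1 f = v_2 f = f$, so $[v_1, v_2] f = v_1 f - v_2 f = 0$. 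If exactly one factor, say $v_1 = v_{S_1}$, is a value operator while $v_2$ is itself a bracket (hence in $V \cap \sg^{(1)}$), the inductive hypothesis applied to $v_2$ gives $v_2 f = 0$, so $v_1 v_2 f = 0$; meanwhile $v_1 f = f$ still lies in $F_T$ with $T \subseteq \alpha(v_2)$, so the same inductive hypothesis yields $v_2 v_1 f = v_2 f = 0$. If both factors are brackets, the inductive hypothesis applied to each directly yields $v_1 f = v_2 f = 0$, and both compositions vanish trivially. The main technical obstacle is precisely the bookkeeping in this mixed case: one factor acts nontrivially as the identity while the other vanishes, and one must verify that the composition still erases the output. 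The key observation that makes the induction close cleanly is that $v_1 f = f$ keeps the argument inside the same space $F_T$, so the inductive hypothesis for $v_2$ still applies.
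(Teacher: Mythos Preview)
Your inductive arguments for Properties~1 and~2 are correct and mirror the paper's reasoning: the paper collapses your three-case split into a single line by recording the eigenspace action of a pure operator $v$ on $F_T$ (for $T\subseteq\alpha(v)$) as multiplication by a scalar $\lambda_v$ equal to $1$ if $v\notin\sg^{(1)}$ and $0$ otherwise, so that $[v,w]f = \lambda_v\lambda_w f - \lambda_w\lambda_v f = 0$; the content is identical to your case analysis.

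There is, however, a real gap in your treatment of Property~3. You write that ``the lemma only asserts existence of a valid $\alpha$, so it suffices to fix any one representation per element,'' and conclude that Property~3 ``holds by construction (in fact as equality).'' This is incorrect because Property~3 is \emph{relational}: once $\alpha$ is fixed as a function on elements, the inclusion $\alpha([v,w]) \subseteq \alpha(v)\cap\alpha(w)$ must hold for \emph{every} pair $v,w$, not only for the particular pair appearing in your chosen representation of $[v,w]$. If the operator $[v,w]$ coincides with some other bracket $[v',w']$ and the latter is the representation you fixed, your definition gives $\alpha([v,w]) = \alpha(v')\cap\alpha(w')$, which need not lie inside $\alpha(v)\cap\alpha(w)$. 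The paper closes exactly this gap by defining $\alpha(v)$ as the intersection of \emph{all} subsets $S$ satisfying Properties~1 and~2 for $v$ (calling such $S$ ``nice'' and first checking, via Lemma~\ref{lem:det_space_intersection}, that nice subsets are closed under intersection). Your inductive work already shows that $\alpha(v)\cap\alpha(w)$ is nice for $[v,w]$; with the minimal-subset definition, Property~3 follows immediately because $\alpha([v,w])$ is then contained in every nice subset of $[v,w]$, in particular in $\alpha(v)\cap\alpha(w)$. So the repair costs one line, but as written your proposal does not deliver Property~3.
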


Observe that unlike Lemma~\ref{lem:value_operators_awesome}, the eigenspace here operates with an eigenvalue of $0$ rather than $1$. This results from the way derivations are a difference between two operators. As a sanity check, when $\sg$ is commutative, the lemma trivially holds with $\alpha(v) = \emptyset$ for all derived elements $v$.

The proof idea for Lemma~\ref{lem:images_of_pure_operators} is to partition the set of pure operators into levels based upon how many derivations are needed to construct a given element (the value operators are on the $0$th level). Then, beginning with Lemma~\ref{lem:value_operators_awesome}, apply induction on the level. We defer a proof to Appendix~\ref{proof:lem:images_of_pure_operators}. 

We now provide a proof sketch for Lemma~\ref{lem:shap_lie_solvable}.

\begin{proof}[Lemma \ref{lem:shap_lie_solvable} (Sketch); full proof in Appendix~\ref{proof:lem:shap_lie_solvable}]
The key idea is to characterize the way applying derivations to pure elements effects the associated subset $\alpha(v)$. By repeatedly applying Lemma~\ref{lem:images_of_pure_operators}, we can show $$\alpha([v, w]) \subseteq \alpha(v) \cap \alpha(w), \text{ and }\alpha(v) = \alpha(w) \implies [v, w] = 0.$$

Observe that together, these two statements imply that $\alpha([v, w])$ is strictly smaller in cardinality than $\max\left(|\alpha(v)|, |\alpha(w)|\right)$. From here it is relatively straightforward to see that applying $(d+1)$ successive derivations to any set of pure operators will result in $0$. To finish the result, we simply show (through checking definitions) that pure operators form bases (in the linear algebra sense) of all Lie sub-algebras in the sequence $\sg^{(0)}, \sg^{(1)}, \dots.$ Thus, since the pure operators contained in $\sg^{(d+1)}$ are $0$, it follows that $\sg^{(d+1)} = 0$ which implies solvability.
\end{proof}

We conclude this section by sketching a proof of Property~3 of Lemma~\ref{lemma:key_properties}. 

\begin{proof}[Property 3 of Lemma \ref{lemma:key_properties} (Sketch); full proof in Appendix \ref{app:lemma:key_properties_proof}] 
As before, assume that $F$, the space over which all our operators act, is finite dimensional. Extending our argument to the infinite dimensional case is handled in the appendix. It then follows by an application of Lie's theorem (Theorem \ref{theorem:LIE_THEOREM}) in conjunction with Lemma \ref{lem:shap_lie_solvable} that there exists a basis of $F$ over which all operators in $\sg$ are \textit{simultaneously upper triangular}. Thus, $A_i = \frac{1}{d}\sum_{S \subseteq [d] \setminus \{i\}} \binom{d-1}{|S|}^{-1} \upsilon_{S \cup \{i\}}$ can be written as a strictly positive sum of upper triangular matrices, and is itself upper triangular. Finally, since $v_{[d]}$ is included in this sum, and since $v_{[d]}$ \textit{is} the identity operator, it follows that $A_i$ has a strictly positive diagonal which implies that it is invertible.
\end{proof}

\section{Discussion}

The main goal of this work is to investigate the soundness of the widely used practice of aggregating SHAP values. We show that provided they are computed over the extended distribution, SHAP and KernelSHAP values can be used to soundly eliminate unimportant features. We stress that our results are not intended to suggest these algorithms as a first choice  for eliminating features -- there exist other better methods for doing so. We instead contend that our results guarantee soundness in settings where SHAP and KernelSHAP are being routinely applied --- provided practitioners adopt our modification to aggregate over the extended data support. In practice, this modification is straightforward to implement and does not require to change the SHAP packages' internal code. One  only has to replace a sample from $\mu$ with a sample from $\mu^*$, which can easily be achieved by scrambling data columns appropriately (see Algorithm \ref{alg:SHAP_extended_support}). 

Our techniques may also be useful for analyzing other properties of SHAP as well. As a testament to their versatility, they readily apply to both interventional SHAP values and KernelSHAP values, despite the latter lacking a formal connection to the former.

%%%%%%%%%%%%%%%%%%%%%%%%%%%%%%%%%%%%%%%%%%%%%%%%%%%%%%%%%%%%%%
% Acknowledgments---Will not appear in anonymized version
\acks{We thank Gunnar König for his helpful discussions on KernelSHAP and Sebastian Bordt for his feedback. 
This work has been supported by the German Research Foundation through the Cluster of Excellence “Machine Learning - New Perspectives for Science" (EXC 2064/1 number 390727645) and the Carl Zeiss Foundation through the CZS Center for AI and Law.
}

\bibliography{main.bib}
\newpage
\appendix

% \crefalias{section}{appendix} % uncomment if you are using cleveref
%%%%%%%%%%%%%%%%%%%%%%%%%%%%%%%%%%%%%%%%%%%%%%%%%%%%%%%%%%%%%%

%\section{Further Properties of Value Operators and Determined Function Spaces}
%
%In this section, we prove several additional properties that will prove useful for proving many of the lemmas from the main body. 
%
%\begin{lemma}[Derivations of Value Functions]\label{lem:derivation_value_function}
%For all $S, T \subseteq [d]$, $[v_S, v_T]$ is not a value operator.
%\end{lemma}
%
%\begin{proof}
%Assume towards a contradiction that $[v_S, v_T] \in V^{(0)}$ for some $S, T \subseteq [d]$. Then $[v_S, v_T] = v_U$ for some $U \subseteq V^{(0)}$. However, if $1 \in F$ denotes the constant function that outputs $1$ everywhere, then we see that $v_U 1 = 1$ (simply plug into Definition \ref{defn:value_function}) whereas $[v_S, v_T]1 = v_Sv_T1 - v_TV_S1 = 1- 1 = 0$. This gives a contradiction.
%\end{proof}

%\begin{appxlemma}\label{lemma:localized_subspace}
%    For all $f \in F$, there exists a finite-dimensional subspace $F_f %\subseteq F$ containing $f$, such that $A_i(F_f), B_i(F_f) \subseteq F_f$.
%\end{appxlemma}
\section{Proofs from Section \ref{sec:aggregate_characterization}}

\subsection{Proof of Theorem \ref{main_theorem}}\label{app:main_theorem_proof}

Recall that in our proof sketch, we treated $F$ as though it were finite dimensional. To handle the infinite case, we will apply Lemma~\ref{lemma:localized_subspace}, that allows us to restrict our attention to a finite dimensional subspace $F_f \subseteq F$. A statement and proof of Lemma~\ref{lemma:localized_subspace} can be found in Section~\ref{app:finite_dimensional_stuff} of the appendix. We are now prepared to prove Theorem~\ref{main_theorem}.\\

\begin{proof}[Theorem \ref{main_theorem}]
It suffices to show that for all $f \in F$ and $1 \leq i \leq d$, $\Phi_if = 0$ if and only if $f \in F_{[d] \setminus \{i\}}$. 

($\Rightarrow$) Suppose $\Phi_i f = 0$ which means that $A_if = B_if$. Property 2 of Lemma~\ref{lemma:key_properties} implies ${B_if \in F_{[d] \setminus \{i\}}}$, and thus $A_if \in F_{[d] \setminus \{i\}}$. Our main idea will be to use a dimension counting argument to show that because $A_i$ is invertible (Property 3 of Lemma~\ref{lemma:key_properties}) and because it preserves $F_{[d] \setminus \{i\}}$ (Property 1 of Lemma~\ref{lemma:key_properties}), $f$ must itself be inside $F_{[d] \setminus \{i\}}$. This would be immediate from the Rank-Nullity theorem if $F_{[d] \setminus \{i\}}$ was finite dimensional. Unfortunately this is not quite the case as spaces of functions are typically infinite dimensional.

To circumvent this issue, we use Lemma \ref{lemma:localized_subspace} which states that $F_f$ is both $A_i$ and $B_i$-invariant. Let $F_f$ be as defined in Definition~\ref{app:defn:localized_subspace} and let $W = F_f \cap F_{[d] \setminus \{i\}}$. Let $W'$ be the vector space generated by $W$ and $f$. Then by restricting $A_i$ to $W'$ and by using the fact that $F_f$ is $A_i$-invariant along with property 1 of Lemma~\ref{lemma:key_properties}, we have $A_i(W') \subseteq W$. This now precisely corresponds to the finite dimensional case, and the Rank-Nullity theorem again implies $W' = W$ which means $f \in W \subseteq F_{[d] \setminus \{i\}}$ as desired. 

($\Leftarrow$) Suppose $f \in F_{[d] \setminus \{i\}}$. By directly utilizing the definition of SHAP values (Definition~\ref{defn:shap_value}), for all ${x \in \supp(\mu^*)}$ we have
\begin{equation*}
\begin{split}
\phi_i(\mu, f, x) &= \frac{1}{d}\sum_{S \subseteq [d] \setminus \{i\}} \binom{d-1}{|S|}^{-1} \left(v_{S \cup \{i\}}(f, x) - v_S(f, x)\right). \\
&= \frac{1}{d}\sum_{S \subseteq [d] \setminus \{i\}} \binom{d-1}{|S|}^{-1} \left(\Ev_{X \sim \mu}[f\left(x_{S \cup \{i\}}, X_{S^c \setminus \{i\}}\right)] - \Ev_{X \sim \mu}[f\left(x_S, X_{S^c}\right)]\right) \\
&= \frac{1}{d}\sum_{S \subseteq [d] \setminus \{i\}} \binom{d-1}{|S|}^{-1} \Ev_{X \sim \mu}\left[f\left(x_{S}, x_{\{i\}}, X_{S^c \setminus \{i\}}\right) - f\left(x_S, X_{\{i\}}, X_{S^c \setminus \{i\} }\right)\right].
\end{split}
\end{equation*}
Observe that the difference above is taken over $f$ evaluated at two points that only differ in their $i$th coordinate. Since $f$ is $[d] \setminus \{i\}$-determined, this must equal $0$ which implies the result. 
\end{proof}

\subsection{Proof of Lemma \ref{defn:value_operator}}\label{proof:defn:value_operator}

\begin{proof}[Lemma \ref{defn:value_operator}]
We first show it is well defined. The only cause for concern is that $v_S(\mu, f, x)$ was defined over functions $f: \R^d \to \R$ rather than $f: \supp(\mu^*) \to \R$. To resolve this, we expand out $v_S(\mu, f, x)$ using the definition of a value function (Definition~\ref{defn:value_function}). We have,
\begin{equation*}
(v_Sf)(x) = v_S(\mu, f, x) = \Ev_{X \sim \mu} [f \left(x_S, X_{S^c}\right)].
\end{equation*}
For any $i$, $x_i$ and $X_i$ lie within the support of $\mu_i^*$ (Definition \ref{defn:extended_distribution}) and thus $(x_S, X_{S^c})_i$ does as well. Since this holds for all $i$, it follows $(x_S, X_{S^c})_i \in supp(\mu_i^*)$ which implies $f \left(x_S, X_{S^c}\right)$ is well defined. 

Finally, the fact that $v_S$ is linear is an immediate consequence of the linearity of the expectation which concludes the proof. 
\end{proof}

\subsection{Proof of Lemma \ref{lemma:key_properties}}\label{app:lemma:key_properties_proof}

Recall that our strategy is to utilize Lemma \ref{lem:value_operators_awesome} to prove Properties 1 and 2, and Lemma \ref{lem:shap_lie_solvable} along with Lie's theorem (Theorem~\ref{theorem:LIE_THEOREM}) to prove Property~3. To do so, we begin by first proving a key technical lemma that will enable us to not only derive Property~3, but also assist in proving Theorem~\ref{theorem:robust_distribution_bound} (given in Section~\ref{sec:robust_theorem}). 

\begin{appxlemma}[Eigenvalues of $A_i$]\label{appxlemma:eigenvalues_of_A_i}
Let $f \in F$ be a non-zero function such that $A_if = \lambda f$ for some $\lambda \in \C$. Then $\lambda$ is a positive real number with $\lambda \geq \frac{1}{d}$. 
\end{appxlemma}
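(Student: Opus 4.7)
The plan is to combine the solvability of the Shapley Lie algebra with Lie's theorem to exhibit a basis in which every value operator $\upsilon_S$ is simultaneously upper triangular, then read off the eigenvalues of $A_i$ from its diagonal. The main obstacle is making sure we can reduce to a finite-dimensional subspace (so that ``upper triangular'' and ``diagonal entries are eigenvalues'' make sense) and that the contributions from each $\upsilon_S$ to the diagonal of $A_i$ are non-negative, with a strictly positive lower bound coming from the $S = [d]\setminus\{i\}$ term.

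First, I would pass to a finite-dimensional setting. Since $A_i f = \lambda f$, the one-dimensional span of $f$ together with the finite-dimensional, $\sg$-invariant subspace $F_f$ provided by Lemma \ref{lemma:localized_subspace} contains $f$ and is closed under every $\upsilon_S$, hence under $A_i$. Restricting $\sg$ to $F_f$ yields a finite-dimensional solvable Lie algebra of operators (by Lemma \ref{lem:shap_lie_solvable}), so Lie's theorem (Theorem \ref{theorem:LIE_THEOREM}) supplies a basis of $F_f$ in which every $\upsilon_S$ is upper triangular. In particular $A_i$, being a linear combination of such operators, is upper triangular in this basis, so every eigenvalue of $A_i$ on $F_f$ (including $\lambda$) appears as a diagonal entry of its matrix.

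Second, I would identify the possible diagonal entries of each $\upsilon_S$. Directly from Definition \ref{defn:value_function}, $\upsilon_S^2 = \upsilon_S$ because $\upsilon_S f$ only depends on the $S$-coordinates and further averaging over the $S^c$-coordinates leaves it unchanged; hence $\upsilon_S$ is a projection whose eigenvalues are $0$ or $1$. In particular, in the Lie-theoretic basis above, every diagonal entry of $\upsilon_S$ lies in $\{0,1\}$. Moreover, $\upsilon_{[d]}$ is the identity operator: conditioning on all coordinates gives $(\upsilon_{[d]} f)(x) = f(x)$, so every diagonal entry of $\upsilon_{[d]}$ equals $1$.

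Finally, I would read off the diagonal of $A_i$. Expanding
\[
A_i \;=\; \frac{1}{d}\sum_{S \subseteq [d] \setminus \{i\}} \binom{d-1}{|S|}^{-1} \upsilon_{S \cup \{i\}},
\]
the term $S = [d]\setminus\{i\}$ contributes $\frac{1}{d}\upsilon_{[d]} = \frac{1}{d}\,\mathrm{Id}$, i.e., exactly $\frac{1}{d}$ to every diagonal entry. Every other term has a strictly positive coefficient and a diagonal entry in $\{0,1\}$, so it contributes non-negatively. Consequently every diagonal entry of $A_i$ is a real number $\geq \frac{1}{d}$, and therefore so is every eigenvalue of $A_i$ on $F_f$. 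Since $\lambda$ is one of these eigenvalues, the result follows.
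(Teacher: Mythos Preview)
Your proposal is correct and follows essentially the same argument as the paper: reduce to the finite-dimensional invariant subspace $F_f$, apply Lie's theorem to the solvable Shapley Lie algebra to simultaneously upper-triangularize the $\upsilon_S$, use idempotence of $\upsilon_S$ to get $\{0,1\}$ diagonals, and then observe that the $\upsilon_{[d]}$ term contributes $\tfrac{1}{d}$ to every diagonal entry of $A_i$. One small nit: the $\sg$-invariance of $F_f$ (closure under every $\upsilon_S$, not just $A_i,B_i$) is what you actually need, and in the paper that comes from Lemma~\ref{lem:definition_local_representation} rather than Lemma~\ref{lemma:localized_subspace}.
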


\begin{proof}[Lemma \ref{appxlemma:eigenvalues_of_A_i}]
Fix any $f \in F$ with $A_if = \lambda f$ for $\lambda \in \C$. By Lemma \ref{lem:definition_local_representation}, there exists a finite dimensional representation (see Definition \ref{defn:representation}) $(\rho_f, F_f)$ of $\sg$ such that
\begin{enumerate}
	\item $f \in F_f$,
	\item for all $v \in \sg$, $\rho_f(v): F_f \to F_f$ is the restriction of $v$ to $F_f$. In particular, this means $F_f$ is a $v$-invariant subspace for all $v \in \sg$.
\end{enumerate}

By Lemma \ref{lem:shap_lie_solvable}, $\sg$ is a solvable Lie algebra. Thus, we can apply Lie's theorem (Theorem \ref{theorem:LIE_THEOREM}), which implies that there exists a basis of $F_f$ over which all represented value operators $\rho_f(v_S)$ can be expressed as upper triangular matrices $M_S$. Furthermore, Lemma \ref{lem:value_operators_awesome} implies that for all $S \subseteq [d]$, $v_Sv_S = v_S$. Thus, all of the eigenvalues of $v_S$ are either $0$ or $1$, which implies diagonal elements of $M_S$ are either $0$ or $1$ as well. 

Since $A_i \in \sg$, we see that $A_i$ itself can be represented as the matrix $$M(A_i) = \frac{1}{d}\sum_{S \subseteq [d] \setminus \{i\}} \binom{d-1}{|S|}^{-1} M_{S \cup \{i\}}.$$ This matrix too is upper triangular, and its diagonal elements are all positive linear combinations of elements that are either $0$ or $1$. Thus, all diagonal elements of $M(A_i)$ are nonnegative real numbers. Furthermore, since $M_{[d]}$ itself is included in this sum, and since $v_{[d]}$ is simply the identity operator, it follows that \textit{every diagonal element} of $M(A_i)$ is at least $\frac{1}{d}\binom{d-1}{d-1}^{-1} = \frac{1}{d}$.

Finally, since $M(A_i)$ is upper triangular, its diagonal elements are precisely its eigenvalues. However, $f$ is included in $F_f$ and satisfies $A_if = \lambda f$. This implies $M(A_i)f = \lambda f$. Since $f$ is non-zero, $\lambda$ is an eigenvalue of $M(A_i)$. Thus $\lambda$ is a diagonal element of $M(A_i)$ and is a real number that is at least $\frac{1}{d}$, as desired. 
\end{proof}

We are now prepared to prove Lemma \ref{lemma:key_properties}.

\begin{proof}[Lemma~\ref{lemma:key_properties}]
Observe that $A_i$ and $B_i$ are both linear combinations of value functions, and thus elements of $\sg$. Thus, applying Property~1 of Lemma~\ref{lem:value_operators_awesome} implies that $A_i(F_S) \subseteq F_S$ for all $S$ (Property 1). Meanwhile, for all $S \subseteq [d] \setminus \{i\}$, the lemma similarly implies $$v_S(F) \subseteq F_{S \cap [d]} = F_S \subseteq F_{[d] \setminus \{i\}},$$ with the last inclusion holding since $S$-determined functions are clearly $[d]\setminus\{i\}$-determined as $S \subseteq [d]\setminus\{i\}$. Summing those over the definition of $B_i$ implies $B_i(F) \subseteq F_{[d] \setminus \{i\}}$ (Property 2). 

Finally, we prove Property 3. Fix $f \in F$ with $A_if = 0$. Then Lemma \ref{appxlemma:eigenvalues_of_A_i} implies that $f$ must be $0$ as otherwise $0$ would be an eigenvalue of $A_i$ that is smaller than $\frac{1}{d}$. This implies Property 3. 
\end{proof}

\subsection{Proof of Theorem \ref{theorem:robust_distribution_bound}}\label{proof:theorem:robust_distribution_bound}

We begin by precisely defining the operators that correspond to computing SHAP values over the \textit{entire} extended distribution (Definition~\ref{defn:extended_distribution}).

\begin{appdfn}[Value and SHAP Operators]\label{appdfn:operators}
Let $\mu$ be a distribution over $\R^d$ and $\mu^*$ its extended distribution. Let $F$ be the space of all measurable functions $supp(\mu^*) \to \C$. For $S \subseteq [d]$, we define the value operator $v_S^*$ as $$v_S^*f(x) = \Ev_{X \sim \mu^*}\left[f\left(x_S, X_{S^c}\right)\right].$$ We then define the SHAP operator $\Phi_i^*$ for $1 \leq i \leq d$ with  $$\Phi_i^*f = A_i^*f - B_i^*f = \frac{1}{d}\sum_{S \subseteq [d] \setminus \{i\}} \binom{d-1}{|S|}^{-1} v_{S \cup \{i\}}^*f - \binom{d-1}{|S|}^{-1} v_{S}^*f.$$
\end{appdfn}

The definitions of $v_S^*$, $A_i^*$, $B_i^*$ and $\Phi_i^*$ all directly correspond to Definitions \ref{defn:value_operator} and \ref{defn:shapley_operator}. More generally, we will use the $*$ notation to denote the analog of quantity when $\mu$ is replaced by $\mu^*$. The only thing to note is that the spaces of determined functions (Definition \ref{defn:det_func_vec_space}) we operate on, $F_S: S \subseteq [d]$, \textit{remain unchanged.} This is because the extended support of $\mu^*$ is simply itself, as $(\mu^*)^* = \mu^*$. This is clear from the definition of $\mu^*$. 

To avoid confusion, \textbf{we will use the $*$ notation in all cases \textit{except} determined function spaces, where we will omit the $*$.}

Next, we turn our attention towards the main business of this section which is proving Theorem~\ref{theorem:robust_distribution_bound}. To this end, we define an inner product over $F$ based on $\mu^*$. 

\begin{appdfn}[Inner product with $\mu^*$]\label{appdfn:inner_product}
For $f, g \in F$, we define $$\langle f, g \rangle = \int_{\supp(\mu^*)} \overline{f(x)}g(x) d\mu^*(x),$$ where $\mu^*$ denotes the extended distribution of $\mu$. 
\end{appdfn}

Here, $\overline{z}$ denotes the complex conjugate of $z \in \C$. It can be easily verified that this is a well-defined inner product that makes $(F, \langle -,-\rangle)$ a Hilbert space. Next, for $S \subseteq [d]$, recall that $v_S^*$ denotes the value operator taken with respect to $\mu^*$. That is, $$v_S^*f(x) = \Ev_{X \sim \mu^*}\left[f\left(x_S, X_{S^c} \right) \right].$$ The key idea for eventually proving Theorem~\ref{theorem:robust_distribution_bound} is to show that $v_S^*$ is Hermitian:

\begin{appxlemma}[Value operators over $\mu^*$ are Hermitian]\label{appxlem:hermitian}
For all $S \subseteq [d]$, $v_S^*$ is Hermitian with respect to the inner product given in Definition \ref{appdfn:inner_product}. That is, for all $f, g \in F$, $$\langle v_S^*f, g \rangle = \langle f,  v_S^* g \rangle.$$ 
\end{appxlemma}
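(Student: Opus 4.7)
The plan is to unfold both inner products using the definition of $v_S^*$ and exploit the product structure of $\mu^*$ to exhibit the symmetry between $f$ and $g$ directly. The crucial fact is that under $\mu^*$, the coordinates $X_S$ and $X_{S^c}$ are independent, which will let me rewrite $v_S^* g(x)$ as an integral only over the $S^c$-marginal $\mu_{S^c}^* := \prod_{i \in S^c} \mu_i^*$, and analogously factor $d\mu^*(x) = d\mu_S^*(x_S)\, d\mu_{S^c}^*(x_{S^c})$.

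First I would write out, for arbitrary $f,g \in F$, the quantity $\langle v_S^* f, g\rangle$ as a double integral
\begin{equation*}
\langle v_S^* f, g\rangle \;=\; \int \overline{f(x_S, y_{S^c})}\, g(x_S, x_{S^c})\; d\mu_{S^c}^*(y_{S^c})\, d\mu_S^*(x_S)\, d\mu_{S^c}^*(x_{S^c}),
\end{equation*}
using that the inner $\mu^*$-expectation defining $v_S^* f$ only involves the $S^c$-coordinates of the sampling variable (the $S$-coordinates are replaced by $x_S$), together with Fubini's theorem which is applicable because the integrand is bounded on $\supp(\mu^*)$ (recall $f, g$ take values in $\C$ but for our eventual application $f \in [0,1]$, and more generally we can first verify for bounded $f, g$ and extend).

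The key observation is that in this triple integral, the variables $x_{S^c}$ and $y_{S^c}$ play symmetric roles: they are both integrated against the same measure $\mu_{S^c}^*$, and swapping their labels is merely a change of dummy variables. After the swap, the expression reads
\begin{equation*}
\int \overline{f(x_S, x_{S^c})}\, g(x_S, y_{S^c})\; d\mu_{S^c}^*(x_{S^c})\, d\mu_S^*(x_S)\, d\mu_{S^c}^*(y_{S^c}),
\end{equation*}
and by Fubini I can pull the $y_{S^c}$-integral inside to collapse it onto $g$, producing $\int g(x_S, y_{S^c})\, d\mu_{S^c}^*(y_{S^c}) = (v_S^* g)(x_S, x_{S^c})$ since this expression depends only on $x_S$. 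Reassembling $d\mu_S^*(x_S)\, d\mu_{S^c}^*(x_{S^c}) = d\mu^*(x)$ then yields exactly $\langle f, v_S^* g\rangle$.

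I do not expect a serious obstacle here; the proof is essentially a Fubini calculation where the product structure of $\mu^*$ does all the work. The one subtle point to handle cleanly is justifying that $v_S^* g$ really equals the partial integral $\int g(\cdot_S, y_{S^c})\, d\mu_{S^c}^*(y_{S^c})$, which requires using that under $\mu^*$ the marginal of $X_{S^c}$ is precisely $\mu_{S^c}^*$ (independent of $X_S$). This is immediate from the definition of the extended distribution as a product, so the whole argument is clean once the measurability/integrability bookkeeping is in place.
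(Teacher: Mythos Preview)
Your proposal is correct and follows essentially the same approach as the paper: both arguments factor $\mu^* = \mu_S^* \times \mu_{S^c}^*$, expand $\langle v_S^* f, g\rangle$ into a triple integral over $(x_S, x_{S^c}, y_{S^c})$, and then use Fubini (equivalently, the symmetry between the two $\mu_{S^c}^*$-variables) to regroup the inner integral onto $g$ and obtain $\langle f, v_S^* g\rangle$. Your explicit remark that the inner $\mu^*$-expectation defining $v_S^*$ reduces to an integral over $\mu_{S^c}^*$ alone is exactly the ``subtle point'' the paper handles implicitly when passing from its second to third displayed line.
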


\begin{proof}[Lemma \ref{appxlem:hermitian}]
Our main idea is to exploit the fact that $\mu^*$ is a product of $d$ independent distributions, $\mu_1^*, \dots, \mu_d^*$. For any $T \subseteq [d]$, let $\mu_T^* = \prod_{i \in T} \mu_i^*$. It follows that $\mu^* = \mu_T^* \times \mu_{T^c}^*$. To help simplify notation, for $p \in \supp(\mu_T^*)$ and $q \in \supp(\mu_{T^c}^*)$, we let $f(p, q)$ denote $f(x)$ where $x_T = p$ and $x_{T^c} = q$. 

Applying this along with the definition of value operators, we see that
\begin{equation*}
\begin{split}
\langle v_S^* f, g \rangle 
&= \int_{supp(\mu^*)} \overline{(v_S^*f)(x)}g(x) d\mu^*(x) \\
&= \int_{\supp(\mu^*)} \left(\int_{supp(\mu^*)} \overline{f\left(x_S, X_{S^c}\right)}d\mu^*(X) \right)g(x) d\mu^*(x) \\
&= \int_{\supp(\mu_S^*) \times \supp(\mu_{S^c}^*)}\left(\int_{\supp(\mu_{S^c}^*)}\overline{f(p, q)}d\mu_{S^c}^*(q)  \right) g(p, r)d\mu_S^*(p)d\mu_{S^c}^*(r) \\
&= \int_{supp(\mu_S^*) \times \supp(\mu_{S^c}^*) \times \supp(\mu_{S^c}^*)} \overline{f(p, q)}g(p, r) d\mu_S^*(p) d\mu_{S^c}^*(q) d\mu_{S^c}^*(r) \\
&= \int_{supp(\mu_S^*) \times \supp(\mu_{S^c}^*)} \left(\int_{\supp(\mu_{S^c}^*)} g(p, r)d\mu_{S^c}^*(r)\right)\overline{f(p, q)} d\mu_S^*(p) d\mu_{S^c}^*(q) \\
&= \int_{\supp(\mu^*)} \left(\int_{supp(\mu^*)} g\left(x_S, X_{S^c}\right)d\mu^*(X) \right)\overline{f(x)} d\mu^*(x) \\
&= \int_{supp(\mu^*)} (v_S^*g)(x) \overline{f(x)} d\mu^*(x) 
= \langle f, v_S^*g \rangle
\end{split}
\end{equation*}

Basically, expanding out the inner product gives an integral over 2 sets of variables, one drawn from $\mu^*$ corresponding to the expectation, and another drawn from $\mu_{S^c}^*$ corresponding to the value operator. Due to the independent nature of these variables, they can be freely reordered resulting in the manipulation above. 
\end{proof}

Next, we show how to relate $\overline{\phi_i}(\mu^*, f)$, which is related to the absolute value of $f$, to the norm, $\langle \Phi_i^*f, \Phi_i^*f \rangle$.

\begin{appxlemma}[Bounding Norm with Aggregate SHAP]\label{appxlemma:l_1_to_l_2}
Let $f \in F$ be a function such that $f(x) \in [0, 1]$ for all $x \in supp(\mu^*)$. Then for all $1 \leq i \leq d$, $\langle \Phi_i^*f, \Phi_i^*f \rangle \leq \overline{\phi_i}(\mu^*, f).$
\end{appxlemma}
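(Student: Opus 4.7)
The plan is to reduce the statement to a pointwise bound on the Shapley values. By construction of $\Phi_i^*$ (Definition \ref{appdfn:operators}), $(\Phi_i^* f)(x) = \phi_i(\mu^*, f, x)$. Unwinding the inner product from Definition \ref{appdfn:inner_product} gives
\[
\langle \Phi_i^* f, \Phi_i^* f \rangle = \int_{\supp(\mu^*)} |\phi_i(\mu^*, f, x)|^2 \, d\mu^*(x),
\]
while by definition $\overline{\phi_i}(\mu^*, f) = \int_{\supp(\mu^*)} |\phi_i(\mu^*, f, x)| \, d\mu^*(x)$. So the inequality is an upgrade from $L^1$ to $L^2$, and it will suffice to show that $|\phi_i(\mu^*, f, x)| \le 1$ for every $x \in \supp(\mu^*)$: once this pointwise bound is in hand, $t^2 \le t$ for $t \in [0,1]$ applied to $t = |\phi_i(\mu^*, f, x)|$ and integrated against $\mu^*$ yields the claim.

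For the pointwise bound, I would first note that since $f$ takes values in $[0,1]$, so does each value function, because $v_S^*(\mu^*, f, x) = \mathbb{E}_{X \sim \mu^*}[f(x_S, X_{S^c})]$ is an expectation of a $[0,1]$-valued random variable. Consequently, $|v_{S\cup\{i\}}^*(\mu^*, f, x) - v_S^*(\mu^*, f, x)| \le 1$ for every $S \subseteq [d]\setminus\{i\}$. Then I would apply the triangle inequality to the definition of $\phi_i(\mu^*, f, x)$ and observe that the Shapley weights sum to one:
\[
\frac{1}{d} \sum_{S \subseteq [d] \setminus \{i\}} \binom{d-1}{|S|}^{-1} = \frac{1}{d} \sum_{k=0}^{d-1} \binom{d-1}{k}\binom{d-1}{k}^{-1} = 1,
\]
which gives $|\phi_i(\mu^*, f, x)| \le 1$ as desired.

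There is no real obstacle here; the argument is essentially a computation plus the standard observation that interventional SHAP values inherit the range of $f$ when the weights are normalized. The only point that requires a sentence of care is that we are working with $\mu^*$ rather than $\mu$, but this plays no role in the bounds since $f$ is still in $[0,1]$ on the relevant domain $\supp(\mu^*)$, where $v_S^*$ is defined.
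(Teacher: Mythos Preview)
Your proposal is correct and follows essentially the same approach as the paper: both reduce the inequality to the pointwise bound $|\Phi_i^* f(x)| \le 1$ and then use $t^2 \le t$ for $t \in [0,1]$. The only cosmetic difference is that the paper bounds $A_i^* f$ and $B_i^* f$ separately in $[0,1]$ (so their difference lies in $[-1,1]$), whereas you bound each increment $|v_{S\cup\{i\}}^* - v_S^*| \le 1$ and apply the triangle inequality; both routes hinge on the same computation that the Shapley weights sum to one.
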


\begin{proof}[Lemma \ref{appxlemma:l_1_to_l_2}]
Recall that $\Phi_i^*$ is an operator that maps $F$ to itself. We begin by bounding the range of $\Phi_i^*f$. To do so, for any $S \subseteq [d]$, observe that for all $x \in supp(\mu^*)$,
\begin{equation*}
(v_S^*f)(x) = \Ev_{X \sim \mu^*} \left[f \left(x_S, X_{S^c} \right)\right] \in [0, 1],
\end{equation*}
Since everything within the expectation is an application of $f$ which has range in $[0, 1]$. It immediately follows that $A_i^*f, B_i^*f \geq 0$, as $A_i^*, B_i^*$ are both positive linear combinations of value operators (Definition \ref{appdfn:operators}). To get upper bounds on the range of these functions, we see that
\begin{equation*}
\begin{split}
(A_i^*f)(x) &= \frac{1}{d}\sum_{S \subseteq [d] \setminus \{i\}} \binom{d-1}{|S|}^{-1} (v_{S \cup \{i\}}^*f)(x) \\
&\leq \frac{1}{d}\sum_{S \subseteq [d] \setminus \{i\}} \binom{d-1}{|S|}^{-1} \\
&= \frac{1}{d} \sum_{j=0}^{d-1} \binom{d-1}{j}^{-1} \sum_{S \subseteq [d] \setminus \{i\}, |S| = j} 1 \\
&= \frac{1}{d} \sum_{j=0}^{d-1} \binom{d-1}{j}^{-1} \binom{d-1}{j} \\
&= 1. 
\end{split}
\end{equation*}
An analogous argument shows $(B_i^*f)(x) \leq 1$. It follows that $(\Phi_i^*f)(x) = (A_i^*f)(x) - (B_i^*f)(x)$ must be an element in $[-1, 1]$. Substituting this, we find that 
\begin{equation*}
\begin{split}
\overline{\phi_i}(\mu^*, f) &= \int_{supp(\mu^*)} |(\Phi_i^*f)(x)|d\mu^*(x) \\
&\geq \int_{supp(\mu^*)} |(\Phi_i^*f)(x)|^2 d\mu^*(x) \\
&= \langle \Phi_i^*f, \Phi_i^*f \rangle.
\end{split}
\end{equation*}
\end{proof}

We are now prepared to prove Theorem \ref{theorem:robust_distribution_bound}.\\

\begin{proof}[Theorem \ref{theorem:robust_distribution_bound}]

let $F_f^*$ be the finite dimensional subspace defined in Lemma \ref{lemma:localized_subspace} that corresponds to $\mu^*$ (the subspace in the lemma was defined for an arbitrary measure $\mu$). Let $W = F_f^* \cap F_{[d] \setminus \{i\}}$. Then Lemmas \ref{lemma:localized_subspace} and  \ref{lemma:key_properties} imply that
\begin{enumerate}
	\item $A_i^*(F_f^*) \subseteq F_f^*$,
	\item $A_i^*(W) \subseteq W$,
	\item $B_i^*(F_f^*) \subseteq W$,
	\item $(A_i^*)^{-1}\left(\{0\}\right) = \{0\}$. 
\end{enumerate}
Let $a_i^*$ and $b_i^*$ denote the restrictions of $A_i^*$ and $B_i^*$ to $F_f^*$. It follows that these too are well defined operators that map $F_f^* \to F_f^*$, and also satisfy $$(a_i^* - b_i^*)(h) = (A_i^* - B_i^*)(h) = \Phi_i^*h,$$ for all $h \in F_f^*$. Since $F_f^*$ is finite dimensional, it follows that $a_i^*$ has an inverse, $(a_i^*)^{-1}$. 

Next, Lemma \ref{appxlem:hermitian} implies that every value operators, $v_S^*$ is Hermitian, which implies that $A_i^*$ must be as well (as it is a linear combination of value operators).  Since $W \subseteq F_f^* \subseteq F$, they inherit the inner product structure from $F$, and it follows that $a_i^*$ and $(a_i^*)^{-1}$ are Hermitian as well. Since $A_i^*$ has real eigenvalues that are all at least $\frac{1}{d}$ (Lemma \ref{appxlemma:eigenvalues_of_A_i}), it follows that $(a_i^*)^{-1}$ has maximum eigenvalue at most $d$. It follows by standard linear algebra that for all $h \in F_f^*$, 
\begin{equation}\label{eqn:eigenvector_bounded_thing_hermitian}
\langle (a_i^*)^{-1}h, (a_i^*)^{-1}h \rangle \leq d^2 \langle h, h \rangle.
\end{equation}
We are finally ready to prove Theorem \ref{theorem:robust_distribution_bound}. We claim $g = (a_i^*)^{-1} b_i^*f$ suffices. Observe that this is well defined as $f \in F_f^*$ (Lemma \ref{lemma:localized_subspace}) and $a_i^*, (a_i^*)^{-1},$ and $b_i^*$ are all well defined over this space.

To show that $g$ suffices, we must show that $g \in F_{[d] \setminus \{i\}}$ and that $\langle f-g, f-g \rangle \leq d\epsilon$. For the first claim, we apply the 4 properties that we derived at the beginning of this proof. First, $b_i^*f = B_i^*f \in W$ by Property~3. Second, Properties $2$ and $4$ imply that $W$ is an $a_i^*$ invariant subspace. Since $W$ is finite dimensional and since $a_i^*$ is invertible, it follows that $W$ is also $(a_i^*)^{-1}$ invariant. Thus $(a_i^*)^{-1}b_i^*f \in W$. This implies $g \in W \subseteq F_{[d] \setminus \{i\}}$, as desired. 

For the second claim, we use the fact that $\overline{\phi_i}(\mu^*, f) \leq \epsilon$. By Lemma~\ref{appxlemma:l_1_to_l_2}, this implies that $\langle \Phi_i^* f, \Phi_i^* f \rangle \leq \epsilon$. Using this, we see that
\begin{equation*}
\begin{split}
\int_{\R^d} \left(f(x) - g(x)\right)^2 d\mu^*(x) &= \langle f - g, f- g \rangle \\
&= \langle f - (a_i^*)^{-1}b_i^*f, f - (a_i^*)^{-1}b_i^*f \rangle \\
&= \langle (a_i^*)^{-1} (a_i^*f - b_i^*f), (a_i^*)^{-1} (a_i^*f - b_i^*f) \rangle\\
&\leq d^2 \langle a_i^*f - b_i^*f, a_i^*f - b_i^*f \rangle \\
% &\leq d^2 \left\langle a_i^*\left(f - (a_i^*)^{-1}b_i^*f\right), a_i^*\left(f - (a_i^*)^{-1}b_i^*f\right) \right \rangle \\
% &= d^2 \langle a_i^*f - b_i^*f, a_i^*f - b_i^*f \rangle \\
&= d^2\langle \Phi_i^*f, \Phi_i^*f \rangle \leq d^2 \epsilon.
\end{split}
\end{equation*}
Here we simply substitute Equation \ref{eqn:eigenvector_bounded_thing_hermitian} to simplify the inner product. This completes the proof. 
\end{proof}

\section{Proofs from Section \ref{section:kernel_shap}}

\subsection{Preliminaries on KernelSHAP}\label{app:kernel_shap_definition}

The main idea of KernelSHAP~\citep{LundbergLee2017} is to use linear regression to \textit{approximate} the value function. To construct KernelSHAP values at a point $x$, KernelSHAP learns a weight vector, $\kr^x \in \R^d$, such that for all $S \subseteq [d]$, 
\begin{equation}\label{eqn:key_linear_regression}
v_S(\mu, f, x) \approx \langle \kr^x, \one_S \rangle + \Ev_{X \sim \mu} [f(X)],
\end{equation} 
where $\one_S$ is an indicator vector for $S$ with $(\one_S)_i = \ind(i \in S)$. The key observation is that if this approximation was a precise equality, then 
\begin{equation*}
\begin{split}
\phi_i(\mu, f, x) &= \frac{1}{d}\sum_{S \subseteq [d] \setminus \{i\}}\binom{d-1}{|S|}^{-1}\left( v_{S \cup \{i\}}(\mu, f, x) - v_{S}(\mu, f, x)\right) \\
&= \frac{1}{d}\sum_{S \subseteq [d] \setminus \{i\}}\binom{d-1}{|S|}^{-1}\left( \langle \kr^x, \one_{S \cup \{i\}} \rangle + \Ev_{X \sim \mu}[f(x)] -\langle \kr^x, \one_{S} \rangle - \Ev_{X \sim \mu}[f(x)]\right) \\
&= \frac{1}{d}\sum_{S \subseteq [d] \setminus \{i\}}\binom{d-1}{|S|}^{-1} \langle \kr^x, \one_{S \cup \{i\}} - \one_S \rangle \\
&= \frac{1}{d}\sum_{S \subseteq [d] \setminus \{i\}}\binom{d-1}{|S|}^{-1} \kr_i^x = \kr_i^x.
\end{split}
\end{equation*}
For this reason, KernelSHAP simply outputs the coefficients of the linear regression as its approximations to the SHAP values. 

KernelSHAP solves the linear regression suggested in Equation \ref{eqn:key_linear_regression} by using OLS as follows. Let $X = \{x^{(1)}, \dots, x^{(n)}\} \sim \mu^n$ be an i.i.d sample from $\mu$, $Z = \left\{S_1, \dots, S_n\right\} \sim \pi^n$ be an i.i.d sample of points from a probability distribution $\pi$ over $[d]$ defined by 
\begin{equation}\label{eqn:pi_is_defined}
\pi\left(S\right) \propto \begin{cases} \frac{d-1}{\binom{d}{|S|}|S|\left(d -|S|\right)} & 0 < |S| < d \\ 0 & \text{otherwise}\end{cases}.
\end{equation}
Here, $Z$ is a set of i.i.d subsets from $[d]$ that are weighted in likelihood according to how prevalent they are in the weighting scheme used in the definition of SHAP. Note that the weights for sets of size $d$ and $0$ is $0$ -- this is because the values of $\langle \kr^x, \one_{[d]} \rangle$ and $\langle \kr^x, \one_\emptyset \rangle$ are enforced directly using constraints as their corresponding value functions can be more precisely estimated. It is for this reason that the weights given by $\pi_S$ do not \textit{exactly} match the weights that appear in Definition \ref{defn:shap_value}. For a more extended discussion of this choice, see \cite{LundbergLee2017}.

Putting it all together, KernelSHAP solves the following OLS regression. For $x \in \R^d$, $f: \R^d \to \R$, and $X, Z$ sampled as above, 
\begin{equation}\label{eqn:kernel_shap_optimization}
\begin{split}
\left(\kr_1(X, f, x), \dots, \kr_d(X, f, x)\right) = \argmin_{\kr^x \in \R^d} \frac{1}{n} &\sum_{j=1}^n \left( f\left(x_{S_j}, x_{S_j^c}^{(j)}\right) - \left\langle \kr^x, \one_{S_j} \right\rangle - \overline{f}\right)^2 \\
\text{such that} \quad &\overline{f} = \frac{1}{n}\sum_{j=1}^n f\left(x^{(j)}\right) \\
\quad &\langle \one_{[d]}, \kr^x \rangle = f(x) - \overline{f}.
\end{split}
\end{equation} 

In their analysis of KernelSHAP, \cite{CovertLee21} provide explicits for $\kr_i(X, f, x)$ and $\kr_i(\mu, f, x)$, where the latter is a limit object that the former converges towards in the large sample limit (see Definition~\ref{defn:kern_limit_object}). Both of these expressions will be extremely useful in our analysis, and we include them here.

\begin{lemma}[Solution to KernelSHAP: Equation 7 of \cite{CovertLee21}]\label{defn:kern_implementation_object}
Let $x \in \R^d$ be a point, $f: \R^d \to \R$ be a function, $X = \{x^{(1)}, \dots, x^{(n)}\}$ a dataset, and $Z = \{S_1, \dots, S_n\} \sim \pi^n$ be a set of subsets drawn according to $\pi$ (Equation \ref{eqn:pi_is_defined}). Then the solution to Equation \ref{eqn:kernel_shap_optimization} can be written as follows. Let $M_n \in \R^{d \times d}$ and $b_n \in \R^d$ be defined as $$M_n = \frac{1}{n}\sum_{j=1}^n \one_{S_j}\one_{S_j}^t \text{ and }b_n = \frac{1}{n}\sum_{j=1}^n \one_{S_j}\left(f\left(x_{S_j}, x_{S_j^c}^{(j)}\right) - \overline{f}\right).$$ Then the vector $\kr(X, f, x) = \left(\kr_1(X, f, x), \dots, \kr_d(X, f, x)\right)$ is equal to $$\kr(X, f, x) = M_n^{-1}\left(b_n - \one \frac{\one^tM_n^{-1}b_n - f(x) + \overline{f}}{\one^tM_n^{-1}\one} \right),$$ where $\one$ is shorthand for the all ones vector, $\one_{[d]} \in \R^d$. 
\end{lemma}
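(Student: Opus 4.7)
The plan is to derive the closed form as the KKT solution of the constrained convex quadratic program in Equation \ref{eqn:kernel_shap_optimization}. First I would introduce the shorthand $y_j = f(x_{S_j}, x_{S_j^c}^{(j)}) - \overline{f}$ so that the OLS objective becomes $L(\kr^x) = \frac{1}{n}\sum_{j=1}^n (y_j - \langle \one_{S_j}, \kr^x\rangle)^2$. Expanding the square and collecting terms in $\kr^x$ shows that $\nabla_{\kr^x} L = -2(b_n - M_n \kr^x)$, where $M_n$ and $b_n$ are exactly as defined in the lemma. Thus the unconstrained first-order condition is the normal equation $M_n \kr^x = b_n$.

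Next I would incorporate the linear equality constraint $\langle \one, \kr^x\rangle = f(x) - \overline{f}$ using a single Lagrange multiplier $\lambda$, forming $\mathcal{L}(\kr^x, \lambda) = L(\kr^x) + \lambda\bigl(\langle \one, \kr^x\rangle - f(x) + \overline{f}\bigr)$. Because $L$ is convex (its Hessian $2M_n$ is positive semi-definite) and the constraint is affine, the KKT conditions are both necessary and sufficient for optimality. Setting $\nabla_{\kr^x} \mathcal{L} = 0$ yields $M_n \kr^x = b_n - \tfrac{\lambda}{2}\one$, and assuming $M_n$ is invertible this rearranges to $\kr^x = M_n^{-1}b_n - \tfrac{\lambda}{2} M_n^{-1}\one$, reducing the problem to a single scalar unknown.

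To finish I would plug this expression into the constraint to obtain the scalar equation $\one^t M_n^{-1}b_n - \tfrac{\lambda}{2}\one^t M_n^{-1}\one = f(x) - \overline{f}$, solve $\tfrac{\lambda}{2} = \frac{\one^t M_n^{-1}b_n - f(x) + \overline{f}}{\one^t M_n^{-1}\one}$, and substitute back. This immediately yields the claimed form $\kr(X, f, x) = M_n^{-1}\left(b_n - \one \frac{\one^t M_n^{-1}b_n - f(x) + \overline{f}}{\one^t M_n^{-1}\one}\right)$.

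The one subtlety to watch for is the invertibility of $M_n$ (and, along with it, the nonvanishing of $\one^t M_n^{-1}\one$). Since $M_n$ is a sum of rank-one positive semi-definite matrices $\one_{S_j}\one_{S_j}^t$, it is invertible precisely when the sampled indicator vectors span $\R^d$; because the sampling distribution $\pi$ assigns positive mass to every proper nonempty subset of $[d]$, this occurs with probability one for sufficiently large $n$, and we can treat it as a tacit regularity condition of the lemma. Once $M_n$ is positive definite, $\one^t M_n^{-1}\one > 0$ is automatic, so the division step is safe and no other obstacles arise; the rest is a routine constrained least-squares computation.
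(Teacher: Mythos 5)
Your derivation is correct: the gradient computation, the single Lagrange multiplier for the affine constraint, the back-substitution into $\langle\one,\kr^x\rangle=f(x)-\overline{f}$, and the resulting closed form all check out, and your remark on the invertibility of $M_n$ (singleton subsets have positive mass under $\pi$, so the $\one_{S_j}$ span $\R^d$ with high probability) is the right regularity caveat. The paper itself gives no proof of this lemma --- it imports the formula verbatim from Covert and Lee (2021, Equation 7) --- and your argument is exactly the standard equality-constrained least-squares computation that underlies that cited result.
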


\begin{definition}[Limit Object of KernelSHAP: Equation 8 of \citet{CovertLee21}]\label{defn:kern_limit_object}
Let $x \in \R^d$ be a point. Let $\mu$ be a distribution over $\R^d$ and $f: \R^d \to \R$ a function. Let $M \in \R^{d \times d}$ and $b \in \R^d$ be defined as $$M = \Ev_{S \sim \pi} \left[\one_S\one_S^t\right] \text{ and }b = \Ev_{S \sim \pi} \left[\one_{S}\left(v_S(\mu, f, x) - v_{\emptyset}(\mu, f, x)\right)\right].$$ For any $x \in \R^d$, the limit object of KernelSHAP, $\kr(\mu, f, x) = \left(\kr_1(\mu, f, x), \dots, \kr_d(\mu, f, x)\right)$, is defined as: $$\kr(\mu, f, x) = M^{-1}\left(b - \one \frac{\one^tM^{-1}b - f(x) + v_\emptyset(\mu, f, x)}{\one^tM^{-1}\one} \right).$$
\end{definition}

We now define the error term that represents how quickly KernelSHAP converges. For our purposes, we are interested in the behavior of KernelSHAP when it is run on $X^*$, which is the data matrix obtained by scrambling the columns of $X$ (see Section \ref{section:kernel_shap}).

\begin{definition}[Error Term]\label{defn:error_term}
We define the error term between the empirical computation of KernelSHAP and its limit object as follows: $$\eta(X^*, \mu^*, f) = \max_{1 \leq i \leq d}\left|\left(\frac{1}{n}\sum_{x^{(j)} \in X^*} |\kr_i(X^*, f, x^{(j)})|\right) - \Ev_{x \sim \mu^*} |\kr_i(\mu, f, x)|\right|.$$
\end{definition}

While it is clear from the law of large numbers that $\eta(X^*, \mu^*, f) \to 0$ as $n \to \infty$, the precise rate of convergence isn't clear. This problem is extensively studied in \cite{CovertLee21}, where they give some rates of convergence along with plenty of empirical evidence that this rate is very fast in practice. For this reason, we express our result in terms of $\eta(X^*, \mu^*, f)$. 

Finally, we conclude this section by citing the following useful explicit formula for $M$ (given by \cite{CovertLee21}). 

\begin{lemma}[Explicit formula for $M$: \cite{CovertLee21}]
Let $I_d$ denote the $d \times d$ identity matrix, and $J_d$ denote the $d \times d$ matrix consisting of all ones. Then $M = pI_d + qJ_d$ where $$p = \frac{1}{2} - q\text{ and }q = \frac{1}{d(d-1)}\frac{\sum_{k=2}^{d-1} \frac{d-1}{d-k}}{\sum_{k=1}^{d-1} \frac{1}{k(d-k)}}.$$
\end{lemma}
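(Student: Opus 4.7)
The plan is to exploit the coordinate-permutation symmetry of $\pi$ to reduce the computation of $M$ to the evaluation of a single diagonal and a single off-diagonal entry, each of which is a low moment of $|S|$ under $\pi$.

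\textbf{Permutation structure.} Since $\pi(S)$ depends only on $|S|$, the distribution is invariant under every coordinate permutation of $[d]$. Hence $M = \Ev_{S\sim\pi}[\one_S\one_S^t]$ commutes with every $d\times d$ permutation matrix, which forces $M$ to lie in the two-dimensional commutant spanned by $I_d$ and $J_d$. So the ansatz $M = pI_d + qJ_d$ is automatic, and it suffices to identify $p + q = M_{ii}$ and $q = M_{ij}$ for any one pair $i \neq j$.

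\textbf{Diagonal entry.} By coordinate symmetry, $M_{ii} = \Pr_{S\sim\pi}[i\in S] = \Ev[|S|]/d$. The weight $\pi(S) \propto \frac{d-1}{\binom{d}{|S|}|S|(d-|S|)}$ is invariant under the complementation $S \mapsto [d]\setminus S$, since both $\binom{d}{k}$ and $k(d-k)$ are symmetric in $k \leftrightarrow d-k$. Therefore $|S|$ and $d-|S|$ are equidistributed under $\pi$, giving $\Ev|S| = d/2$ and hence $M_{ii} = 1/2$. This immediately yields the claimed relation $p = 1/2 - q$.

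\textbf{Off-diagonal entry.} For $i \neq j$, pair symmetry yields $M_{ij} = \Pr[i,j\in S] = \Ev[|S|(|S|-1)]/(d(d-1))$. Aggregating the per-subset weights over the $\binom{d}{k}$ subsets of size $k$, the induced distribution on the cardinality is $\Pr[|S|=k] = \frac{(d-1)/[k(d-k)]}{Z}$ for $1\le k\le d-1$, with normalizer $Z = (d-1)\sum_{k=1}^{d-1}\frac{1}{k(d-k)}$. Substituting this into the moment formula collapses the $k(k-1)$ factor against the denominator $k(d-k)$ in $\pi$, leaving a single sum over $k$ from $2$ to $d-1$ in the numerator and the sum defining $Z$ in the denominator; elementary rearrangement then yields $q$ in the form stated, namely $\frac{1}{d(d-1)}$ times a ratio of the two sums indicated in the statement.

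\textbf{Main obstacle.} There is no conceptual difficulty: the argument is symmetry plus bookkeeping. The only care needed is to (i) correctly absorb the $\binom{d}{k}$ factor when passing from per-subset weights to the distribution of $|S|$, and (ii) rewrite the resulting numerator sum in the form $\sum_{k=2}^{d-1}\frac{d-1}{d-k}$ appearing in the statement, which I would do by the identity $\frac{d-1}{d-k} = 1 + \frac{k-1}{d-k}$ and accounting for the constant terms inside the normalization.
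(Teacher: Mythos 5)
Your symmetry argument is the right one, and since the paper does not prove this lemma (it is imported wholesale from Covert and Lee), your derivation is a perfectly good self-contained route: the permutation-invariance of $\pi$ forces $M=pI_d+qJ_d$, complementation symmetry gives $M_{ii}=\Ev|S|/d=1/2$, and the pair probability computation with $\Pr[|S|=k]\propto\frac{d-1}{k(d-k)}$ is correct. Carrying it out, you get
$$q=M_{ij}=\frac{1}{d(d-1)}\cdot\frac{\sum_{k=2}^{d-1}\frac{k-1}{d-k}}{\sum_{k=1}^{d-1}\frac{1}{k(d-k)}},$$
since $\Pr[i,j\in S\mid |S|=k]=\frac{k(k-1)}{d(d-1)}$ and the factor $k$ cancels against the $k$ in $k(d-k)$.

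The gap is in your last step. You propose to convert the numerator $\sum_{k=2}^{d-1}\frac{k-1}{d-k}$ into the stated $\sum_{k=2}^{d-1}\frac{d-1}{d-k}$ via $\frac{d-1}{d-k}=1+\frac{k-1}{d-k}$, "accounting for the constant terms inside the normalization." That cannot work: the two sums differ by the additive constant $d-2$, and there is nothing in the (fixed) denominator to absorb it, so the two expressions for $q$ are genuinely different numbers. A direct check at $d=3$ settles it: all six proper nonempty subsets get probability $1/6$, so $M_{12}=\Pr[S=\{1,2\}]=1/6$, which is what your formula gives, whereas the formula as printed in the statement evaluates to $1/3$. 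The correct numerator is $\sum_{k=2}^{d-1}\frac{k-1}{d-k}$ (this is also what Covert and Lee derive); the "$d-1$" in the statement appears to be a transcription typo. So rather than trying to reconcile your (correct) answer with the printed form, you should conclude that your derivation is complete as it stands and that the stated formula needs the correction $d-1\mapsto k-1$ in the numerator.
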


\subsection{Proof of Theorem \ref{thm:kernel_shap}}\label{proof:thm:kernel_shap}

Let $\mu$ be fixed. We begin by defining an operator that corresponds to the limit object of KernelSHAP when taken over $\mu^*$. As before, we let $F$ denote the space of all functions $\supp(\mu^*) \to \mathbb{C}$. 

To simplify our algebra, we will also simply assume that $v_\emptyset^*f = 0$ -- this can be accomplished by simply subtracting the mean of $f$ from it. Doing so does not effect any of the SHAP values (and is in fact a common preprocessing step). 

\begin{definition}\label{def:kernel_shap_operator}
For $1 \leq i \leq d$, let $\kr_i^*: F \to F$ be defined as the operator with $\kr_i^*f(x) = \kr_i(\mu^*, f, x)$.
\end{definition}

Our main idea will be to rewrite $\kr_i^*$ as a linear combination of value operators. This will allow us to apply the same techniques that we've used to prove Theorems \ref{main_theorem} and Theorem \ref{theorem:robust_distribution_bound}. 

To do so, we will need several bits of algebra, which we break into the following lemmas. We will also let $b_x$ denote the value of $b$ that corresponds to $x$, as $b$ was defined with respect to a point $x$ (Definition \ref{defn:kern_limit_object}). 

\begin{lemma}\label{lem:pdqpdqpdqpdq}
$\one^tM^{-1}b_x = \sum_{0 < |S| < d} \frac{|S|\pi(S)}{p + dq}v_S^*f(x)$.
\end{lemma}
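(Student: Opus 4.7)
The plan is a direct algebraic computation that hinges on two observations: the matrix $M = pI_d + qJ_d$ has a particularly nice structure because $J_d = \one\one^t$ is rank one, and after the simplification $v_\emptyset^* f = 0$, the vector $b_x$ becomes a clean linear combination of indicator vectors weighted by $\pi(S) v_S^* f(x)$. Because the target identity is an equality between functions in $F$, it suffices to verify it pointwise in $x$, which reduces the whole statement to finite-dimensional linear algebra in $\R^d$.

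First I would compute $\one^t M^{-1}$ explicitly. The easiest route is the Sherman–Morrison formula applied to $M = pI_d + q\,\one\one^t$, which gives
\[
M^{-1} = \frac{1}{p}I_d - \frac{q}{p(p+dq)}J_d.
\]
Multiplying on the left by $\one^t$ and using $\one^t J_d = d\,\one^t$ collapses this to
\[
\one^t M^{-1} = \left(\frac{1}{p} - \frac{dq}{p(p+dq)}\right)\one^t = \frac{1}{p+dq}\,\one^t.
\]
The scalar $p + dq$ appears naturally; note it is exactly the denominator in the limit-object formula (Definition~\ref{defn:kern_limit_object}), which is a good sanity check.

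Second I would unfold $b_x$ using Definition~\ref{defn:kern_limit_object}, specialized to $\mu^*$ and using $v_\emptyset^* f = 0$. Writing the expectation over $S \sim \pi$ as a sum and remembering that $\pi$ is supported on subsets with $0 < |S| < d$, I obtain
\[
b_x = \sum_{0 < |S| < d} \pi(S)\, \one_S\, v_S^* f(x).
\]
Then combining the two computations,
\[
\one^t M^{-1} b_x = \frac{1}{p+dq} \sum_{0 < |S| < d} \pi(S)\, (\one^t \one_S)\, v_S^* f(x) = \sum_{0 < |S| < d} \frac{|S|\,\pi(S)}{p+dq}\, v_S^* f(x),
\]
using $\one^t \one_S = |S|$. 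This yields the claimed identity as an equality of elements of $F$, since the right-hand side is exactly $\bigl(\sum_S \tfrac{|S|\pi(S)}{p+dq} v_S^* f\bigr)(x)$ for each $x \in \supp(\mu^*)$.

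There is essentially no obstacle here; the lemma is bookkeeping. The only mild subtlety is to confirm that $p + dq \neq 0$ so that the inversion is valid — this is immediate from the explicit values $p = \tfrac{1}{2} - q$ and the positivity of $q$ stated in the final cited lemma from \cite{CovertLee21}, which guarantee $p + dq > 0$. The real work of the surrounding section will come later, when this identity is plugged into the formula for $\kr_i^* f$ and combined with analogous rewritings of the remaining terms to express $\kr_i^*$ as a linear combination of value operators $v_S^*$, enabling the application of the Shapley Lie algebra machinery.
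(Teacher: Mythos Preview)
Your proposal is correct and follows essentially the same direct computation as the paper. The only cosmetic difference is that you invoke Sherman--Morrison to write out $M^{-1}$ in full before contracting with $\one^t$, whereas the paper skips this by observing directly that $\one$ is an eigenvector of $M$ with eigenvalue $p+dq$ (hence $\one^t M^{-1} = \tfrac{1}{p+dq}\one^t$); the remaining steps, expanding $b_x$ and using $\one^t\one_S = |S|$, are identical.
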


\begin{proof}
We just brute force it out. We have
\begin{equation*}
\begin{split}
\one^tM^{-1}b_x &= ((M^{-1})^t\one)^t b_x \\
&= \frac{1}{p + dq}\one^t b_x \\
&= \frac{1}{p + dq} \one^t \sum_{0 < |S| < d} \one_S \pi(S)v_S(\mu^*, f, x) \\
&= \sum_{0 < |S| < d} \frac{|S|\pi(S)}{p + dq}v_S^*f(x),
\end{split}
\end{equation*}
where in the middle step we simply used the fact that $\one$ is clearly an eigenvector of $M$ with eigenvalue $p+dq$, along with the the observation that $\one^t\one_S = |S|$.  
\end{proof}

\begin{lemma}\label{lem:alpha_beta}
For $1 \leq i \leq d$, $(M^{-1}b_x)_i = \sum_{S \subseteq [d] \setminus \{i\}} \alpha_{S \cup \{i\}} v_{S \cup \{i\}}^*f(x) - \beta_S v_S^*f(x)$, where $$\alpha_{S \cup \{i\}} = \left(\frac{1}{p} - \frac{q|S \cup \{i\}|}{p(p+dq)}\right) \pi(S \cup \{i\}) \text{ and }\beta_S = \left( \frac{q|S |}{p(p+dq)}\right) .$$ 
\end{lemma}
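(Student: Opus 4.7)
The plan is to exploit the explicit rank-one-plus-scalar structure of $M = pI_d + qJ_d$ to invert it in closed form, and then expand $(M^{-1}b_x)_i$ coordinate by coordinate.

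First, I would compute $M^{-1}$ in closed form. Since $J_d = \one\one^t$ has rank one, the Sherman--Morrison identity (or a direct verification using $J_d^2 = dJ_d$) gives
$$M^{-1} = \tfrac{1}{p}I_d - \tfrac{q}{p(p+dq)}J_d.$$
This is the same inversion that is implicitly used to obtain $\one^t M^{-1} = \tfrac{1}{p+dq}\one^t$ in Lemma~\ref{lem:pdqpdqpdqpdq}, so the computation is consistent with what has already been established.

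Second, I would substitute the explicit form $b_x = \sum_{S} \pi(S)\, \one_S\, v_S^*f(x)$ obtained from Definition~\ref{defn:kern_limit_object} together with the section's simplifying assumption $v_\emptyset^*f = 0$, which removes the constant subtraction inside $b_x$. Reading off the $i$-th coordinate uses two elementary identities: $(\one_S)_i = \ind(i \in S)$ and $(J_d \one_S)_i = |S|$. This produces
$$(M^{-1}b_x)_i = \frac{1}{p}\sum_{S \ni i} \pi(S)\, v_S^*f(x) \; - \; \frac{q}{p(p+dq)}\sum_{S \subseteq [d]} \pi(S)\,|S|\, v_S^*f(x).$$

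Third, I would reorganize this expression into a single sum indexed by $S \subseteq [d]\setminus\{i\}$. The first piece is already of the desired form via the bijection $S \mapsto S \cup \{i\}$ between subsets of $[d]\setminus\{i\}$ and subsets of $[d]$ containing $i$. For the second piece I would split all subsets of $[d]$ according to whether they contain $i$ and apply the same bijection to those that do; combining the ``$i \in S$'' contributions with the first piece then yields the coefficient on $v_{S\cup\{i\}}^*f(x)$, and the leftover ``$i \notin S$'' contributions give the coefficient on $v_S^*f(x)$. Matching against the claimed forms of $\alpha_{S\cup\{i\}}$ and $\beta_S$ finishes the calculation.

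The only obstacle here is bookkeeping: one must carefully track the $\pi$-weights and the cardinality factors $|S|$ vs.\ $|S\cup\{i\}|$ as the sums are split and recombined, and be consistent about how the $\pi$-factor is absorbed (or not) into the stated $\beta_S$. Since $\pi(\emptyset) = \pi([d]) = 0$ by Equation~\ref{eqn:pi_is_defined}, boundary terms need no special treatment, and the derivation is purely mechanical once $M^{-1}$ is in hand.
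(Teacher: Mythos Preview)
Your proposal is correct and follows essentially the same route as the paper: invert $M = pI_d + qJ_d$ in closed form to get $M^{-1} = \tfrac{1}{p}I_d - \tfrac{q}{p(p+dq)}J_d$, apply this to $b_x = \sum_S \pi(S)\one_S v_S^*f(x)$, extract the $i$-th coordinate using $(\one_S)_i = \ind(i\in S)$ and $(J_d\one_S)_i = |S|$, and then split the sum according to whether $i\in S$. The only cosmetic difference is that the paper splits the sum over $S$ into the $S\cup\{i\}$ and $S$ parts \emph{before} applying the matrix, whereas you extract the coordinate first and split afterwards; both orderings give the same two contributions. Your remark about the $\pi$-factor in $\beta_S$ is well placed: the paper's own derivation produces $-\tfrac{q|S|}{p(p+dq)}\pi(S)$ as the coefficient on $v_S^*f(x)$, so the stated $\beta_S$ is missing a $\pi(S)$.
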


\begin{proof}
With some more brute force, 
\begin{equation*}
\begin{split}
M^{-1}b_x 
&= (pI_d + qJ_d)^{-1}  \sum_{S \subseteq [d]} \one_S \pi(S)v_S(\mu^*, f, x) \\
&= \left(\frac{1}{p}I_d - \frac{q}{p(p+qd)}J_d\right)  \sum_{S \subseteq [d]} \one_S \pi(S)v_S(\mu^*, f, x) \\
&= \left(\frac{1}{p}I_d - \frac{q}{p(p+qd)}J_d\right)  \sum_{S \subseteq [d] \setminus \{i\}} \left(\one_{S \cup \{i\}} \pi(S \cup \{i\})v_{S \cup \{i\}}^*f(x) +  \one_S \pi(S)v_S^*f(x)\right).
\end{split}
\end{equation*}
Now, observing that $(\one_{S \cup \{i\}})_i = 1$ and $(\one_S)_i = 0$ if $i \notin S$, we see that 
\begin{equation*}
\begin{split}
\left(\frac{1}{p}I_d - \frac{q}{p(p+qd)}J_d\right)&\sum_{S \subseteq [d] \setminus \{i\}} \one_{S \cup \{i\}} \pi(S \cup \{i\})v_{S \cup \{i\}}^*f(x)\\
&=  \sum_{S \subseteq [d] \setminus \{i\}}\left(\frac{1}{p} - \frac{q|S \cup \{i\}|}{p(p+dq)}\right) \pi(S \cup \{i\})v_{S \cup \{i\}}^*f(x)\big)
\end{split}
\end{equation*}
and
\begin{equation*}
\begin{split}
\left(\frac{1}{p}I_d - \frac{q}{p(p+qd)}J_d\right)&\sum_{S \subseteq [d] \setminus \{i\}} \one_{S } \pi(S )v_{S}^*f(x)\\
&=  \sum_{S \subseteq [d] \setminus \{i\}}\left(- \frac{q|S |}{p(p+dq)}\right) \pi(S)v_{S}^*f(x)\big)
\end{split}
\end{equation*}
\end{proof}

\begin{lemma}[Expressing $\kr_i$ using value operators]\label{def:kernel_shap_operator_algebra}
Let $1 \leq i \leq d$. There exist real numbers $\iota_S$ for all $S \subseteq [d]$ such that the following hold:
\begin{enumerate}
	\item $\kr_i^* = \sum_{S \subseteq [d]} \iota_Sv_S^*$. 
	\item $\iota_S \geq 0$ for all $S$ where $i \in S$. 
	\item $\iota_{[d]} = \frac{1}{d}$. 
\end{enumerate}
\end{lemma}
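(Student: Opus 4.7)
The plan is to unpack the explicit formula for $\kr_i(\mu^*, f, x)$ from Definition \ref{defn:kern_limit_object}, substitute the two expressions provided by Lemmas \ref{lem:pdqpdqpdqpdq} and \ref{lem:alpha_beta}, and read off the three required properties by inspecting the resulting coefficients. Since we assume $v_\emptyset^* f = 0$, and since $M\one = (p+dq)\one$ forces $\one^t M^{-1}\one = d/(p+dq)$, the limit formula collapses to
$$\kr_i^* f \;=\; (M^{-1} b_x)_i \;-\; \tfrac{1}{d}\, \one^t M^{-1} b_x \;+\; \tfrac{1}{d}\, v_{[d]}^* f,$$
where the right-hand side is viewed as a function of $x$. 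This single identity will drive the entire argument.

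Next, I would substitute Lemma \ref{lem:pdqpdqpdqpdq} for $\one^t M^{-1} b_x$ and Lemma \ref{lem:alpha_beta} for $(M^{-1}b_x)_i$ (reading the latter's $\beta_S$ as carrying an implicit factor of $\pi(S)$, consistent with its derivation), and reindex so the sums run separately over $S \ni i$ and $S \not\ni i$. Because $\pi([d]) = 0$, the $\alpha_T$ with $T = [d]$ vanishes and the sum in Lemma \ref{lem:pdqpdqpdqpdq} is restricted to $|S| < d$, so the only surviving contribution to the coefficient of $v_{[d]}^*$ is the explicit $\tfrac{1}{d}v_{[d]}^* f$ term. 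This immediately yields $\iota_{[d]} = 1/d$, giving claim~3, and it exhibits $\kr_i^*$ as a finite linear combination of the $v_S^*$, giving claim~1.

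For claim~2, I would isolate $\iota_S$ when $i \in S$ and $0 < |S| < d$. Combining the $\alpha$-contribution with the subtracted $|S|\pi(S)/(d(p+dq))$ piece yields
$$\iota_S \;=\; \pi(S)\left(\frac{1}{p} \;-\; \frac{q|S|}{p(p+dq)} \;-\; \frac{|S|}{d(p+dq)}\right).$$
The key algebraic simplification is the identity $\tfrac{q}{p(p+dq)} + \tfrac{1}{d(p+dq)} = \tfrac{1}{pd}$, which follows by clearing denominators and using $p \cdot d + q \cdot pd = pd(p+dq)/(p+dq)$ trivially. This collapses the bracketed expression to $\tfrac{d - |S|}{pd}$, so
$$\iota_S \;=\; \frac{\pi(S)\,(d-|S|)}{p\,d},$$
which is manifestly nonnegative since $p > 0$, $\pi(S) \geq 0$, and $|S| \leq d-1$.

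I expect the main obstacle to be not a deep difficulty but careful bookkeeping: the three contributions to each $\iota_S$ (the $\alpha$-term from $(M^{-1}b_x)_i$, the $\beta$-term from $(M^{-1}b_x)_i$, and the subtracted piece from $\tfrac{1}{d}\one^t M^{-1}b_x$) must be tracked separately for the cases $S=[d]$, $S=\emptyset$, $i \in S$, and $i \notin S$, and the small algebraic identity above must be spotted to make the positivity collapse cleanly. Once these pieces are in place, all three claims follow directly from the explicit coefficient formulas.
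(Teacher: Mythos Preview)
Your proposal is correct and follows essentially the same route as the paper: derive the identity $\kr_i^* f = (M^{-1}b_x)_i - \tfrac{1}{d}\one^t M^{-1}b_x + \tfrac{1}{d}v_{[d]}^* f$, plug in Lemmas~\ref{lem:pdqpdqpdqpdq} and~\ref{lem:alpha_beta}, and read off the coefficients. Your final nonnegativity step is in fact a bit cleaner than the paper's---the paper bounds $\iota_S$ below via a chain of inequalities ending at $0$, whereas your identity $\tfrac{q}{p(p+dq)} + \tfrac{1}{d(p+dq)} = \tfrac{1}{pd}$ yields the exact closed form $\iota_S = \pi(S)(d-|S|)/(pd)$ directly.
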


\begin{proof}
We explicitly compute $\kr_i^*f(x) = \kr_i(\mu^*, f, x)$. When possible we simplify using the two lemmas above. Using that $v_{\emptyset}^*f=0$ and that $v_{[d]}^*$ is the identity map, we have
\begin{equation*}
\begin{split}
\kr_i^*f(x) &= \left(M^{-1}\left(b_x - \one \frac{\one^tM^{-1}b_x - f(x) + v_\emptyset(\mu^*, f, x)}{\one^tM^{-1}\one} \right)\right)_i \\
&= \left(M^{-1}b_x - \frac{M^{-1} \one}{\one^t M^{-1} \one} \left(\one^tM^{-1}b_x - f(x)\right)\right)_i \\
&= \left(M^{-1}b_x - \frac{\one}{d} \left(\one^tM^{-1}b_x - v_{[d]}^*f(x)\right)\right)_i \\
&= \sum_{S \subseteq [d] \setminus \{i\}} \alpha_{S \cup \{i\}} v_{S \cup \{i\}}^*f(x) - \beta_S v_S^*f(x) + \frac{1}{d}v_{[d]}^*f(x) - \sum_{S \subseteq [d]} \frac{|S|\pi(S)}{d(p +dq)}v_S^*f(x),
\end{split}
\end{equation*}
with the last step coming from substituting Lemmas \ref{lem:pdqpdqpdqpdq} and \ref{lem:alpha_beta}. Regrouping terms, we see that $$\iota_S = \begin{cases}\alpha_S - \frac{|S|\pi(S)}{d(p+dq)} & i \in S, |S| < d \\  -\beta_S - \frac{|S|\pi(S)}{d(p+dq)} & i \notin S, |S| < d \\ \frac{1}{d} & S = [d] \end{cases}.$$ Properties 1 and 3 of the lemma clearly hold, so all that is left is Property 2. To this end, we see that for $i \in S$,
\begin{equation*}
\begin{split}
\iota_S &= \alpha_S - \frac{|S|\pi(S)}{d(p+dq)} \\
&= \left(\frac{1}{p} - \frac{q|S|}{p(p+dq)}\right) \pi(S) - \frac{|S|\pi(S)}{d(p+dq)} \\
&= \pi(S) \left(\frac{1}{p} - \frac{q|S|}{p(p+dq)}- \frac{|S|}{d(p + dq)}\right) \\
&\geq \pi(S) \left(\frac{1}{p} - \frac{qd}{p(p+dq)}- \frac{|S|}{d(p + dq)}\right) \\
&= \pi(S) \left(\frac{1}{p}\left(1 - \frac{qd}{(p+dq)}\right)- \frac{|S|}{d(p + dq)}\right) \\
&= \pi(S) \left(\frac{1}{p}\left(\frac{p}{p+dq}\right) -  \frac{|S|}{d(p + dq)}\right) \\
&\geq \pi(S) \left(\frac{1}{p + dq} - \frac{d}{d(p+dq)}\right) \\
&= 0. 
\end{split}
\end{equation*}
\end{proof}

We are now prepared to prove Theorem \ref{thm:kernel_shap}. Our proof CLOSELY follows the proof of Theorem~\ref{theorem:robust_distribution_bound}.

\begin{proof}[Theorem \ref{thm:kernel_shap}]
We first reduce the empirical KernelSHAP values to the distribution KernelSHAP values by using the error term (Definition \ref{defn:error_term}). We have
\begin{equation*}
\begin{split}
\Ev_{x \sim \mu^*} |\kr_i(\mu^*, f, x)| &\leq \eta\left(X^*, \mu^*, f\right) + \frac{1}{n}\sum_{x^{(j)} \in X^*} |\kr_i(X^*, f, x^{(j)})| \\
&\leq \eta\left(X^*, \mu^*, f\right) + \epsilon,
\end{split}
\end{equation*}
with the latter holding from the Theorem statement. 

We now use the same argument we did for proving Theorem \ref{theorem:robust_distribution_bound}, the only difference is that the operators we use to express the KernelSHAP operator differ from the ones used for the true SHAP values. Nevertheless, we will see that the same properties hold. In particular, Lemma \ref{def:kernel_shap_operator_algebra} implies that
\begin{equation*}
\begin{split}
\kr_i^* &= \sum_{S \subseteq [d]} \iota_S v_S^* \\ 
&= \sum_{S \subseteq [d] \setminus \{i\}} \iota_{S \cup \{i\}}v_{S \cup \{i\}}^* - \sum_{S \subseteq [d] \setminus \{i\}} \iota_{S}v_{S}^* \\
&= C_i^* - D_i^*.
\end{split}
\end{equation*}
The key idea is that $C_i^*$ and $D_i^*$ fulfill \textit{precisely} the same qualities that $A_i^*$ and $B_i^*$ did. That is, 
\begin{enumerate}
	\item $C_i^*(F_f^*) \subseteq F_f^*$,
	\item $C_i^*(W) \subseteq W$,
	\item $D_i^*(F_f^*) \subseteq W$,
	\item $(C_i^*)^{-1}\left(\{0\}\right) = \{0\}$, 
\end{enumerate}
where $F_f^*$ and $W$ are just as in the proof of Theorem \ref{theorem:robust_distribution_bound}. To see this, simply observe that $C_i^*$ and $D_i^*$ are linear combinations of the \textit{same value operators} as $A_i^*$ and $B_i^*$. Thus the arguments used to prove the properties of $A_i^*$ and $B_i^*$ (i.e. Lemmas \ref{lemma:key_properties}, \ref{appxlemma:eigenvalues_of_A_i} and \ref{lemma:localized_subspace}) equally apply as the only facts we \textit{ever} used about the coefficients of these linear combinations was that the coefficients in the expression for $A_i^*$ were nonnegative and also equal to $\frac{1}{d}$ in the case of $v_{[d]}^*$. Thus, the proof to this Theorem immediately follows by simply replacing $A_i^*$ and $B_i^*$ from the proof of Theorem \ref{theorem:robust_distribution_bound} with $C_i^*$ and $D_i^*$.
\end{proof}

\section{Proofs from Section \ref{sec:lie_algebra}}

\subsection{Proof of Lemma \ref{lem:value_operators_awesome}}\label{app:this_is_the_last_proof_please}

\begin{proof}[Lemma~\ref{lem:value_operators_awesome}]
Let $f \in F_T$ be a $T$-determined function. By definition, 
\begin{equation}\label{eqn:main}
(v_Sf)(x) = \Ev_{X \sim \mu}f\left(x_S, X_{S^c}\right) = \Ev_{X \sim \mu} f\left(x_{S \cap T}, x_{S \setminus T}, X_{T \setminus S}, X_{(S \cup T)^c}\right).
\end{equation} 
To prove Property~1 of the lemma, let $x, x' \in \supp(\mu^*)$ satisfy $x_{S \cap T} = x_{S \cap T}'$. Because $f$ is $T$-determined, changing coordinates of an input point outside $T$ does not effect its function value. Thus changing $x_{S \setminus T}$ to $x_{S \setminus T}'$ in Equation \ref{eqn:main} and noting $x_{S \cap T} = x_{S \cap T}'$ gives us 
\begin{equation*}
\begin{split}
(v_Sf)(x) &= \Ev_{X \sim \mu} f\left(x_{S \cap T}, x_{S \setminus T}, X_{T \setminus S}, X_{(S \cup T)^c}\right) \\
&= \Ev_{X \sim \mu} f\left(x_{S \cap T}', x_{S \setminus T}', X_{T \setminus S}, X_{(S \cup T)^c}\right) = (v_Sf)(x').
\end{split}
\end{equation*}
Since $x, x'$ were arbitrary, this implies $v_Sf$ is $S \cap T$-determined.
To prove Part 2, we further simplify Equation~\ref{eqn:main} by noting that $T \subseteq S$ to get $$(v_Sf)(x) = \Ev_{X \sim \mu} f\left(x_{T}, x_{S \setminus T}, X_{S^c}\right).$$ Because $f \in F_T$, for any choice of $X$ we have $f(x_T, x_{S \setminus T}, X_{S^c}) = f(x_T, x_{S \setminus T}, x_{S^c}) = f(x)$. Substituting this into the expectation above implies $(v_S f)(x) = f(x)$, completing the proof. 
\end{proof}

\subsection{Proof of Lemma \ref{lem:shap_lie_solvable}}\label{proof:lem:shap_lie_solvable}

We begin with a useful technical lemma that will help us connect pure operators (Definition~\ref{defn:pure_operator}) to derived Lie sub-algebras of $\sg$. 

\begin{lemma}\label{lem:closed_under_derivation}
Let $W$ be a set of linear operators that is closed under derivations. Then $$span(W) = \left\{\sum_{i = 1}^n \lambda_i w_i: w_1, \dots, w_n \in W, \lambda_1, \dots, \lambda_n \in \C, n\in \N\right\},$$ is a Lie algebra. 
\end{lemma}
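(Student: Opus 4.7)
The plan is to verify directly from definitions that $\mathrm{span}(W)$ satisfies the axioms of a Lie algebra, with the main task being closure under the commutator bracket. Closure under linear combinations is immediate since we are taking a linear span, so the substance of the proof reduces to showing that if $u, v \in \mathrm{span}(W)$, then $[u, v] \in \mathrm{span}(W)$. The antisymmetry $[u, v] = -[v, u]$ and the Jacobi identity
\[
[u, [v, w]] + [v, [w, u]] + [w, [u, v]] = 0
\]
hold automatically for any collection of linear operators equipped with the commutator bracket $[a, b] = ab - ba$, so they require no separate argument.

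For the closure under brackets, I would write $u = \sum_{i=1}^{m} \lambda_i w_i$ and $v = \sum_{j=1}^{n} \mu_j w_j'$ with $w_i, w_j' \in W$ and $\lambda_i, \mu_j \in \C$. Then, using bilinearity of the commutator (which follows from the distributive law for operator composition),
\[
[u, v] = \left[\sum_{i=1}^{m} \lambda_i w_i, \sum_{j=1}^{n} \mu_j w_j'\right] = \sum_{i=1}^{m} \sum_{j=1}^{n} \lambda_i \mu_j [w_i, w_j'].
\]
By hypothesis, $W$ is closed under derivations, so $[w_i, w_j'] \in W$ for every $i, j$. Hence $[u, v]$ is itself a finite $\C$-linear combination of elements of $W$, i.e., $[u, v] \in \mathrm{span}(W)$, as required.

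Since this exhausts the Lie algebra axioms, the argument is complete. The only step that could be considered non-routine is checking that bilinearity of the commutator genuinely suffices here, but this is immediate from the definition $[a,b] = ab - ba$ together with distributivity and $\C$-linearity of operator composition; I do not anticipate any real obstacle. The lemma will then be used downstream to conclude that the span of the pure operators (Definition~\ref{defn:pure_operator}) forms a Lie sub-algebra of $\sg$, which is what makes pure operators a convenient handle on the derived series $\sg^{(i)}$.
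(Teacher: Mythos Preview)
Your proposal is correct and follows essentially the same approach as the paper: both arguments reduce to showing closure of $\mathrm{span}(W)$ under the bracket by expanding $[\sum \lambda_i w_i, \sum \mu_j w_j']$ via bilinearity into $\sum_{i,j} \lambda_i\mu_j [w_i, w_j']$ and invoking closure of $W$ under derivations. The only cosmetic difference is that the paper spells out the four bilinearity identities explicitly, whereas you cite bilinearity as immediate from the definition of the commutator.
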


\begin{proof}[Lemma \ref{lem:closed_under_derivation}]
It suffices to show that $span(W)$ is closed under derivations as it is by definition a vector space. To do so, we appeal to the linearity of derivations. Observe that for any $a, b, c \in W$ and $\lambda \in \C$ it holds that 
\begin{enumerate}
	\item $[a + b, c] = (a+b)c - c(a+b) = (ac - ca) + (bc - cb) = [a, c] + [b, c]$.
	\item $[a, b+ c] = a(b+c) - (b+c)a = (ab - ba) + (ac - ca) = [a, b] + [a, c]$.
	\item $[a, \lambda b] = a(\lambda b) - \lambda (ba) = \lambda[a, b]$.
	\item $[\lambda a, b] = (\lambda a)b - b (\lambda a) = \lambda[a, b]$. 
\end{enumerate}
Applying these properties, we see that for $\sum \lambda_i w_i, \sum \lambda_i'w_i' \in span(W)$, $$\left[\sum_{i=1}^n \lambda_i w_i, \sum_{i = 1}^{n'} \lambda_i' w_i' \right] = \sum_{i=1}^n \sum_{j = 1}^{n'} \lambda_i\lambda_j' [w_i, w_j'].$$ The latter sum is a linear combination of elements from $W$ as $[w_i, w_j'] \in W$ for all $i, j$. It follows that $\left[\sum_{i=1}^n \lambda_i w_i, \sum_{i = 1}^{n'} \lambda_i' w_i' \right] \in span(W)$ as desired. 
\end{proof}

We first show that pure operators (Definition \ref{defn:pure_operator}) are sufficient for constructing linear algebraic bases of all Lie sub-algebras $\sg^{(i)}$ in the derived series of $\sg$.

\begin{lemma}\label{lem:shapley_basis_pure}
Let $V^{(0)} = V$ be the set of all pure operators. For $i \geq 1$, let $V^{(i)} = \{[v, w]: v, w \in V^{(i-1)}\}$. Then for all $i \geq 0$, $V^{(i)}$ is closed under derivations and satisfies $span(V^{(i)}) = \sg^{(i)}$. 
\end{lemma}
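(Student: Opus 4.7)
The plan is induction on $i$, with Lemma \ref{lem:closed_under_derivation} as the main technical ingredient. For the base case $i = 0$, the set $V^{(0)} = V$ is closed under derivations directly by Definition \ref{defn:pure_operator}. To show $span(V) = \sg$, I would prove both inclusions. The inclusion $span(V) \subseteq \sg$ follows because every pure operator is built from value operators using only derivations, and $\sg$ contains all value operators and is closed under derivations (so it is in particular closed under linear combinations of pure operators). For the reverse inclusion, Lemma \ref{lem:closed_under_derivation} guarantees that $span(V)$ is itself a Lie algebra; since it contains all value operators, the minimality of $\sg$ forces $\sg \subseteq span(V)$.

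For the inductive step, assume the statement for $i-1$. First I would verify that $V^{(i)}$ is closed under derivations. The crucial observation is that, since $V^{(i-1)}$ is closed under derivations by the inductive hypothesis, the set $V^{(i)} = \{[v,w]: v, w \in V^{(i-1)}\}$ is actually contained in $V^{(i-1)}$. Consequently, any two elements of $V^{(i)}$ also lie in $V^{(i-1)}$, so their derivation is, by definition, already an element of $V^{(i)}$.

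Next I would establish $span(V^{(i)}) = \sg^{(i)}$. For the forward inclusion, $V^{(i-1)} \subseteq span(V^{(i-1)}) = \sg^{(i-1)}$ by the inductive hypothesis, so every generator $[v,w]$ of $V^{(i)}$ lies in $[\sg^{(i-1)}, \sg^{(i-1)}] = \sg^{(i)}$, giving $span(V^{(i)}) \subseteq \sg^{(i)}$. For the reverse, the closure of $V^{(i)}$ under derivations just established lets me apply Lemma \ref{lem:closed_under_derivation} to conclude that $span(V^{(i)})$ is itself a Lie algebra. By the minimality built into the definition of $\sg^{(i)}$, it then suffices to show that $span(V^{(i)})$ contains every bracket $[v,w]$ with $v, w \in \sg^{(i-1)} = span(V^{(i-1)})$. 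Writing $v = \sum_a \lambda_a v_a$ and $w = \sum_b \mu_b w_b$ with $v_a, w_b \in V^{(i-1)}$ and using bilinearity of the bracket yields $[v,w] = \sum_{a,b} \lambda_a \mu_b [v_a, w_b]$, which is a linear combination of elements of $V^{(i)}$ and hence lies in $span(V^{(i)})$.

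The main obstacle is conceptual rather than computational. One has to be careful that $V^{(i)}$ is defined as derivations of pairs drawn from $V^{(i-1)}$ rather than from $V^{(i)}$ itself, so closure under derivations is genuinely a claim that requires the inductive fact $V^{(i)} \subseteq V^{(i-1)}$. Once that observation is in place, bilinearity of the Lie bracket together with Lemma \ref{lem:closed_under_derivation} handles the rest in a routine way.
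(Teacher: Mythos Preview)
Your proposal is correct and follows essentially the same approach as the paper's proof: induction on $i$, closure of $V^{(i)}$ under derivations via the observation $V^{(i)} \subseteq V^{(i-1)}$, and the two inclusions for $span(V^{(i)}) = \sg^{(i)}$ handled respectively by the inductive hypothesis and by Lemma~\ref{lem:closed_under_derivation} together with bilinearity of the bracket.
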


\begin{proof}[Lemma \ref{lem:shapley_basis_pure}]
We proceed by induction on $i$. For the base case, $V^{(0)}$ is by definition the minimal set closed under derivations that contains $\{v_S: S \subseteq [d]\}$. Thus Lemma~\ref{lem:closed_under_derivation} implies that $span(V^{(0)})$ is a Lie algebra containing $\{v_S: S \subseteq [d]\}$ which implies that $\sg^{(0)} \subseteq span(V^{(0)})$. On the other hand $V^{(0)}$ is a clear subset of $\sg^{(0)}$ as $V^{(0)}$ is the minimal set containing $\{v_S: S \subseteq [d]\}$ that is closed under derivations. Thus, since $\sg^{(0)}$ is a Lie algebra and thus a vector space, we have $span(V^{(0)}) \subseteq \sg^{(0)}$ which implies equality. 

Next, suppose the inductive hypothesis holds for $i-1$. We first show that $V^{(i)}$ is closed under derivations. Let $[v, w]$ and $[v', w']$ be two elements in $V^{(i)}$ with $v, w, v', w' \in V^{(i-1)}$. Since $V^{(i-1)}$ is closed under derivations (inductive hypothesis), it follows that $[v, w]$ and $[v', w']$ are themselves elements of $V^{(i-1)}$. The definition of $V^{(i)}$ implies their derivation $[[v, w], [v', w']]$ is an element of $V^{(i)}$, which proves closure.

Next, by the definition of $V^{(i)}$, we see that $$span(V^{(i)}) \subseteq [span(V^{(i-1)}), span(V^{(i-1)})] = [\sg^{(i-1)}, \sg^{(i-1)}] = \sg^{(i)}.$$ In the other direction, Lemma~\ref{lem:closed_under_derivation} implies that $span(V^{(i)})$ is itself a Lie algebra (as $V^{(i)}$ is closed under derivations). Since $\sg^{(i)}$ is defined as the smallest Lie algebra that contains $[v, w]$ for $v, w \in \sg^{(i-1)}$, it suffices to show that $span(V^{(i)})$ contains this as well.

To this end let $v, w \in \sg^{(i-1)}$. Applying the inductive hypothesis, we express them in their basis from $V^{(i-1)}$ by setting $v = \sum_{j=1}^n \lambda_j v_j$ and $w = \sum_{k=1}^m \mu_k w_k$ for $v_1, \dots, v_n, w_1, \dots, w_m \in V^{(i-1)}$, we see that $$[v, w] = \sum_j \sum_k \lambda_j \mu_k [v_j, w_k].$$ This is clearly in $span(V^{(i)})$ as each element $[v_j, w_k]$ is in $V^{(i)}$ by definition. This completes the proof. 
\end{proof}

Next, we show how the sets $V^{(i)}$ interact with the subsets constructed in Lemma~\ref{lem:images_of_pure_operators}.

\begin{lemma}[Size of Associated Subsets]\label{lem:things_are_small}
$\alpha(v)$ satisfies the following two properties. 
\begin{enumerate}[noitemsep]
	\item If $\alpha(v_1) = \alpha(v_2)$, then $[v_1, v_2] = 0$.
	\item For all $v \in V^{(i)}$, $|\alpha(v)| \leq \max \left(d-i, 0 \right)$. 
\end{enumerate}

\end{lemma}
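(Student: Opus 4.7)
The plan is to derive both properties as direct consequences of Lemma~\ref{lem:images_of_pure_operators}, with no new machinery needed.

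For Property~1, I would fix $v_1, v_2$ with $\alpha(v_1) = \alpha(v_2) =: A$ and show $[v_1, v_2]f = 0$ for every $f \in F$. Since $F = F_{[d]}$, the image property (Part~1 of Lemma~\ref{lem:images_of_pure_operators}) applied with $T = [d]$ gives $v_2 f \in F_{A \cap [d]} = F_A$; because $A \subseteq \alpha(v_1)$, the eigenspace property (Part~2) then forces $v_1(v_2 f) = 0$. A symmetric argument gives $v_2(v_1 f) = 0$, and the two together yield $[v_1, v_2]f = v_1 v_2 f - v_2 v_1 f = 0$. Since $f$ was arbitrary, $[v_1, v_2] = 0$ as an operator.

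For Property~2, I would proceed by induction on $i$, using Property~1 together with Part~3 of Lemma~\ref{lem:images_of_pure_operators}, which states that $\alpha([v, w]) \subseteq \alpha(v) \cap \alpha(w)$. The base case $i = 0$ is immediate since $\alpha(v) \subseteq [d]$ forces $|\alpha(v)| \leq d$. For the inductive step, write $v = [v_1, v_2]$ with $v_1, v_2 \in V^{(i-1)}$ and split into two cases. If $\alpha(v_1) = \alpha(v_2)$, Property~1 (just proved) forces $v = 0$, and under the convention $\alpha(0) = \emptyset$ we get $|\alpha(v)| = 0 \leq \max(d-i, 0)$. Otherwise $\alpha(v_1) \neq \alpha(v_2)$; since two distinct subsets of $[d]$ cannot both equal their intersection, we have $|\alpha(v_1) \cap \alpha(v_2)| < \max(|\alpha(v_1)|, |\alpha(v_2)|)$. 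Combining this with the inductive bound $|\alpha(v_j)| \leq \max(d-i+1, 0)$ and with the inclusion $\alpha(v) \subseteq \alpha(v_1) \cap \alpha(v_2)$ yields $|\alpha(v)| \leq \max(d-i, 0)$, closing the induction.

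I do not expect a significant obstacle here: the conceptual content is supplied by Lemma~\ref{lem:images_of_pure_operators}, and the rest is essentially set-theoretic bookkeeping. The one point that deserves care is the edge case $i > d$, where $\max(d-i+1, 0) = 0$ makes the inductive hypothesis force $\alpha(v_1) = \alpha(v_2) = \emptyset$, so the ``equal'' branch of the case split automatically applies and gives $v = 0$. It is also worth checking once that the convention $\alpha(0) = \emptyset$ is consistent with Lemma~\ref{lem:images_of_pure_operators}: Parts~1 and~2 are vacuous for the zero operator, and Part~3 remains valid since $\emptyset \subseteq \alpha(w)$ for every $w$.
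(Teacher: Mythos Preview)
Your proposal is correct and follows essentially the same route as the paper: Property~1 via Parts~1 and~2 of Lemma~\ref{lem:images_of_pure_operators}, and Property~2 by induction on $i$ with the same case split on whether $\alpha(v_1) = \alpha(v_2)$, invoking Property~1 in the equal case and Part~3 in the unequal case. Your additional remarks on the $i > d$ edge case and the convention $\alpha(0) = \emptyset$ are reasonable clarifications that the paper leaves implicit.
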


\begin{proof}[Lemma \ref{lem:things_are_small}]
We begin with the first claim. Observe that for any $f \in F$, Lemma~\ref{lem:images_of_pure_operators} implies $v_2f \in F_{\alpha(v_2)}$. However, it also implies $v_1(F_{\alpha(v_1)}) = v_1(F_{\alpha(v_2)}) = 0$. Thus $v_1v_2f = 0$. Similarly $v_2v_1f = 0$. It follows that $[v_1, v_2]f = 0$ implying $[v_1, v_2] = 0$

We prove the second claim by induction on $i$. The base case holds because all subsets have size at most $d$. Next, suppose the inductive hypothesis holds for $(i- 1)$.

Let $v \in V^{(i)}$. By definition, there exist $v_1, v_2 \in V^{(i-1)}$ for which $v = [v_1, v_2]$. Lemma \ref{lem:images_of_pure_operators} states that $\alpha(v) \subseteq \alpha(v_1) \cap \alpha(v_2)$. This gives us two cases. 

First, if $\alpha(v_1) \neq \alpha(v_2)$, then $$|\alpha(v)| \leq |\alpha(v_1) \cap \alpha(v_2)| < \max\left(|\alpha(v_1)|, |\alpha(v_2)| \right) \leq \max(d-i+1, 0).$$ The strictness of this inequality implies that $|\alpha(v)| \leq \max(d-i, -1) = d-i$ which implies the inductive hypothesis holds for $i$.

Second, if $\alpha(v_1) = \alpha(v_2)$, then the first claim of the lemma implies $[v_1, v_2] = 0$ which immediately implies $\alpha([v_1, v_2]) \leq \max(d-i, 0)$, completing the inductive hypothesis.  
\end{proof}

We are finally prepared to prove Lemma \ref{lem:shap_lie_solvable}.

\begin{proof}[Lemma \ref{lem:shap_lie_solvable}]
By Lemma \ref{lem:things_are_small}, $\alpha(v) = \emptyset$ for all $v \in V^{(d)}$. Next, let $v \in V^{(d+1)}$. By definition, $v = [v_1, v_2]$ for $v_1, v_2 \in V^{(d)}$. Since $\alpha(v_1) = \alpha(v_2)$, Lemma \ref{lem:things_are_small} implies $[v_1, v_2] = 0$ which means $v = 0$. Thus $V^{(d+1)}= 0$. Since $V^{(d+1)}$ spans $\sg^{(d+1)}$ as a vector space (Lemma \ref{lem:shapley_basis_pure}), it follows that $\sg^{(d+1)} = 0$ which means $\sg$ is solvable. 
\end{proof}

\subsection{Proof of Lemma \ref{lem:images_of_pure_operators}}\label{proof:lem:images_of_pure_operators}

We begin with a useful lemma that characterizes the intersection of determined function spaces. 

\begin{appxlemma}[intersection of determined spaces]\label{lem:det_space_intersection}
    For $S, T \subseteq [d]$, $F_S \cap F_T = F_{S \cap T}$.
\end{appxlemma}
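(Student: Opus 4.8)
The plan is to establish the two inclusions $F_{S\cap T}\subseteq F_S\cap F_T$ and $F_S\cap F_T\subseteq F_{S\cap T}$ separately. The first is a one-line consequence of the definition of a determined function: if $f$ is $(S\cap T)$-determined and $a_S=b_S$ for $a,b\in\supp(\mu^*)$, then $a_{S\cap T}=b_{S\cap T}$ since $S\cap T\subseteq S$, hence $f(a)=f(b)$; so $f\in F_S$, and by the symmetric argument $f\in F_T$.

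The content is in the reverse inclusion. Here I would first record that, because $\mu^*$ is by construction the product of its marginals $\mu_1^*,\dots,\mu_d^*$, its support is the product set $\supp(\mu^*)=\prod_{i=1}^d\supp(\mu_i^*)$. Now take $f\in F_S\cap F_T$ and any $a,b\in\supp(\mu^*)$ with $a_{S\cap T}=b_{S\cap T}$; the goal is $f(a)=f(b)$. The key step is to splice $a$ and $b$ into a hybrid point $c$ with $c_i=a_i$ for $i\in S$ and $c_i=b_i$ for $i\notin S$. Since each coordinate of $c$ is a coordinate of either $a$ or $b$, it lies in $\supp(\mu_i^*)$, so the product description of the support gives $c\in\supp(\mu^*)$ --- this is the one place where the product structure of the extended support is essential. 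By construction $c_S=a_S$, so $S$-determinedness of $f$ yields $f(c)=f(a)$; and $c_T=b_T$, because $c_i=b_i$ for $i\in T\setminus S$ by definition and $c_i=a_i=b_i$ for $i\in T\cap S$ by the hypothesis on $a,b$, so $T$-determinedness yields $f(c)=f(b)$. Chaining these gives $f(a)=f(b)$, hence $f\in F_{S\cap T}$. The degenerate case $S\cap T=\emptyset$ is handled uniformly: the hypothesis $a_{S\cap T}=b_{S\cap T}$ then holds for every pair $a,b$, so the argument forces $f$ to be constant, consistent with the convention in Definition~\ref{defn:det_func}.

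I expect essentially no obstacle here beyond noticing that the hybrid point $c$ must remain inside the domain $\supp(\mu^*)$; over the original support $\supp(\mu)$ this splicing generally escapes the support, which is exactly why the naive version of the claim fails there and gives rise to the counterexample of Section~\ref{sec:counter_example}. Everything else is a direct unwinding of the definitions of $F_S$ and of the extended distribution.
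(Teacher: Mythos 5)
Your proof is correct and follows essentially the same route as the paper's: both establish the easy inclusion by definition and prove the reverse inclusion by splicing $a$ and $b$ along $S$ into a hybrid point, then applying $S$-determinedness and $T$-determinedness in turn. You are in fact a bit more careful than the paper in explicitly noting that the spliced point $c$ must lie in $\supp(\mu^*)$ and that this relies on the product structure of the extended support — the paper's chain of equalities silently uses the same fact for its intermediate points.
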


\begin{proof}[Lemma \ref{lem:det_space_intersection}]
We first show that $F_{S \cap T} \subseteq F_S \cap F_T$. Let $f \in F_{S \cap T}$, and let $x, x'$ satisfy $x_S = x_S'$. Then $x_{S \cap T} =x_{S \cap T}'$ as well which implies $f(x_S) = f(x_S')$ by the definition of a determined function. Thus $f$ is $S$-determined meaning $f \in F_S$. We can similarly show $f \in F_T$.

Next, we show $F_S \cap F_T \subseteq F_{S \cap T}$. Let $f \in F_S \cap F_T$ and let $x, x'$ satisfy $x_{S \cap T} = x'_{S \cap T}$. Using $f \in F_S$, $f \in F_T$, and $x_{S \cap T} = x_{S \cap T}'$, we get 
\begin{align*}
f(x) &= f\left(x_{S \cap T}, x_{S \setminus T}, x_{T \setminus S}, x_{S^c \cap T^c}\right)\\
&= f\left(x_{S \cap T}, x_{S \setminus T}, x_{T \setminus S}', x_{S^c \cap T^c}'\right) \tag{$f \in F_S$}\\
&= f\left(x_{S \cap T}, x_{S \setminus T}', x_{T \setminus S}', x_{S^c \cap T^c}'\right) \tag{$f \in F_T$}\\
&= f\left(x_{S \cap T}', x_{S \setminus T}', x_{T \setminus S}', x_{S^c \cap T^c}'\right) = f(x'), \tag{$x_{S \cap T} = x_{S \cap T}'$}
\end{align*}
which implies the result.
\end{proof}

We now prove Lemma~\ref{lem:images_of_pure_operators}.

\begin{proof}[Lemma~\ref{lem:images_of_pure_operators}]
We first modify the extend of the criteria for $\alpha(v)$ to also apply to operators in $V \setminus \sg^{(1)}$. We say that an operator $v \in V$ has a \textit{nice} subset $S$ if 
\begin{enumerate}
	\item $v(F_T) \subseteq F_{S \cap T}$ for all $T \subseteq [d]$.
	\item If $T \subseteq S$ and $f \in F_T$, then $vf = \begin{cases} f & v \notin \sg^{(1)} \\ 0 & v \in \sg^{(1)} \end{cases}.$
\end{enumerate}
We begin by showing that the set of nice subsets is closed under intersection. Suppose that $S, S'$ are nice with respect to operator $v$. Then applying Lemma \ref{lem:det_space_intersection}, we have
\begin{enumerate}
	\item $v(F_T) \subseteq \left(F_{S \cap T} \cap F_{S' \cap T}\right) = F_{S \cap S' \cap T}$. 
	\item If $T \subseteq S \cap S'$ then $T \subseteq S$. Since $S$ is nice, we have for all $f \in F_T$, $vf = \begin{cases} f & v \notin \sg^{(1)} \\ 0 & v \in \sg^{(1)} \end{cases}$. 
\end{enumerate}
We now define $\alpha(v)$ as the intersection of all nice subsets that $v$ has. By our previous observation, $\alpha(v)$ itself is nice with respect to $v$ and thus satisfies the first two properties of Lemma~\ref{lem:images_of_pure_operators}. 

To complete the proof, it suffices to show that $\alpha(v)$ is well defined for all $v \in V$, and that it also satisfies Property 3 of Lemma~\ref{lem:images_of_pure_operators}. To this end, let $V' \subseteq V$ denote the set of all operators in $V$ that have at least one nice subset. We claim that $V'$ is closed under derivations. To see this, let $v, w \in V'$. 

For any $T \subseteq [d]$, observe that because $\alpha(v), \alpha(w)$ are nice w.r.t. $v, w$, $$(vw)(F_T) \subseteq v(F_{\alpha(w) \cap T}) \subseteq F_{\alpha(v) \cap \alpha(w) \cap T},$$ $$(wv)(F_T) \subseteq w(F_{\alpha(v) \cap T}) \subseteq F_{\alpha(w) \cap \alpha(v) \cap T}.$$ Since $F_{\alpha(v) \cap \alpha(w) \cap T}$ is a vector space, it follows that $[v, w](F_T) \subseteq F_{\alpha(v) \cap \alpha(w) \cap T}$ thus showing that $\alpha(v) \cap \alpha(w)$ satisfies Property~1 of being a nice subset.

Next, let $T \subseteq \alpha(v) \cap \alpha(w)$. Let $\lambda_v = 1\left(v \notin \sg^{(1)}\right)$ and $\lambda_w = 1\left(w \notin \sg^{(1)}\right)$. Applying Property~2 of nice subsets to $\alpha(v), \alpha(w)$, we have that for any $f \in F_T$, $$[v, w]f = (vw - wv)f = v \lambda_wf - w\lambda_v f = \lambda_v\lambda_wf - \lambda_w\lambda_v f = 0.$$ Thus, $\alpha(v) \cap \alpha(w)$ is nice with respect to $[v, w]$. Moreover, by the definition of $\alpha$, this implies that $\alpha([v, w]) \subseteq \alpha(v) \cap \alpha(w)$. 

Having verified all three properties, all that is left to show is that $V' = V$. To do so, observe that $V'$ contains $v_S$ for all $S \subseteq [d]$ as $S$ is clearly a nice subset with respect to $v_S$ (Lemma~\ref{lem:value_operators_awesome}). Thus $V'$ is a set closed under derivations that contains $\{v_S: S \subseteq [d]\}$. The definition of $V$ implies that $V \subseteq V'$, and this implies equality as desired. 
\end{proof}

\section{Definitions and Theorems about Lie Algebras}\label{sec:lie_algebra_appendix_stuff}

\begin{appdfn}[Solvable Lie Algebra]\label{appdefn:solvable_lie_algebra}
For any Lie algebra, $\g$, define the following:
\begin{enumerate}[noitemsep]
	\item Its derivation $[\g, \g]$ is the Lie sub-algebra generated by $\{[v_1, v_2]: v_1, v_2 \in \g\}$.
	\item Its derived series is the sequence $\g^{(i)} = [\g^{(i-1)}, \g^{(i-1)}]$ with $\g^{(0)} = \sg$.
\end{enumerate}
Finally, $\g$ is solvable if there exists $n$ such that $\sg^{(n)} = 0$. 
\end{appdfn}

\begin{appdfn}[Representation of Lie Algebra]\label{defn:representation}
Let $\g$ be a Lie algebra and let $GL(W)$ denote the general linear algebra over some vector space $W$. A representation of $\g$ is a pair $(\rho,W)$ where $W$ is a vector space, and $\rho: \g \to GL(W)$ maps each element of $\g$ to a linear transformation over $W$ such that for all $v, w \in \g$,
\begin{enumerate}[noitemsep]
	\item $\rho(av + bw) = a\rho(v) + b\rho(w)$ for all $a, b \in \C$.
	\item $\rho([v, w]) = \rho(v)\rho(w) - \rho(w)\rho(v)$. 
\end{enumerate}
$(\rho, W)$ is said to be finite dimensional if $W$ is.
\end{appdfn}

\begin{apptheorem}[Lie's Theorem]\label{theorem:LIE_THEOREM}
Let $\mathfrak{g}$ be a solvable Lie algebra, and $(\rho, W)$ be a finite dimensional representation. Then there exists a basis of $W$ under which $\rho(g)$ is upper triangular for all $g \in \mathfrak{g}$.
\end{apptheorem}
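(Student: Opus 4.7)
The plan is to prove Lie's Theorem by the classical route: reduce to the existence of a simultaneous eigenvector for $\rho(\mathfrak{g})$, then induct on $\dim W$. Concretely, suppose we have established the following lemma (Lie's lemma): \emph{if $\mathfrak{g}$ is solvable and $W \neq 0$ is a finite-dimensional representation of $\mathfrak{g}$ over $\mathbb{C}$, then there is a nonzero $v \in W$ and a linear functional $\lambda : \mathfrak{g} \to \mathbb{C}$ such that $\rho(g)v = \lambda(g) v$ for every $g \in \mathfrak{g}$.} Granting this, the theorem follows by induction on $\dim W$. The base case $\dim W = 1$ is trivial. For the inductive step, pick such a common eigenvector $v$, let $W' = W/\mathbb{C}v$, and observe that $\rho$ descends to a representation $\rho'$ of $\mathfrak{g}$ on $W'$; since $\mathfrak{g}$ is still solvable and $\dim W' < \dim W$, by induction there is a basis of $W'$ in which every $\rho'(g)$ is upper triangular. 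Lift this basis to representatives in $W$ and prepend $v$; in the resulting basis of $W$, every $\rho(g)$ is upper triangular because $\mathbb{C}v$ is $\mathfrak{g}$-invariant and the action on the quotient is already triangularized.

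The real work is Lie's lemma, which I would prove by induction on $\dim \mathfrak{g}$. When $\mathfrak{g} = 0$ there is nothing to show. When $\mathfrak{g} \neq 0$, solvability forces $[\mathfrak{g},\mathfrak{g}] \subsetneq \mathfrak{g}$, so I can pick any codimension-one subspace $\mathfrak{h}$ with $[\mathfrak{g},\mathfrak{g}] \subseteq \mathfrak{h}$; such an $\mathfrak{h}$ is automatically an ideal (since $[\mathfrak{g},\mathfrak{h}] \subseteq [\mathfrak{g},\mathfrak{g}] \subseteq \mathfrak{h}$), and it is solvable as a subalgebra of a solvable algebra. By the inductive hypothesis, $\mathfrak{h}$ has a common eigenvector $v_0 \in W$ with eigenvalue $\lambda : \mathfrak{h} \to \mathbb{C}$. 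Define the weight space
\[
W_\lambda = \{\, w \in W : \rho(h) w = \lambda(h) w \text{ for all } h \in \mathfrak{h}\,\},
\]
which is nonzero. Choose any $x \in \mathfrak{g} \setminus \mathfrak{h}$, so $\mathfrak{g} = \mathfrak{h} \oplus \mathbb{C} x$. If I can show that $\rho(x) W_\lambda \subseteq W_\lambda$, then since $\mathbb{C}$ is algebraically closed $\rho(x)|_{W_\lambda}$ has an eigenvector $v \in W_\lambda$, and by construction $v$ is a common eigenvector for all of $\mathfrak{g}$, extending $\lambda$ by the chosen eigenvalue of $\rho(x)$.

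The hard part, and the only nontrivial calculation, is exactly showing $\rho(x) W_\lambda \subseteq W_\lambda$. The standard trick is as follows: fix $w \in W_\lambda$ and consider the chain $w_0 = w$, $w_k = \rho(x)^k w$. Let $n$ be the largest index such that $w_0, \ldots, w_n$ are linearly independent and let $U = \mathrm{span}(w_0, \ldots, w_n)$; this $U$ is $\rho(x)$-invariant by construction. A short induction on $k$ using $\rho(h)\rho(x) = \rho(x)\rho(h) + \rho([h,x])$ shows that for every $h \in \mathfrak{h}$, the matrix of $\rho(h)|_U$ in the basis $(w_k)$ is upper triangular with $\lambda(h)$ on the diagonal. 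Hence $\mathrm{tr}(\rho(h)|_U) = (n+1)\lambda(h)$. Applying this to $h = [x,h_0]$ for any $h_0 \in \mathfrak{h}$ gives $(n+1)\lambda([x,h_0]) = \mathrm{tr}(\rho([x,h_0])|_U) = 0$, since $U$ is stable under $\rho(x)$ and $\rho(h_0)$ and the trace of a commutator of endomorphisms of a finite-dimensional space vanishes. In characteristic $0$ this forces $\lambda([x,h_0]) = 0$, and then $\rho(h)\rho(x) w = \rho(x)\rho(h) w + \rho([h,x]) w = \lambda(h)\rho(x) w$, which is exactly the $\mathfrak{h}$-invariance needed. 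This trace-vanishing step is the only place characteristic $0$ enters and is the main obstacle; everything else is clean bookkeeping.
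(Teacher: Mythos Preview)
The paper does not prove Lie's Theorem at all: it is stated in the appendix (Section~\ref{sec:lie_algebra_appendix_stuff}) as a classical result and then invoked as a black box in the proofs of Lemma~\ref{appxlemma:eigenvalues_of_A_i} and Property~3 of Lemma~\ref{lemma:key_properties}. So there is no ``paper's own proof'' to compare against.

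That said, your argument is the standard textbook proof (as in Humphreys or Fulton--Harris) and is correct. The reduction to Lie's lemma via induction on $\dim W$ and passage to the quotient $W/\mathbb{C}v$ is clean; the proof of Lie's lemma by taking a codimension-one ideal $\mathfrak{h} \supseteq [\mathfrak{g},\mathfrak{g}]$, the inductive construction of the weight space $W_\lambda$, and the trace argument to show $\lambda$ vanishes on $[x,\mathfrak{h}]$ are all carried out properly. The one step worth a second look is the claim that $U = \mathrm{span}(w_0,\dots,w_n)$ is $\rho(h_0)$-invariant, which you need in order to compute $\mathrm{tr}(\rho([x,h_0])|_U)$ as the trace of a commutator of endomorphisms of $U$; but this is exactly what your upper-triangularity induction on $k$ establishes, so it is fine. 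Since the paper works over $\mathbb{C}$ throughout, the characteristic-zero hypothesis you flag is automatically in force.
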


\section{Constructing a Finite Dimensional Representation of \texorpdfstring{$\sg$}.}\label{app:finite_dimensional_stuff}

\begin{appdfn}[Extended Space of Value Operators]\label{defn:big_algebra}
We let $V^*$ denote the space of all operators that can be obtained through linear combinations and compositions of value operators. That is, $$V^* = span\left\{v_1v_2 \dots v_m: m \in \N, v_1, \dots, v_m \in \{v_S: S \subseteq [d]\} \right\}.$$
\end{appdfn}

\begin{appdfn}[Localized Subspace]\label{app:defn:localized_subspace}
For $f \in F$, define its localized subspace, $$F_f = V^*f = \{vf: v \in V^*\},$$ as the set of all functions that can be obtained from $f$ by applying an operator in $V^*$ to it. 
\end{appdfn}

We will use $F_f$ to construct a \textit{finite dimensional representation} of the Shapley Lie algebra. The definition of a finite dimensional representation can be found in Definition \ref{defn:representation} of Section \ref{sec:lie_algebra_appendix_stuff}.

\begin{appxlemma}[Local Representation of Shapley Lie Algebra]\label{lem:definition_local_representation}
Let $f \in F$ be any function. For all $v \in \sg$, let $\rho_f(v)$ be defined as the restriction of $v$ to $F_f$. Then $f \in F_f$, and $(\rho_f, F_f)$ is a well-defined finite dimensional representation of the Shapley Lie algebra $\sg$. 
\end{appxlemma}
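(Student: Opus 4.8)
The plan is to verify the three things the lemma asserts: that $f \in F_f$, that $\rho_f$ is well-defined (i.e., each $v \in \sg$ genuinely restricts to an operator $F_f \to F_f$), that $F_f$ is finite dimensional, and that $\rho_f$ is a Lie algebra homomorphism in the sense of Definition~\ref{defn:representation}. First I would observe that $f \in F_f$ because the identity operator lies in $V^*$ (it is $v_{[d]}$, since $v_{[d]}$ is the identity on $F$), so $f = v_{[d]}f \in V^*f = F_f$. Next, well-definedness: $V^*$ is by construction a span of compositions of value operators and is therefore closed under left-multiplication by any value operator $v_S$, hence closed under left-multiplication by any element of $\sg$ (as every element of $\sg$ is a linear combination of iterated derivations of value operators, and derivations $[v,w] = vw - wv$ keep us inside the associative algebra generated by the $v_S$, which is exactly $V^*$). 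Consequently, for $v \in \sg$ and $w f \in F_f$ with $w \in V^*$, we have $v(wf) = (vw)f \in V^*f = F_f$, so $v$ maps $F_f$ into itself and $\rho_f(v)$ is a genuine linear operator on $F_f$.

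The homomorphism properties (1) and (2) of Definition~\ref{defn:representation} are then immediate: linearity of $v \mapsto \rho_f(v)$ is inherited from the fact that restriction of operators is linear, and $\rho_f([v,w]) = \rho_f(vw - wv) = \rho_f(v)\rho_f(w) - \rho_f(w)\rho_f(v)$ because composition of the restricted operators agrees with the restriction of the composition (both sides send $h \in F_f$ to $(vw - wv)h$). So the only substantive point — and the main obstacle — is showing that $F_f = V^*f$ is \emph{finite dimensional}.

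To handle finite-dimensionality, the key idea is that $V^*$ itself is a finite-dimensional space of operators \emph{when its action is restricted to determined-function considerations}, because the value operators satisfy strong collapsing relations. Concretely, Lemma~\ref{lem:value_operators_awesome} gives $v_S v_S = v_S$, and more generally one expects a relation like $v_S v_T = v_{S \cap T}$ (or at least that $v_S v_T$ agrees with $v_{S\cap T}$ on all of $F$): applying $v_S$ then $v_T$ samples the coordinates outside $S$, then resamples those outside $T$, and by independence of the coordinates under $\mu^*$ (or by the marginal structure of $\mu$) the net effect only pins down the coordinates in $S \cap T$. Granting such a composition rule, every product $v_{S_1}v_{S_2}\cdots v_{S_m}$ collapses to a single $v_{S_1 \cap \cdots \cap S_m}$, so $V^* = \mathrm{span}\{v_S : S \subseteq [d]\}$, which has dimension at most $2^d$. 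Therefore $F_f = V^*f$ is spanned by the at most $2^d$ functions $\{v_S f : S \subseteq [d]\}$ and is finite dimensional, with $\dim F_f \le 2^d$. I would prove the composition rule $v_S v_T = v_{S \cap T}$ directly from Definition~\ref{defn:value_function} using Fubini and the product structure guaranteeing that the "fresh" marginal draws in the two successive value operators are independent — this is the one genuine computation, but it is short. With that in hand, all four claims of Lemma~\ref{lem:definition_local_representation} follow, completing the proof.
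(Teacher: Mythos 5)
Your overall structure matches the paper's proof: you correctly observe $f = v_{[d]}f \in F_f$, correctly argue well-definedness (the paper phrases it via the sub-Lie-algebra $\sg' = \{v \in \sg : V^*f \text{ is } v\text{-invariant}\}$, you argue more directly via $\sg \subseteq V^*$ and closure of $V^*$ under left multiplication --- both are fine), and the homomorphism properties of $\rho_f$ are, as you say, automatic. The one substantive step is finite-dimensionality, and that is where your argument has a genuine gap.

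The proposed composition rule $v_S v_T = v_{S \cap T}$ is \emph{false} for a general distribution $\mu$. Expanding, $(v_S v_T f)(x) = \Ev_{X\sim\mu}\Ev_{Y\sim\mu}\bigl[f\bigl(x_{S\cap T},\, X_{T\setminus S},\, Y_{T^c}\bigr)\bigr]$, whereas $(v_{S\cap T} f)(x) = \Ev_{Z\sim\mu}\bigl[f\bigl(x_{S\cap T},\, Z_{(S\cap T)^c}\bigr)\bigr]$. In the first expression $X_{T\setminus S}$ and $Y_{T^c}$ are independent because they come from independent samples, so their joint law is the \emph{product} of the two marginals; in the second, $Z_{(S\cap T)^c}$ has the genuine $\mu$-marginal on $(S\cap T)^c = (T\setminus S) \sqcup T^c$, which is not a product unless $\mu$ has independent coordinates. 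A concrete counterexample: $d=3$, $\mu$ uniform on $\{(0,0,0),(1,1,1)\}$, $S=\{1,2\}$, $T=\{1,3\}$. Then $v_S v_T f$ samples $x_2, x_3$ independently-uniformly while $v_{\{1\}} f$ samples them perfectly correlated. Since the lemma is invoked for the value operators built from $\mu$ (e.g.\ inside the proofs of Lemma~\ref{appxlemma:eigenvalues_of_A_i} and Theorem~\ref{main_theorem}), you cannot assume a product structure here. Your rule does hold when $\mu = \mu^*$, so the argument would go through for the $v_S^*$ operators of Theorem~\ref{theorem:robust_distribution_bound}, but not in general.

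The paper avoids this by proving a strictly weaker collapsing statement that is true for arbitrary $\mu$: if $v_{S_1}$ reappears among $v_{S_2},\dots,v_{S_m}$, then $v_{S_1}(v_{S_2}\cdots v_{S_m}f) = v_{S_2}\cdots v_{S_m}f$. The reason is not that the product equals a single value operator, but that Property~1 of Lemma~\ref{lem:value_operators_awesome} puts $v_{S_2}\cdots v_{S_m}f$ in $F_{S_2\cap\cdots\cap S_m}$, and since $S_2\cap\cdots\cap S_m \subseteq S_1$, Property~2 says $v_{S_1}$ acts as the identity there. Repeatedly deleting the leftmost duplicate reduces every word to length at most $2^d$, so $V^*f$ is spanned by finitely many elements and $F_f$ is finite dimensional. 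You should replace your claimed identity $v_S v_T = v_{S\cap T}$ with this duplicate-removal argument; the rest of your proof then stands.
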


\begin{proof}[Lemma \ref{lem:definition_local_representation}]
The fact that $f \in F_f$ is immediate: $v_{[d]}$ is precisely the identity operator and thus $V^*f$ contains $v_{[d]}f = f$. 

To prove $(\rho_f, F_f)$ is a well defined representation, it suffices to show that $V^*f$ is a $v$-invariant subspace of $F$ (meaning $v(V^*f) \subseteq V^*f$) for all $v \in \sg$ (the rest of the properties follow from basic properties of linear transformations). Let $\sg'$ denote the set of all $v \in \sg$ such that $V^*f$ is $v$-invariant. $\sg'$ clearly contains all value operators $v_S$ as $v_S(v_1 \dots v_m f)$ is itself a composition of value operators applied to $f$. $\sg'$ is also closed under linear combinations (because $V^*$ is a span of all compositions) and derivations (because it is closed under matrix multiplication). Thus, $\sg'$ forms a Lie algebra that contains all value operators, and thus $\sg'$ must contain $\sg$ (by the minimality of $\sg$). This proves $(\rho_f, F_f)$ is well-defined.

Next, we show $F_f$ is a finite dimensional vector space. The key idea is to show that if the element $v_1v_2 \dots v_m f$ satisfies $v_1 \in \{v_2, \dots, v_m\}$, then $v_1 \dots v_m f = v_2 \dots v_m f$. In short, $v_1$ has no effect after it has already been applied in a product. 

To prove this, let $v_i = v_{S_i}$ for $S_i \subseteq [d]$ and $v_1 \in \{v_2, \dots, v_m \}$. Then Property 1 of Lemma~\ref{lem:value_operators_awesome} implies $$v_2 \dots v_m f = v_{S_2} \dots v_{S_m} f \in F_{S_2 \cap \dots \cap S_m}.$$ Applying Property 2 of Lemma~\ref{lem:value_operators_awesome} with $S_2 \cap \dots \cap S_m \subseteq S_1$ implies that $v_{S_1}$ acts as the identity map, meaning that $$v_1\left(v_2 \dots v_mf\right) = v_{S_1} \left(v_2 \dots v_m f\right) = v_2 \dots v_mf.$$ Applying this claim repeatedly, it follows that any product $v_1 \dots v_m f$ can be reduced to one in which $m \leq 2^d$ as there are at most $2^d$ distinct subsets of $[d]$. It follows that  
\begin{equation*}
\begin{split}
V^* &= span\left\{v_1v_2 \dots v_m: m \in \N, v_1, \dots, v_m \in \{v_S: S \subseteq [d]\} \right\} \\
&= span\left\{v_1v_2 \dots v_m: m \in \N, v_1, \dots, v_m \in \{v_S: S \subseteq [d]\}, m \leq 2^d \right\}.
\end{split}
\end{equation*}
Because the latter space is a span of a finite number of elements, it follows that $V^*f$ itself is finite which completes the proof. 
\end{proof}

\begin{appxlemma}[Localized Subspace if preserved by $A_i$ and $B_i$]\label{lemma:localized_subspace}
Let $F_f$ be the localized subspace of~$f$. Then $f \in F_f$ and $A_i(F_f), B_i(F_f) \subseteq F_f$, where $A_i, B_i$ are as defined in Definition \ref{defn:shapley_operator}. 
\end{appxlemma}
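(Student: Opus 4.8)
The plan is to observe that this statement is essentially immediate once one unwinds the definition of the localized subspace $F_f = V^*f$ (Definition \ref{app:defn:localized_subspace}) together with the structure of $V^*$ (Definition \ref{defn:big_algebra}); the genuinely substantive content, namely that $F_f$ is \emph{finite dimensional}, has already been established inside the proof of Lemma \ref{lem:definition_local_representation}, so here we only need the much softer fact that $V^*$ is an operator algebra containing all value operators.

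First I would dispatch the claim $f \in F_f$: since $v_{[d]}$ is the identity operator (intervening on all $d$ coordinates leaves $f$ unchanged), we have $f = v_{[d]}f \in V^*f = F_f$. For the invariance of $F_f$ under $A_i$ and $B_i$, the key point is that $V^*$ is closed under left-multiplication by value operators: if $w = v_{S_1}v_{S_2}\cdots v_{S_m}$ is one of the generating compositions and $S \subseteq [d]$, then $v_S w = v_S v_{S_1}\cdots v_{S_m}$ is again such a composition, and since $V^*$ is by definition the \emph{span} of all these compositions, extending by linearity gives $v_S V^* \subseteq V^*$ for every $S$. Now $A_i$ and $B_i$ (Definition \ref{defn:shapley_operator}) are linear combinations of the value operators $\{v_{S\cup\{i\}}: S \subseteq [d]\setminus\{i\}\}$ and $\{v_S: S\subseteq[d]\setminus\{i\}\}$ respectively, so $A_iV^* \subseteq V^*$ and $B_iV^* \subseteq V^*$. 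Applying these operators to $f$ then yields $A_i(F_f) = A_i(V^*f) \subseteq V^*f = F_f$ and likewise $B_i(F_f) \subseteq F_f$, which is the claim.

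Equivalently, one can simply note that $A_i, B_i \in \sg$ (both being linear combinations of value operators, the generators of the Shapley Lie algebra) and invoke the $v$-invariance of $F_f$ for every $v \in \sg$ that is already shown in the proof of Lemma \ref{lem:definition_local_representation}. I do not anticipate any real obstacle: the only thing one must be slightly careful about is that $F_f$ is a vector space and that $V^*$ is stable under the relevant left-multiplications, and both are built into the definition of $V^*$ as a span of compositions.
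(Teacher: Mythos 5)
Your proof is correct and matches the paper's: the paper's official proof cites Lemma \ref{lem:definition_local_representation} (which shows $F_f$ is $v$-invariant for all $v \in \sg$) together with $A_i, B_i \in \sg$, which is exactly your "equivalently" paragraph. Your primary version, checking directly that $V^*$ is closed under left-multiplication by value operators so $A_i V^* f \subseteq V^* f$, is just an inlined and slightly more elementary restatement of the same invariance argument that underlies Lemma \ref{lem:definition_local_representation}.
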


\begin{proof}[Lemma \ref{lemma:localized_subspace}]
This immediately follows from Lemma \ref{lem:definition_local_representation} along with the fact that $A_i$ and $B_i$ are both in $\sg$ seeing as they are linear combinations of value operators. 
\end{proof}

%\section{An Introduction to Lie Theory}\label{app:Lie_Algebra_Basics}
%
%
%ULRIKE: I don't think we need this ... or just very quickly
%
%\section{KernelSHAP}\label{app:kernel_shap}
%
%\subsection{KernelSHAP Proof}\label{app:kernel_shap_proof}

\section{Finding Counterexamples with a Linear Program}\label{app:lin_prog_counterexample}

The key observation that makes it possible to use a Linear Program to construct counterexamples such as the one in Figure~\ref{fig:counterexample} is that the value function and therefore also the SHAP values themselves are linear in the function values.
This holds for both, the observational and interventional SHAP value function.\\
In order to exploit this linear nature of the SHAP values, we consider a piecewise constant function $f: \R^2 \to \R$, that only takes finitely many values. 
We achieve this by defining $f$ on a two dimensional $d_1 \times d_2$-grid, where $d_1$ and $d_2$ define the size of the grid for Features $1$ and $2$, respectively. Then we can represent $f$ as a $(d_1 \cdot d_2)$-dimensional vector and the SHAP values for Feature $1$ can be computed via matrix multiplication $\Phi_1 f$ where $\Phi_1$ is a $d_1\cdot d_2 \times d_1 \cdot d_2$-dimensional matrix.\\
Now as a constraint for the Linear Program we can simply set $\Phi_1 f  = 0$ which would force all the SHAP values on the extended support to be zero. 
If we only want to restrict the SHAP values inside the support to be zero, we can simply set the rows in $\Phi_1$ to zero, if they correspond to a grid cell, that lies outside the support. \\
Finally the objective of the Linear Program will be to find a function, that does depend on Feature $1$. This can be achieved by choosing two entries in the vector $f$ that correspond to two input points with the same $x_2$-value but different $x_1$-values. If they differ, the function depends on Feature $1$. Therefore maximizing their difference gives the desired results. However, many different approaches are possible here.\\
Figure~\ref{fig:counterexample} above shows one of these counterexamples that can be found using this Linear Program. Further counterexamples for smaller grids are displayed in Figure~\ref{fig:counterexample_3grid} and Figure~\ref{fig:counterexample_4grid}, where we set $d_1=d_2=3$ and $d_1=d_2=4$, respectively.\\
If we strengthen the constraint on the SHAP values to be zero on the full extended support, as described above, the Linear Program is not able to find a counterexample, which we would expect considering our theoretical results above. Figure~\ref{fig:positive_example_grid} displays one final example where the SHAP values of Feature $1$ are zero on the whole extended support and the function indeed does not depend on Feature $1$.

\begin{figure}[H]
    \centering
    \includegraphics[width=\textwidth]{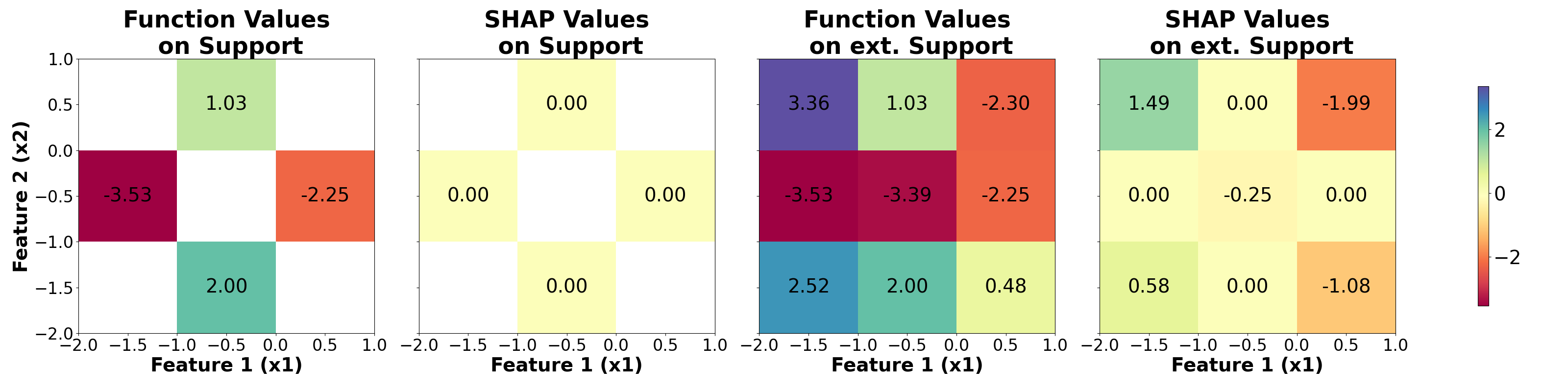}  
    \caption{
    \textbf{Example of a function where the aggregate SHAP value of Feature $1$ is $0$, yet the function depends on this feature on a $3 \times 3$-grid.} \textbf{(a):}  Function $f:\R^2 \to \R$, supported on only $4$ of the grid cells with the color depicting the function value. The function clearly depends on both Features $1$ and $2$. \textbf{(b):} Point-wise SHAP values $\phi_1(\mu, f,x)$ of Feature $1$ are constantly $0$ on the support. \textbf{(c) and (d):} Function and SHAP values on the extended support. Here the SHAP values are not constantly $0$ any more. 
    }
    \label{fig:counterexample_3grid} 
\end{figure}

\begin{figure}[H]
    \centering
    \includegraphics[width=\textwidth]{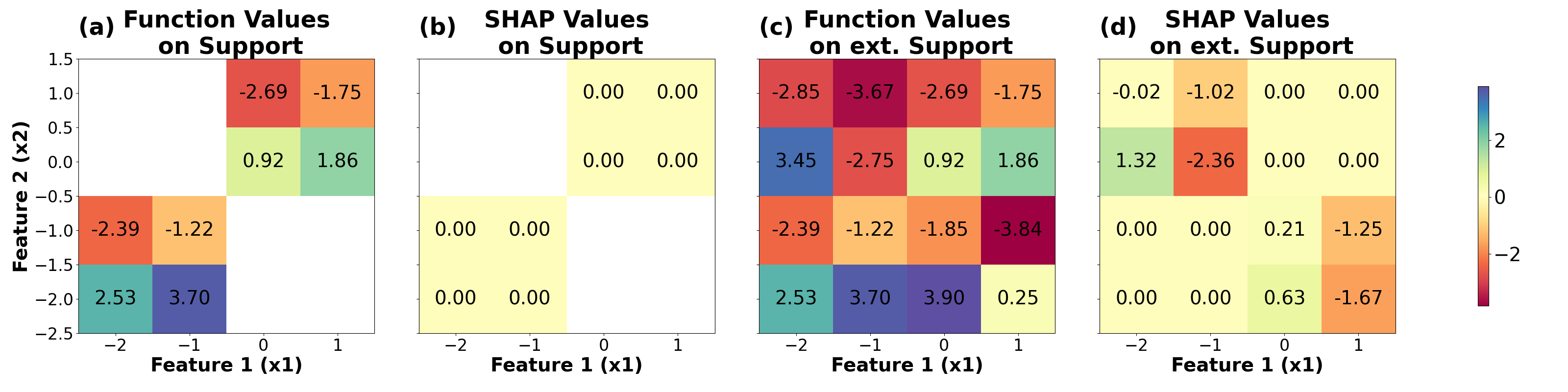}  
    \caption{
    \textbf{Example of a function where the aggregate SHAP value of Feature $1$ is $0$, yet the function depends on this feature on a $4 \times 4$-grid.} \textbf{(a):} Function $f:\R^2 \to \R$, supported on only $8$ of the grid cells with the color depicting the function value. The function clearly depends on both Features $1$ and $2$. \textbf{(b):} Point-wise SHAP values $\phi_1(\mu, f,x)$ of Feature $1$ are constantly $0$ on the support. \textbf{(c) and (d):} Function and SHAP values on the extended support. Here the SHAP values are not constantly $0$ any more. 
    }
    \label{fig:counterexample_4grid} 
\end{figure}

\begin{figure}[H]
    \centering
    \includegraphics[width=\textwidth]{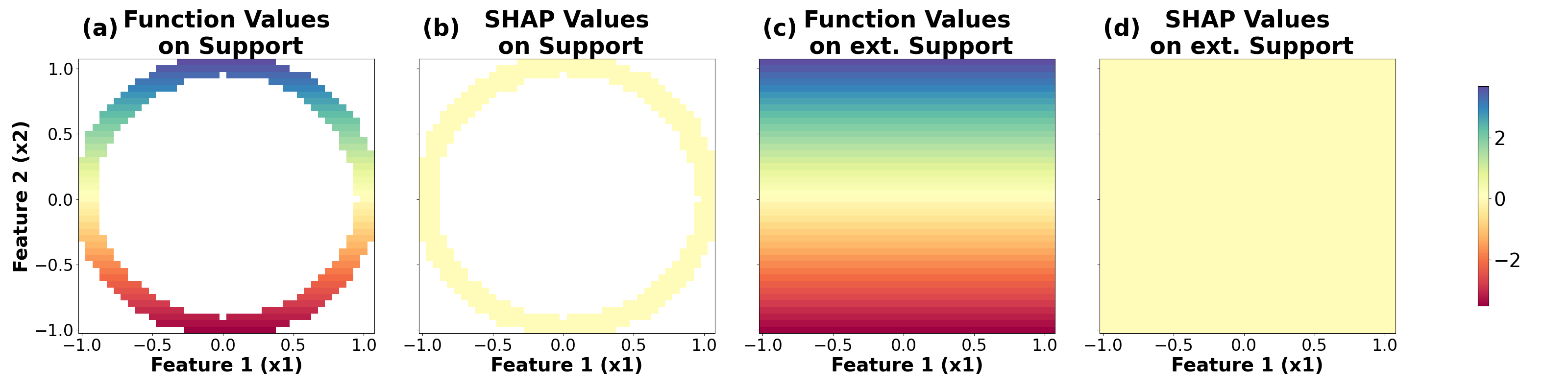}  
    \caption{
    \textbf{Example of a function where the aggregate SHAP value of Feature $1$ is $0$ on the whole extended support and the function does not depend on this feature.} \textbf{(a):} Function $f:\R^2 \to \R$, supported on a ring with the color depicting the function value. The function solely depends on Feature $2$. \textbf{(b):} Point-wise SHAP values $\phi_1(\mu, f,x)$ of Feature $1$ are constantly $0$ on the support. \textbf{(c) and (d):} Function and SHAP values on the extended support. Here the SHAP values are constantly $0$ on the extended support as well. 
    }
    \label{fig:positive_example_grid} 
\end{figure}

% \begin{figure}[htbp]
%     \centering
%     \includegraphics[width=\textwidth]{plots/experiments_housing.png}  
%     \caption{}
%     \label{fig:experiments_housing} 
% \end{figure}

% \begin{figure}[htbp]
%     \centering
%     \includegraphics[width=\textwidth]{plots/experiments_wine.png}  
%     \caption{}
%     \label{fig:experiments_wine} 
% \end{figure}

\section{Use cases of mean absolute SHAP in literature}\label{app:use_cases}
This section gives an overview over different use cases of the mean absolute SHAP value in science literature. Table~\ref{table:Applications_Science} holds an incomplete list of examples from recent years. In most applications the focus lies on the top features and while the possibility of doing feature selection based on the mean absolute SHAP value is often mentioned, e.g., by \citet{Appl_MAS:SharmaTimilsina2024}, scientists are careful in actually applying it. It is merely used to select features for further analysis and interpretation.

\setlength\LTleft{-1.5cm}
\begin{longtable}{|l|p{2.5cm}|p{10cm}|}
    \caption{\textbf{Collection of some exemplary use cases of the mean absolute SHAP value in literature.}}\label{table:Applications_Science} \\
    
    \hline \multicolumn{1}{|c|}{\textbf{Reference}} & \multicolumn{1}{c|}{\textbf{Scientific field}} & \multicolumn{1}{c|}{\textbf{Use of mean abs. SHAP}} \\ \hline 
    \endfirsthead
    
    \multicolumn{3}{c}%
    {{\bfseries \tablename\ \thetable{} -- continued from previous page}} \\
    \hline \multicolumn{1}{|c|}{\textbf{Reference}} & \multicolumn{1}{c|}{\textbf{Scientific field}} & \multicolumn{1}{c|}{\textbf{Use of mean abs. SHAP}} \\ \hline 
    \endhead
    
    \hline
    \endfoot
    
    \hline \hline
    \endlastfoot

    % Table content (many rows)
    \citet{Appl_MAS:Greenwood2024}& Environmental Science & They investigate the influence of environmental and socioeconomic factors on the use of safely managed drinking water services. The features are grouped and the mean absolute SHAP value is calculated for each of the $5$ groups.\\ 
    \hline
    \citet{Appl_MAS:SharmaTimilsina2024}& Physical\linebreak Science &
    They use ML to predict the heating value of different types of waste and compute the mean absolute SHAP value to analyze the influence of the $8$ features with a focus on the most important features. \\ 
    \hline
    \citet{Appl_MAS:Delavaux2023}& Environmental Science &
    They want to identify drivers of non-native plant invasions in native ecosystems. Mean absolute SHAP values are interpreted as feature importance.\\ 
    \hline
    \citet{Appl_MAS:Bernard2023}& Medical \linebreak Science &
    They predict the physiological age based on biological values routinely assessed for diagnosis and treatment-monitoring.
    They use mean absolute SHAP values to identify the top-$20$ out of $48$ variables.\\ 
    \hline
    \citet{Appl_MAS:Ekanayake2022} & Physical \linebreak Science&
    They predict the compressive strength of concrete depending on its constituents and use the mean absolute SHAP values for interpretation of the $8$ features. They focus on both, top and bottom features.\\
    \hline
    \citet{Appl_MAS:Wang2022}& Environmental Science &
    They want to understand pollutant removal in wastewater treatment plants and use the mean absolute SHAP value to choose the top-$4$ out of $32$ features and take a deeper look into their influence.\\ 
    \hline
    \citet{Appl_MAS:Rane2022}& Medical \linebreak Science &
    They analyze the IMAGEN data set to predict, based on brain images, whether a person is going to misuse alcohol. 
    The features are considered most significant if they have mean abs SHAP value at least two times higher than the average mean abs SHAP value across all features.\\ 
    \hline
    \citet{Appl_MAS:Qiu2022}& Medical \linebreak Science &
    They develop a deep learning framework to classify different causes for dementia and differentiate them from Alzheimer's disease. The mean absolute SHAP values are used for interpretation with a focus on the top-$15$ features.\\ 
    \hline
    \citet{Appl_MAS:Chen2022}& Physical \linebreak Science &
    They classify proteins into self-assembling and partner-dependent proteins. The mean absolute SHAP values are used to interpret the influence of the features. All features are considered for the analysis with a focus on the top features.\\ 
    \hline
    \citet{Appl_MAS:Yang2022}& Physical \linebreak Science& 
    They design a machine learning implementation for the discovery
    of innovative polymers with ideal performance. The top-12 important molecular descriptors are identified using aggregate SHAP values.\\ 
\end{longtable}

\end{document}